\newtheorem{thm}{Theorem} 
\newtheorem{lem}{Lemma}
\newtheorem{prop}{Proposition}
\newtheorem{defn}{Definition}
\newtheorem{exmp}{Example}
\newtheorem{rem}{Remark}
\newtheorem{assume}{Assumption}
\definecolor{bg_yellow}{HTML}{FCFCE3}
\definecolor{title_yellow}{HTML}{F4EEAC}
\newtcolorbox{road2}{
  colback=bg_yellow,
  enhanced,
  title=,
  attach boxed title to top left,
  fonttitle=\bfseries,
  coltitle=black,
  toprule=0pt,
  bottomrule=0pt,
  rightrule=0pt,
  leftrule=3pt,
  arc=0mm,
  skin=enhancedlast jigsaw,
  sharp corners,
  colframe=bg_yellow,
  colbacktitle=title_yellow,
  boxed title style={
    frame code={
      \fill[title_yellow] 
        (frame.south west) -- 
        (frame.north west) -- 
        (frame.north east) -- 
        ([xshift=3mm]frame.east) -- 
        (frame.south east) -- 
        cycle;

      \draw[line width=1mm, title_yellow] 
        ([xshift=2mm]frame.north east) -- 
        ([xshift=5mm]frame.east) -- 
        ([xshift=2mm]frame.south east);

      \draw[line width=1mm, title_yellow] 
        ([xshift=5mm]frame.north east) -- 
        ([xshift=8mm]frame.east) -- 
        ([xshift=5mm]frame.south east);

    }
  }
}
\theoremstyle{plain}
\theoremstyle{definition}
\theoremstyle{remark}
\icmltitlerunning{Neural Operator for Stochastic (Partial) Differential
Equations}
\begin{document}

\twocolumn[
  \icmltitle{Expanding the Chaos: Neural Operator for Stochastic (Partial) Differential Equations}



  \icmlsetsymbol{equal}{*}

 
  \begin{icmlauthorlist}
    \icmlauthor{Dai Shi}{equal,cambridge}
    \icmlauthor{Lequan Lin}{equal,USYD}
    \icmlauthor{Andi Han}{equal,USYD}
    \icmlauthor{Luke Thompson}{equal,USYD}
    \icmlauthor{José Miguel Hernández-Lobato}{cambridge}
    \icmlauthor{Zhiyong Wang}{USYD}
    \icmlauthor{Junbin Gao}{USYD}
  \end{icmlauthorlist}

  \icmlaffiliation{cambridge}{University of Cambridge}
  \icmlaffiliation{USYD}{University of Sydney}

  \icmlcorrespondingauthor{Dai Shi}{ds2213@cam.ac.uk}

  \icmlkeywords{Machine Learning, ICML}

  \vskip 0.3in
]



\printAffiliationsAndNotice{}  

\begin{abstract}

Stochastic differential equations (SDEs) and stochastic partial differential equations (SPDEs) are fundamental for modeling stochastic dynamics across the natural sciences and modern machine learning. Learning their solution operators with deep learning models promises fast solvers and new perspectives on classical learning tasks. In this work, 
We build on Wiener–chaos expansions (WCE) to design neural operator (NO) architectures for SDEs and SPDEs: we project driving noise paths onto orthonormal Wick–Hermite features and use NOs to parameterize the resulting chaos coefficients, enabling reconstruction of full trajectories from noise in a single forward pass. We also make the underlying WCE structure explicit for multi-dimensional SDEs and semilinear SPDEs by showing the coupled deterministic ODE/PDE systems governing these coefficients.
Empirically, we achieve competitive accuracy across several tasks, including standard SPDE benchmarks and SDE-based diffusion one-step image sampling, topological graph interpolation, financial extrapolation, parameter estimation, and manifold SDE flood forecasting.
These results suggest WCE-based neural operators are a practical and scalable approach to learning SDE/SPDE solution operators across domains.
\end{abstract}

\section{Introduction}
Stochastic Differential Equations (SDEs) and Stochastic Partial Differential Equations (SPDEs) are fundamental tools for modelling complex dynamical systems across different scientific domains. Examples range from the Ginzburg–Landau equation for superconductivity \citep{temam2012infinite} and the stochastic Navier–Stokes equations for incompressible fluid dynamics in SPDEs \citep{boyer2012mathematical} to the Ornstein–Uhlenbeck (OU) process for modelling the velocity of Brownian particles in statistical mechanics \citep{martin2021statistical} and the Heston model for financial volatility forecasting \citep{de2018machine}.

Recently, these equations have emerged as cornerstones in machine learning. A prominent example is the OU process, whose time-reversal properties now form the theoretical backbone of diffusion generative models, which have achieved remarkable results in image generation \citep{song2020score}, data interpolation \citep{de2021diffusion}, and time series forecasting \citep{lin2024diffusion}. However, despite their broad applicability, developing deep learning tools for approximating solutions to these equations, or a systematic understanding of their complex solution structures, remains in its early stages \citep{neufeld2024solving}.

With the recent success of NOs for learning mappings between function spaces \citep{li2020fourier,kovachki2023neural}, a growing literature has applied NOs to learn the solution operators of SPDEs and SDEs, including Neural SPDE \citep{salvi2022neural}, GINO \citep{li2023geometry}, FNO \citep{li2020fourier}, and DeepONet \citep{lu2019deeponet}. Their key advantage is one-shot evaluation: solutions at any query time can be obtained from a single forward pass, rather than iterative time stepping. However, incorporating stochasticity remains non-trivial \citep{salvi2022neural}. The general strategy is to condition on the stochastic forcing terms by providing the noise path or its features, thereby reducing the problem to learning a deterministic operator.

Building on this principle, we propose NO models for SPDEs and SDEs grounded in the Wiener Chaos Expansion (WCE). By projecting the driving noise onto orthonormal Wick–Hermite features and conditioning on them, our models learn deterministic operators that disentangle stochastic forcing from deterministic dynamics, thereby reconstructing full trajectories. {
We make classical Wiener–chaos representations explicit by expressing the propagator dynamics as coupled deterministic ODE systems for multi-dimensional SDEs and PDE systems for semilinear SPDEs, which we use directly to guide our neural operator design.}
Empirically, we evaluate our models on classical SPDE benchmarks and several SDE-based downstream tasks (diffusion one-step sampling, topological interpolation, financial extrapolation, parameter estimation, and manifold SDEs), achieving competitive accuracy and strong agreement with reference solution operators across all settings.

\paragraph{Our Contributions}
We develop a NO framework for SPDEs and SDEs that is built on classical Wiener–chaos expansions: solutions are represented via chaos coefficients, and the corresponding deterministic propagator functions are parameterized by neural operators. 
{We explicitly formulate classical Wiener–chaos representations by writing out the coupled deterministic ODE/PDE systems for chaos coefficients (Theorem~\ref{thm:sde_chaos_odes} and \ref{thm:spde_propagator}), which can be directly parameterized and learned by neural operators.}
We apply this framework to classical SPDE benchmarks and many SDE-based downstream tasks, and observe competitive accuracy with one-shot evaluation. To the best of our knowledge, this is the first work to systematically deploy NO models for \textit{both SPDE and SDE} solution operators across such a diverse and extensive set of experiments.

\section{Mathematical Preliminaries}\label{sec:preliminary}
\paragraph{SPDE}
We start by introducing the preliminaries for SPDEs.
Given $T > 0$ and a filtered probability space $(\Omega, \mathcal F, \mathbb F,  \mathbb P)$, we consider the following semi-linear SPDE\footnote{An SPDE is linear if both $A, F, B$ are linear, or semi-linear if $A$ is linear but $F, B$ are non linear.}:
\begin{align}\label{eq:spde_initial}
    &dX_t   = (AX_t + F(t, \cdot , X_t))dt + B(t, \cdot, X_t) dW_t, 
\end{align}
with $X_0 = \chi_0 \in \mathcal H$, for some linear operator $A:\mathcal H\to \mathcal H$, operators $F\colon [0,T]\times \Omega \times \mathcal H \to \mathcal H$, and $B \colon [0,T]\times \Omega \times \mathcal H \to L_2(\widetilde{\mathcal H}; \mathcal H)$,  where  $\mathcal H, \widetilde{\mathcal H}$ are separable Hilbert Spaces for $X_t$ and $W_t$ respectively and initial value $\chi_0 \in \mathcal H$ is deterministic.  
The process $W \coloneqq (W_t)_{t\in [0,T]}:[0,T] \times \Omega \to \widetilde{\mathcal H}$ is Q-Brownian that satisfies: (1) $W_0 = 0 \in \widetilde{\mathcal H}$; (2) for $t \in [0, T]$, the function $t \to W_t (\omega)$ is continuous for almost all $\omega \in \Omega$; and (3) for any $t_1,t_2 \in [0,T]$ with $t_2 > t_1$,  $W_{t_2} - W_{t_1}$ is a centered Gaussian random variable with covariance operator $(t_2 -t_1)Q$ that is self-adjoint, non-negative and has finite trace.
Under assumptions on Lipschitz \& linear-growth, Eq.~\eqref{eq:spde_initial} admits a unique mild solution, that is
$X_t = S_t \chi_0 + \int_0^tS_{t-s} F(s,\cdot, X_s)ds + \int_0^t S_{t-s} B(s,\cdot, X_s)dW_s$,
where $S = e^{tA}: \mathcal H \to \mathcal H$ refers to a $C_0$-semigroup generated by operator $A$. In addition, we will denote a \textit{single realization } of the solution process for a fixed random event $\omega \in \Omega$ by $u(t,x)$, where $x \in \mathcal{D}$ is the spatial variable.
\vspace{-6pt}
\paragraph{SDE}
The SDE is a special form of SPDE with $A = 0$ and $\mathcal H$, $\widetilde{\mathcal H}$ reduced to finite-dimensional vector spaces.  
\begin{align}\label{eq:SDE_initial}
    dX_t = F(t, X_t)dt + B (t, X_t) dW_t, 
\end{align}
with initial condition $X_0 = x_0 \in \mathbb R^d$, and $W_t$ is an $m$-dimensional Brownian motion with covariance $Q \in \mathbb R^{m\times m}$ (i.e., $\mathbb E [(W_{t_2}- W_{t_1})(W_{t_2}- W_{t_1})^\top ] = (t_2 - t_1)Q$), and $F(t, \cdot)\colon \mathbb R^d \to \mathbb R^d$, $B(t, \cdot): \mathbb R^d \to \mathbb R^{d\times m}$ are known as the drift and diffusion terms, respectively. Without loss of generality, in this work, we only consider the standard Brownian motion, i.e., $Q = I$, yet our conclusions can be naturally extended to general cases. The strong solution of Eq.~\eqref{eq:SDE_initial} is
$X_t = x_0 + \int_{0}^t F(s, X_s)ds + \int_0^t B(s, X_s)dW_s$. Analogous to SPDE, we denote a single realization of the solution process $X_t$ by the deterministic vector-valued function $u(t)$.

\paragraph{Problem setup.}
We aim to learn data-driven solution operators associated with Eq.~\eqref{eq:spde_initial} and Eq.~\eqref{eq:SDE_initial}. Let $X = (X_t)_{t \in [0,T]}$ denote the mild (for Eq.~\eqref{eq:spde_initial}) or strong (for Eq.~\eqref{eq:SDE_initial}) solution, and write $u(t,x)$ (or $u(t)$ in the SDE case) for a single realization of $X_t$.
The corresponding (sample-wise) solution operator can be written abstractly as
$\mathcal{S} : (\chi_0, W) \mapsto u(\cdot)$, where $\chi_0$ is the (generic) initial condition and $W$ is the driving Brownian motion (or Q–Brownian motion in the SPDE setting). In practice, we are given a finite training set consisting of $N$ independent realizations of the underlying stochastic system, i.e., 
$\mathcal{D} = \{ (\chi_0^{(i)}, W^{(i)}, u^{(i)}) \}_{i=1}^N$, where each $u^{(i)}$ is the numerically simulated solution trajectory on a fixed space–time (or time) grid corresponding to the initial condition $\chi_0^{(i)}$ and the driving noise $W^{(i)}$. Our goal is to learn a parameterized operator, that is 
$\widehat{\mathcal{S}}_\theta : (\chi_0, \eta) \mapsto \hat{u}(\cdot)$,
that approximates $\mathcal{S}$, where $\eta$ denotes a finite–dimensional representation of the driving noise (constructed from $W$ via a Wiener–chaos based projection; see Wick polynomials in Eq.~\eqref{eq:wick_features}).
The model parameters $\theta$ are obtained by minimizing an empirical $L^2$–type discrepancy between $\hat{u}$ and $u$ over $\mathcal{D}$, which in implementation corresponds to a MSE loss over the discrete space–time grid.

\begin{figure*}[t]
    \centering
    \includegraphics[width=1\linewidth]{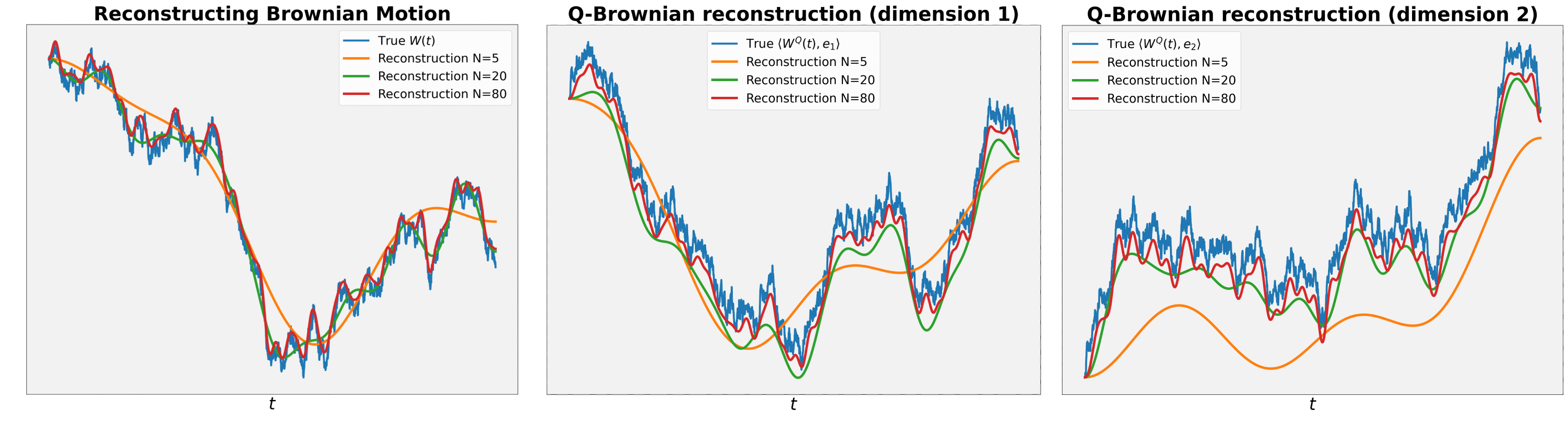}
    \caption{Reconstruction of a one-dimensional Brownian motion and a two-dimensional Q-Brownian motion by the truncation orders $n$.}

    
    \label{fig:brownian_reconstruction}
\end{figure*}

\section{WCE of SPDEs and SDEs}\label{sec:wce_spde_sde}
In this section, we summarize the classical Wiener Chaos Expansion (WCE) framework for SDEs and SPDEs with a focus on making it operational for neural operator learning. We describe how Brownian motion and Q-Brownian motion can be represented via countable Gaussian coordinates constructed from Brownian increments, and how this representation yields WCE-based solution forms for SDEs and SPDEs. We then present the corresponding deterministic propagator systems for the chaos coefficients explicitly, which serve as the targets learned by our neural operator models. All formal statements are collected in Appendix~\ref{append:proof_wce_spde_sde}.

\subsection{Reconstruction of Brownian Motions}\label{sec:Reconstruction of Brownian Motions}

We begin by illustrating some standard conclusions on reconstruction of Brownian and Q-Brownian motions through their increments \citep{da2014stochastic,neufeld2024solving}.

\begin{defn}[Gaussian Random Variable]
\label{prop:basis_reconstruction}
Let $\{e_j\}_{j\in\mathbb N}\subset L^{2}([0, T])$ be a complete orthonormal family that is continuous almost everywhere. Let $W=(W^{(1)},\dots,W^{(d)})$ be a $d$-dimensional standard Brownian motion on $(\Omega,\mathcal F, \mathbb
F,\mathbb P)$. For every $i\in\{1,\dots,d\}$ and $j\in\mathbb N$, define $\xi_{ij} \coloneqq \int_{0}^{T} e_j(s)\,dW^{(i)}_s$. Then the collection $\{\xi_{ij}\}_{i,j}$ consists of mutually independent
standard Gaussian random variables \citep{neufeld2024solving}. 
\end{defn}
\begin{lem}[Reconstruction of Brownian Motion] \label{lem:reconstruct_brownian}
Let the functions $G_j(t) \coloneqq \int_{0}^{t} e_j(s)\,ds$ \,\,for $t \in [0,T]$. For each $i \in \{1, \dots, d\}$ and truncation level $n \in \mathbb{N}$, define the approximating process $\widehat{W}^{(n,i)}_t \coloneqq \sum_{j=1}^{n} \xi_{ij} G_j(t)$.
Assume $\sum_{j=1}^{\infty} \|G_j\|_{C([0,T])}^2 < \infty$, then
the sequence of $(\widehat{W}^{(n,i)})_{n \in \mathbb{N}}$ converges uniformly to $W^{(i)}$ almost surely.
\end{lem}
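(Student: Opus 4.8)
The plan is to recognize $\widehat W^{(n,i)}_t = S_n(t) := \sum_{j=1}^{n}\xi_{ij}G_j(t)$ as the truncated orthonormal expansion of $\mathbf 1_{[0,t]}$ pushed through the It\^o integral, and then to control the tail process $R_n := W^{(i)} - S_n$ uniformly in $t$ by a Gaussian maximal inequality whose effective diameter shrinks under the summability hypothesis. First I would observe that since $\{e_j\}$ is a complete orthonormal system of $L^2([0,T])$ and $G_j(t)=\langle \mathbf 1_{[0,t]}, e_j\rangle_{L^2}$, we have $\mathbf 1_{[0,t]}=\sum_j G_j(t)\,e_j$ in $L^2([0,T])$; applying the It\^o isometry $f\mapsto \int_0^T f\,dW^{(i)}$, a linear isometry into $L^2(\Omega)$, gives $S_n(t)\to \int_0^T \mathbf 1_{[0,t]}\,dW^{(i)}_s = W^{(i)}_t$ in $L^2(\Omega)$ for each fixed $t$, with $\mathbb E[\xi_{ij}W^{(i)}_t]=G_j(t)$. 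This identifies the candidate limit and, by a short computation, yields $\mathbb E[R_n(t)^2]=\sum_{j>n}G_j(t)^2$ and $\mathbb E[(R_n(t)-R_n(s))^2]=\sum_{j>n}(G_j(t)-G_j(s))^2$.

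Second, I would turn these two identities into uniform control of the centered Gaussian process $R_n$. The key observation is that its canonical metric satisfies $d_n(s,t)^2=\mathbb E[(R_n(t)-R_n(s))^2]\le \sum_{j\ge 1}(G_j(t)-G_j(s))^2=\|\mathbf 1_{[s,t]}\|_{L^2}^2=|t-s|$ uniformly in $n$, i.e.\ every tail process is at most as rough as Brownian motion, while its diameter collapses: since $R_n(0)=0$, we have $\sup_t \mathbb E[R_n(t)^2]=\sup_t \sum_{j>n}G_j(t)^2\le \sum_{j>n}\|G_j\|_{C([0,T])}^2=:\sigma_n^2\to 0$ by hypothesis, whence the $d_n$-diameter of $[0,T]$ is at most $2\sigma_n$. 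Feeding $d_n(s,t)\le\sqrt{|t-s|}$ into Dudley's entropy integral gives covering numbers $N(d_n,\varepsilon)\lesssim T/\varepsilon^2$, so $\mathbb E\,\|R_n\|_{C([0,T])}\lesssim \int_0^{2\sigma_n}\sqrt{\log(1+T/\varepsilon^2)}\,d\varepsilon\to 0$, because the integrand is integrable at the origin and the interval of integration shrinks. Thus $S_n\to W^{(i)}$ in $L^1(\Omega;C([0,T]))$, and in particular in probability in sup-norm.

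Finally, I would upgrade convergence in probability to almost sure uniform convergence. Since the increments $\xi_{ij}G_j$ are independent and symmetric $C([0,T])$-valued random elements, L\'evy's maximal inequality gives $\mathbb P(\max_{n<k\le m}\|S_k-S_n\|_\infty>\lambda)\le 2\,\mathbb P(\|S_m-S_n\|_\infty>\lambda)$; combined with $\mathbb E\|S_m-S_n\|_\infty\le \mathbb E\|R_n\|_\infty+\mathbb E\|R_m\|_\infty\to 0$ and Markov's inequality, the partial sums are almost surely Cauchy in $C([0,T])$. (Equivalently, pick a subsequence with $\sum_k \mathbb E\|R_{n_k}\|_\infty<\infty$, obtain a.s.\ convergence along it by Borel--Cantelli, and close the gaps with L\'evy's inequality; the It\^o--Nisio theorem packages exactly this step.) The resulting a.s.\ uniform limit is continuous and agrees with $W^{(i)}_t$ in $L^2(\Omega)$ for every $t$, hence equals $W^{(i)}$ almost surely.

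I expect the main obstacle to be the second step: passing from the marginal $L^2$ control of $R_n$ to a bound on $\mathbb E\sup_t|R_n(t)|$, i.e.\ controlling the entire Gaussian process rather than its one-dimensional marginals. The hypothesis $\sum_j\|G_j\|_{C([0,T])}^2<\infty$ is precisely what makes this tractable: it forces the effective ($d_n$-)diameter of the index set to vanish while the universal roughness bound $d_n(s,t)\le\sqrt{|t-s|}$ keeps the entropy integral uniformly convergent.
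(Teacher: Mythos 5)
Your proof is correct, and its skeleton matches the paper's (identify the limit via the $L^2([0,T])$ expansion of $\mathbf 1_{[0,t]}$ and the It\^o isometry; control $\sup_t$ of the tail; upgrade to almost sure uniform convergence), but the decisive middle step is carried out with a genuinely different tool. The paper's sketch invokes a martingale-type maximal inequality (Doob or BDG) in the truncation index $n$ together with the summability of $\|G_j\|_{C([0,T])}^2$ to get convergence in $L^2(\Omega;C([0,T]))$; you instead treat each tail $R_n$ as a centered Gaussian process indexed by $t$ and apply Dudley's entropy bound, using the two observations that the canonical metric satisfies $d_n(s,t)^2\le\sum_{j\ge1}(G_j(t)-G_j(s))^2=\|\mathbf 1_{[s,t]}\|_{L^2}^2=|t-s|$ uniformly in $n$ (Parseval) while the $d_n$-diameter collapses like $\sigma_n=(\sum_{j>n}\|G_j\|_{C([0,T])}^2)^{1/2}\to0$, so the entropy integral $\int_0^{2\sigma_n}\sqrt{\log(1+T/\varepsilon^2)}\,d\varepsilon$ vanishes. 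This exploits the Gaussian structure in the time variable rather than the martingale structure in $n$, is fully self-contained, and makes transparent exactly where the hypothesis enters (it shrinks the effective diameter, while the universal bound $d_n(s,t)\le\sqrt{|t-s|}$ keeps the entropy integrand uniformly integrable); the paper's route is the shorter textbook argument but leaves the sup-over-$t$ control implicit in the cited references. Your final upgrade via L\'evy's maximal inequality / It\^o--Nisio for independent symmetric $C([0,T])$-valued summands is a clean and correct substitute for the paper's ``standard Borel--Cantelli argument,'' and your identification of the a.s.\ uniform limit with $W^{(i)}$ by continuity plus pointwise $L^2$ agreement closes the proof properly.
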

Similar conclusions can be built for Q-Brownian motions.
\begin{lem}[Reconstruction of Q-Brownian Motion] \label{lem:reconstruct_q_brownian}
Let $\{(\lambda_k, f_k)\}_{k \in \mathbb{N}}$ be the orthonormal eigensystem of $Q$.
The Q-Brownian $W$ admits the Karhunen-Loève expansion $W_t = \sum_{k=1}^{\infty} \sqrt{\lambda_k} \beta^k_t f_k, \,\, t \in [0,T]$,
where $\{\beta^k_t\}_{k \in \mathbb{N}}$ is a sequence of independent, one-dimensional standard Brownian motions. Let $\{e_j\}_{j \in \mathbb{N}} \subset L^2([0,T])$ be the basis defined in Definition \ref{prop:basis_reconstruction}, such that each scalar Brownian motion $\beta^k$ can be represented as $\beta^k_t = \sum_{j=1}^\infty \xi_{kj} G_j(t)$.
By truncating the series for both $\beta^k$ and $W$, we define the approximation for $K, n \in \mathbb{N}$ as 
$\widehat{W}^{(K,n)}_t \coloneqq \sum_{k=1}^{K} \sqrt{\lambda_k} \left( \sum_{j=1}^{n} \xi_{kj} G_j(t) \right) f_k$. Assume $\sum_{j=1}^{\infty} \|G_j\|_{C([0,T])}^2 < \infty$, then the approximation converges to $W$ in $C([0,T]; \mathcal{H})$ almost surely.
\end{lem}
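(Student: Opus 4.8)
The plan is to prove convergence of the doubly-indexed approximation by inserting an intermediate process that isolates the two truncations. Define the Karhunen--Loeve truncation at level $K$ with the scalar Brownian motions kept \emph{exact},
\[
W^{(K)}_t \coloneqq \sum_{k=1}^{K}\sqrt{\lambda_k}\,\beta^k_t\,f_k ,
\]
and split the error by the triangle inequality in $C([0,T];\widetilde{\mathcal H})$ as
\[
\bigl\|\widehat W^{(K,n)}-W\bigr\|_{C([0,T];\widetilde{\mathcal H})}
\le
\bigl\|\widehat W^{(K,n)}-W^{(K)}\bigr\|_{C([0,T];\widetilde{\mathcal H})}
+
\bigl\|W^{(K)}-W\bigr\|_{C([0,T];\widetilde{\mathcal H})} .
\]
The first term is a purely \emph{temporal} reconstruction error at a fixed mode budget $K$; the second is the \emph{spectral} truncation error of the Karhunen--Loeve series. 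I would show that each vanishes almost surely and then coordinate the two limits.

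For the temporal term I would exploit that $\{f_k\}$ is orthonormal in $\widetilde{\mathcal H}$. Writing $\widehat\beta^{k,(n)}_t \coloneqq \sum_{j=1}^{n}\xi_{kj}G_j(t)$, Parseval gives, for every $t$,
\[
\bigl\|\widehat W^{(K,n)}_t - W^{(K)}_t\bigr\|_{\widetilde{\mathcal H}}^2
=\sum_{k=1}^{K}\lambda_k\,\bigl|\widehat\beta^{k,(n)}_t-\beta^k_t\bigr|^2 .
\]
Taking the supremum over $t\in[0,T]$ and using $\sup_t\sum_k \le \sum_k\sup_t$ for nonnegative summands bounds this by $\sum_{k=1}^{K}\lambda_k\,\|\widehat\beta^{k,(n)}-\beta^k\|_{C([0,T])}^2$. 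Since each $\beta^k$ is a scalar standard Brownian motion whose coordinates $\xi_{kj}$ are exactly those of Definition~\ref{prop:basis_reconstruction}, and the hypothesis $\sum_j\|G_j\|_{C([0,T])}^2<\infty$ is precisely the assumption of Lemma~\ref{lem:reconstruct_brownian}, that lemma yields $\|\widehat\beta^{k,(n)}-\beta^k\|_{C([0,T])}\to 0$ almost surely as $n\to\infty$ for each $k$. A finite sum ($k\le K$) of almost-surely vanishing nonnegative terms vanishes, so for each fixed $K$ the temporal term tends to $0$ almost surely as $n\to\infty$.

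For the spectral term the same identity gives $\|W^{(K)}_t-W_t\|_{\widetilde{\mathcal H}}^2=\sum_{k>K}\lambda_k|\beta^k_t|^2$, whence
\[
\bigl\|W^{(K)}-W\bigr\|_{C([0,T];\widetilde{\mathcal H})}^2 \le \sum_{k>K}\lambda_k\,\|\beta^k\|_{C([0,T])}^2 .
\]
Here I would use that $\mathbb E\bigl[\|\beta^k\|_{C([0,T])}^2\bigr]=\mathbb E\bigl[\sup_{t\le T}|\beta^k_t|^2\bigr]\le c_T$ is finite and independent of $k$ (Doob's $L^2$ maximal inequality, with $c_T=4T$), together with $\sum_k\lambda_k=\operatorname{Tr}(Q)<\infty$. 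By Tonelli, $\mathbb E\bigl[\sum_k\lambda_k\|\beta^k\|_{C([0,T])}^2\bigr]\le c_T\operatorname{Tr}(Q)<\infty$, so the series $\sum_k\lambda_k\|\beta^k\|_{C}^2$ converges almost surely and its tail tends to $0$ as $K\to\infty$.

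Combining the two estimates and sending $n\to\infty$ first and then $K\to\infty$ (equivalently, choosing a mode budget $K=K(n)\to\infty$ growing slowly enough that the temporal head stays controlled) yields $\|\widehat W^{(K,n)}-W\|_{C([0,T];\widetilde{\mathcal H})}\to 0$ almost surely. I expect the main obstacle to be precisely this coordination of the two limits: for fixed $n$ the temporal head $\sum_{k\le K}\lambda_k\|\widehat\beta^{k,(n)}-\beta^k\|_C^2$ need not be small as $K\to\infty$, so one cannot let $K,n\to\infty$ along arbitrary paths, and care is required to (i) phrase the conclusion as an iterated limit or along a coupled subsequence, and (ii) collect the countably many mode-wise exceptional sets produced by Lemma~\ref{lem:reconstruct_brownian} into a single null set on which all scalar reconstructions converge simultaneously.
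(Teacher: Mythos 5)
Your proof is correct and follows essentially the same route as the paper's: split the error into a Karhunen--Lo\`eve (spectral) truncation controlled via Doob's maximal inequality and the trace-class summability of $(\lambda_k)$, plus a temporal reconstruction error on finitely many modes handled by Lemma~\ref{lem:reconstruct_brownian}. The only cosmetic difference is that you argue almost surely throughout (summable-tail and countable-union-of-null-sets arguments, with an explicit flag of the double-limit coordination), whereas the paper works in $L^2(\Omega;C([0,T];\mathcal H))$ and invokes Borel--Cantelli at the end.
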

Figure~\ref{fig:brownian_reconstruction} shows simple examples of the reconstructions of both Brownian and Q-Brownian motions. One can check that as the truncation order increases, the reconstructed paths rapidly approach the true trajectories in all coordinates. Since the SDE/SPDE solutions we consider are square-integrable functionals of $W$, they can subsequently be expanded in the corresponding Wiener–chaos basis, which is the starting point for the WCE-based approximations developed in the next subsection.

\begin{figure*}[t]
    \centering
    \includegraphics[width=1\linewidth]{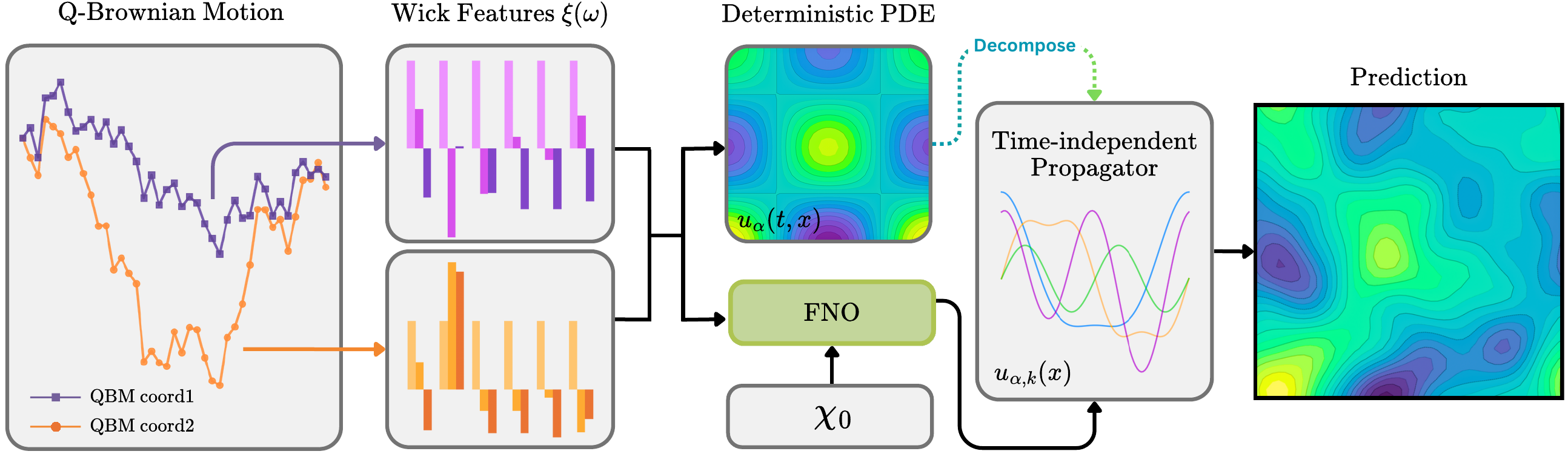}
    \caption{General working flow of $\mathcal F$-SPDENO: Discretized Q-Brownian motion as the initial input, Wick features computed from its increments are concatenated with the initial condition $\chi_0$ and passed through an FNO to produce time-independent propagator fields, which are combined with a fixed temporal basis to reconstruct the full trajectory.
    \label{fig:spdeno_architecture}}
\end{figure*}

\subsection{Chaos Expansion on SPDEs and SDEs}\label{sec:chaos_expansion}
\vspace{-3pt}
We now show how the SPDE and SDE solutions can be coupled with a polynomial basis constructed from the Gaussian random variables introduced above. We begin by introducing the Wick polynomials.
\begin{defn}\label{def:wick}
Let $\Xi = \{\xi_{mj}\}_{m,j \in \mathbb{N}}$ be the family of Gaussian random variables constructed from Definition \ref{prop:basis_reconstruction}. 
Further let $\mathcal{J}$ be the set of all multi-indices $\alpha = (\alpha_{mj})_{m,j \in \mathbb{N}}$ with finite support. For any $\alpha \in \mathcal{J}$, the corresponding normalized Wick monomial is
\begin{align}\label{eq:wick_features}
    \xi_\alpha(\omega) \coloneqq \frac{1}{\sqrt{\alpha!}} \prod_{m,j} h_{\alpha_{mj}} \! \left(\xi_{mj}(\omega)\right),
\end{align}
where $h_k$ is the $k$-th (probabilist's) Hermite polynomial, $h_k(x) = (-1)^k e^{x^2/2} \frac{d^k}{dx^k} e^{-x^2/2}$, and 
$\alpha!\coloneqq\prod_{m,j}\alpha_{mj}!$. A finite linear combination of such monomials is called a Wick polynomial.
\end{defn}

The family $\{\xi_\alpha\}_{\alpha \in \mathcal{J}}$ forms a \textbf{complete orthonormal basis} for the space of square-integrable random variables measurable with respect to $\Xi$, i.e., $\mathbb{E}[\xi_\alpha \xi_\beta] = \delta_{\alpha\beta}, \,\, \forall \alpha, \beta \in \mathcal{J}$.
Furthermore, the linear span of this basis is \textit{dense} in $L^p(\Omega, \mathcal{F}, \mathbb{P})$ for any $p \in [1, \infty)$ \citep{neufeld2024solving}. This completeness is fundamental, as it allows any random variable in this space to be represented as a series expansion in this basis. Finally, we show the well-known WCE for SDE as follows. 
\begin{thm}[WCE for multi-dimensional SDE] \label{thm:sde_chaos_odes}
Let $X_t \in \mathbb{R}^d$ be the strong solution to the $d$-dimensional SDE \eqref{eq:SDE_initial} driven by an $m$-dimensional Brownian motion $W_t$. The solution's WCE is given by 
\begin{align}\label{eq:sde_wce}
    X_t = \sum_{\alpha \in \mathcal{J}} u_\alpha(t) \xi_\alpha,  
\end{align}
where $u_\alpha(t) \in \mathbb{R}^d$ are called the \textbf{propagators} of SDE, and $u(t)$ is
governed by a coupled system of ODEs. That is 
\begin{align} \label{eq:sde_ode_system_final}
    \frac{d}{dt} u_\alpha^{(j)}(t) = &\mathbb{E}\left[ F^{(j)}(t, X_t) \xi_\alpha \right]  \notag \\+ &\sum_{k=1}^{m} \sum_{i=1}^{\infty} \sqrt{\alpha_{ki}} \, e_i(t)\, \mathbb{E}\left[ B^{(j,k)}(t, X_t) \xi_{\alpha - e_{ki}} \right],
\end{align}
with $u_\alpha^{(j)}(0) = x_0^{(j)} \delta_{\alpha 0}$, and 
$e_{ki}$ denotes the multi-index that is 1 at position $(k,i)$ and 0 elsewhere.
\end{thm}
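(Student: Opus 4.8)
The plan is to identify the propagators $u_\alpha(t)$ as the chaos coefficients of the solution and then derive their dynamics by projecting the integral form of the SDE onto each basis element $\xi_\alpha$. For fixed $t$, each component $X_t^{(j)}$ is a square-integrable functional measurable with respect to the Gaussian family $\Xi$ (because the strong solution is adapted to the Brownian filtration, and the complete orthonormal system $\{e_j\}$ makes $\{\xi_{mj}\}$ generate the same $\sigma$-algebra as $W$). Hence the completeness and orthonormality of $\{\xi_\alpha\}_{\alpha \in \mathcal{J}}$ noted after Definition~\ref{def:wick} yield the expansion \eqref{eq:sde_wce} with $u_\alpha^{(j)}(t) = \mathbb{E}[X_t^{(j)}\,\xi_\alpha]$. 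First I would substitute the strong-solution integral form $X_t^{(j)} = x_0^{(j)} + \int_0^t F^{(j)}(s,X_s)\,ds + \sum_{k=1}^m \int_0^t B^{(j,k)}(s,X_s)\,dW_s^{(k)}$ into this coefficient and evaluate term by term; the constant term gives $x_0^{(j)}\mathbb{E}[\xi_\alpha] = x_0^{(j)}\delta_{\alpha 0}$ (since $\xi_0 \equiv 1$), which is the stated initial datum.

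For the drift, Fubini moves the expectation inside the time integral to give $\int_0^t \mathbb{E}[F^{(j)}(s,X_s)\xi_\alpha]\,ds$, whose derivative in $t$ is the first term of \eqref{eq:sde_ode_system_final}. The whole difficulty lies in the diffusion term $\mathbb{E}\!\big[\big(\int_0^t B^{(j,k)}(s,X_s)\,dW_s^{(k)}\big)\xi_\alpha\big]$, which must be turned into the claimed lowering operation $\alpha \mapsto \alpha - e_{ki}$ weighted by $\sqrt{\alpha_{ki}}\,e_i$.

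The key step is a Malliavin integration-by-parts (duality) identity: since the integrand is adapted, the Itô integral equals the Skorokhod integral, so $\mathbb{E}[(\int_0^t g\,dW_s^{(k)})\xi_\alpha] = \mathbb{E}[\int_0^t g(s)\,D_s^{(k)}\xi_\alpha\,ds]$, where $D^{(k)}$ is the Malliavin derivative in the $W^{(k)}$ direction. I would then compute $D_s^{(k)}\xi_\alpha$ directly from the product form \eqref{eq:wick_features}: using $D_s^{(k)}\xi_{m'i} = \delta_{m'k}\,e_i(s)$, the Hermite identity $h_n'(x) = n\,h_{n-1}(x)$, and the chain rule, only the factors with first index $k$ survive, and matching the normalization $(\alpha - e_{ki})! = \alpha!/\alpha_{ki}$ collapses the result to $D_s^{(k)}\xi_\alpha = \sum_i \sqrt{\alpha_{ki}}\,e_i(s)\,\xi_{\alpha - e_{ki}}$ (a finite sum, since $\alpha$ has finite support). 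Substituting $g(s) = B^{(j,k)}(s,X_s)$, applying Fubini, and summing over $k$ expresses $u_\alpha^{(j)}(t)$ as a time integral; differentiating in $t$ yields \eqref{eq:sde_ode_system_final}.

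I expect the main obstacle to be the rigorous justification of the duality step rather than the algebra: one must check that $B^{(j,k)}(s,X_s)$ lies in the domain of the Skorokhod integral, so that the Itô--Skorokhod identification and the integration-by-parts formula both apply, and that $\xi_\alpha$ is Malliavin-differentiable, which holds since it is a smooth cylindrical functional of finitely many Gaussians. Secondary points are the Fubini interchange and differentiation under the time integral, both controlled by the Lipschitz and linear-growth assumptions that already guarantee $\sup_{t \le T}\mathbb{E}\|X_t\|^2 < \infty$ and the continuity of $s \mapsto \mathbb{E}[B^{(j,k)}(s,X_s)\xi_{\alpha-e_{ki}}]$. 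An equivalent Malliavin-free derivation of the same lowering identity can be obtained from the three-term Hermite recurrence together with an approximation of the Itô integral by simple adapted processes; I would present the Malliavin version as the cleanest.
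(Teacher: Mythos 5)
Your proposal is correct and follows essentially the same route as the paper's proof: project the integral form of the SDE onto each $\xi_\alpha$, handle the initial and drift terms by orthonormality and Fubini, and convert the It\^o integral via the Malliavin duality formula together with the lowering identity $D_s^{(k)}\xi_\alpha = \sum_i \sqrt{\alpha_{ki}}\,e_i(s)\,\xi_{\alpha - e_{ki}}$, then differentiate in $t$. You in fact supply slightly more detail than the paper (the derivation of $D_s^{(k)}\xi_\alpha$ from $h_n' = n\,h_{n-1}$ and the normalization $(\alpha - e_{ki})! = \alpha!/\alpha_{ki}$, plus the domain and integrability justifications), which the paper states without proof.
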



The one-dimensional version of this propagator system for Eq.~\eqref{eq:SDE_initial} has been studied in the SDEONet framework of \citep{eigel2024functional}. We show the form of ODEs that govern the propagators in the multi-dimensional case in Theorem~\ref{thm:sde_chaos_odes}. Since these ODEs depend only on time $t$, the corresponding propagators can be efficiently learned by neural networks (or ODE solvers) with $t$ as input, enabling direct reconstruction of the SDE solution.
This idea extends to SPDEs as follows. 
\begin{thm}[SPDE Propagator System] \label{thm:spde_propagator}
The evolution of propagator fields $u_\alpha(t,x)$ for Eq.~\eqref{eq:spde_initial} is governed by a coupled system of deterministic PDEs
\begin{equation}\label{eq:pde_bundle_evolution}
    \frac{\partial}{\partial t} u_\alpha(t,x) = (A        u_\alpha)(t,x) + \mathscr{F}_\alpha(\{u_\gamma(t,x)\}_{\gamma \in \mathcal{J}}),
\end{equation}
with initial condition $u_\alpha(0,x) = \chi_0(x) \delta_{\alpha 0}$,  and the nonlinear term $\mathscr{F}_\alpha$ is an operator uniquely determined by the nonlinearities $F$ and $B$.
\end{thm}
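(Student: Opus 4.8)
The plan is to derive the propagator PDEs by projecting the \emph{mild} form of Eq.~\eqref{eq:spde_initial} onto each Wiener--chaos basis element $\xi_\alpha$ and tracking how the stochastic integral shifts the multi-indices, exactly paralleling the argument behind Theorem~\ref{thm:sde_chaos_odes} but now carrying the (possibly unbounded) generator $A$ and the spatial variable $x$. To avoid differentiating $A u_\alpha$ directly I would establish the identity first in integrated (mild) form, where only the bounded semigroup $S_{t-s}$ appears, and only then read off the differential statement of Eq.~\eqref{eq:pde_bundle_evolution} under the hypotheses guaranteeing the mild solution.

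First I would define the propagator fields as the chaos coefficients $u_\alpha(t) \coloneqq \mathbb{E}[X_t\,\xi_\alpha] \in \mathcal{H}$ (with $u_\alpha(t,x)$ its spatial evaluation). Since $\{\xi_\alpha\}_{\alpha\in\mathcal{J}}$ is a complete orthonormal basis of the relevant $L^2(\Omega)$ space, the expansion $X_t = \sum_\alpha u_\alpha(t)\xi_\alpha$ holds in $L^2(\Omega;\mathcal{H})$ and these projections are exactly the expansion coefficients. I would then reduce the $Q$-Brownian forcing to scalar Brownian motions through the Karhunen--Lo\`eve expansion of Lemma~\ref{lem:reconstruct_q_brownian}, writing $dW_s = \sum_k \sqrt{\lambda_k}\,d\beta^k_s\,f_k$, so that each scalar noise $\beta^k$ carries the Gaussian coordinates $\xi_{ki} = \int_0^T e_i\,d\beta^k$ that build the Wick monomials of Definition~\ref{def:wick}.

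Next I would apply $\mathbb{E}[\,\cdot\,\xi_\alpha]$ to $X_t = S_t\chi_0 + \int_0^t S_{t-s}F(s,\cdot,X_s)\,ds + \int_0^t S_{t-s}B(s,\cdot,X_s)\,dW_s$ and treat the three terms separately. The semigroup term yields $S_t\chi_0\,\mathbb{E}[\xi_\alpha] = S_t\chi_0\,\delta_{\alpha 0}$, which gives the stated initial condition; since $A$ and hence $S_{t-s}$ are deterministic and linear, they commute with the projection and the drift term becomes $\int_0^t S_{t-s}\,\mathbb{E}[F(s,\cdot,X_s)\xi_\alpha]\,ds$. The decisive term is the diffusion contribution: using the duality between the It\^o/Skorokhod integral and the Malliavin derivative, together with the lowering identity $D^{(k)}\xi_\alpha = \sum_i \sqrt{\alpha_{ki}}\,e_i(\cdot)\,\xi_{\alpha - e_{ki}}$ (which follows from $h_n' = n\,h_{n-1}$ applied to the normalized Hermite products of Definition~\ref{def:wick}), I would show this term equals $\int_0^t S_{t-s}\sum_{k,i}\sqrt{\lambda_k}\,\sqrt{\alpha_{ki}}\,e_i(s)\,\mathbb{E}[B(s,\cdot,X_s)f_k\,\xi_{\alpha-e_{ki}}]\,ds$, reproducing the same index-lowering structure as in Eq.~\eqref{eq:sde_ode_system_final}.

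Finally I would differentiate the resulting mild identity in $t$, invoking the standard $C_0$-semigroup calculus so that the $S_{t-s}$ prefactor produces the $A u_\alpha$ term and the remaining expectations assemble into a single operator $\mathscr{F}_\alpha$. To see that $\mathscr{F}_\alpha$ is determined solely by $F$ and $B$ and depends on the unknowns only through $\{u_\gamma\}_{\gamma\in\mathcal{J}}$, I would substitute $X_s = \sum_\gamma u_\gamma(s)\xi_\gamma$ into $F$ and $B$ and re-expand the compositions in the chaos basis, so each expectation is a fixed (generally nonlinear) functional of the coefficient fields. I expect the main obstacle to be this diffusion term: rigorously justifying the duality identity and the Malliavin computation of $D^{(k)}\xi_\alpha$ in the $\mathcal{H}$-valued, $Q$-Brownian setting, and controlling the interchange of the unbounded $A$ with the infinite chaos sum --- which is precisely why I would argue throughout in mild form, where only $S_{t-s}$ acts. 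The re-expansion step making $\mathscr{F}_\alpha$ explicit is routine for polynomial $F,B$ but needs a density/approximation argument in the general Lipschitz case.
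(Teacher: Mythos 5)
Your proposal follows essentially the same route as the paper's proof: project the mild formulation onto each Wick monomial $\xi_\alpha$, use orthogonality for the semigroup term, a stochastic Fubini argument for the drift, the Malliavin-duality (integration-by-parts) formula with the index-lowering identity for $D_s\xi_\alpha$ for the diffusion term, and then pass from the resulting Volterra integral equation to the differential form via standard $C_0$-semigroup theory. Your explicit coordinate form of the diffusion contribution (with the $\sqrt{\lambda_k}$ and $f_k$ from the Karhunen--Lo\`eve expansion) and your closing remark on re-expanding $F,B$ in the chaos basis are just slightly more detailed versions of steps the paper's sketch leaves implicit, so there is no substantive difference in approach.
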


Aligned with discussed in the SPDE WCE literature \citep[e.g., Remark 2.13 in][]{neufeld2024solving}, 
Theorem~\ref{thm:spde_propagator} makes the WCE   structure for SPDE explicit in Eq.~\eqref{eq:spde_initial} by writing the propagator system in the form \eqref{eq:pde_bundle_evolution}. We show some examples of $\mathscr F_\alpha$ in Appendix~\ref{append:proof_wce_spde_sde}. While the precise form of $\mathscr{F}_\alpha$ depends on the choice of $F$ and $B$, the key point for our purposes is that each $u_\alpha(t,x)$ solves a deterministic PDE in space–time. This observation enables us to use standard deterministic PDE neural operators such as FNO \citep{li2020fourier} and GINO \citep{li2023geometry} to approximate the propagator fields, with the stochasticity handled entirely by the Wick features (i.e., $\xi_\alpha$) of the noise trajectories.

\begin{table*}[t]
\centering
\caption{Relative $L^2$ error on the dynamic $\boldsymbol{\Phi}^4_1$, -- indicates that the model is not applicable.}
\label{tab:dynamic_phi14}

\begin{adjustbox}{max width=\linewidth}
\renewcommand{\arraystretch}{1}

\begin{tabular}{l r c c c c c c}
\toprule
& & &
\multicolumn{2}{c}{  $N=1000$} & &
\multicolumn{2}{c}{  $N=10000$} \\
\cmidrule(lr){4-5} \cmidrule(lr){7-8}

  \textbf{Model}
&   \#Parameters
&   Inference time (s)
&   $W \mapsto X$
&   $(X_0,W) \mapsto X$
&
&   $W \mapsto X$
&   $(X_0,W) \mapsto X$ \\
\midrule

  \textbf{NCDE}
&   272\,672
&   0.503
&   0.134
&   0.168
&
&   0.050
&   0.064 \\

  \textbf{NRDE}
&   2\,164\,356
&   0.105
&   0.129
&   0.150
&
&   0.058
&   0.089 \\

  \textbf{NCDE-FNO}
&   48\,769
&   1.845
&   0.077
&   0.061
&
&   0.091
&   0.060 \\

  \textbf{FNO}
&   1\,647\,105
&   0.165
&   0.029
&   --
&
&   0.034
&   -- \\

  \textbf{NSPDE}
&   265\,089
&   0.248
&   0.021
&   0.020
&
&   0.013
&   0.021 \\

\midrule

  $\mathcal F$-\textbf{SPDENO (Ours)}
&   108\,629
&   0.128
&   \textbf{0.012}
&   \textbf{0.015}
&
&   \textbf{0.011}
&   \textbf{0.017} \\

\bottomrule
\end{tabular}
\end{adjustbox}
\end{table*}

\begin{figure*}[t]
    \centering
    \begin{minipage}[t]{0.6\linewidth}\vspace{0pt}
        \centering
        \includegraphics[width=\linewidth]{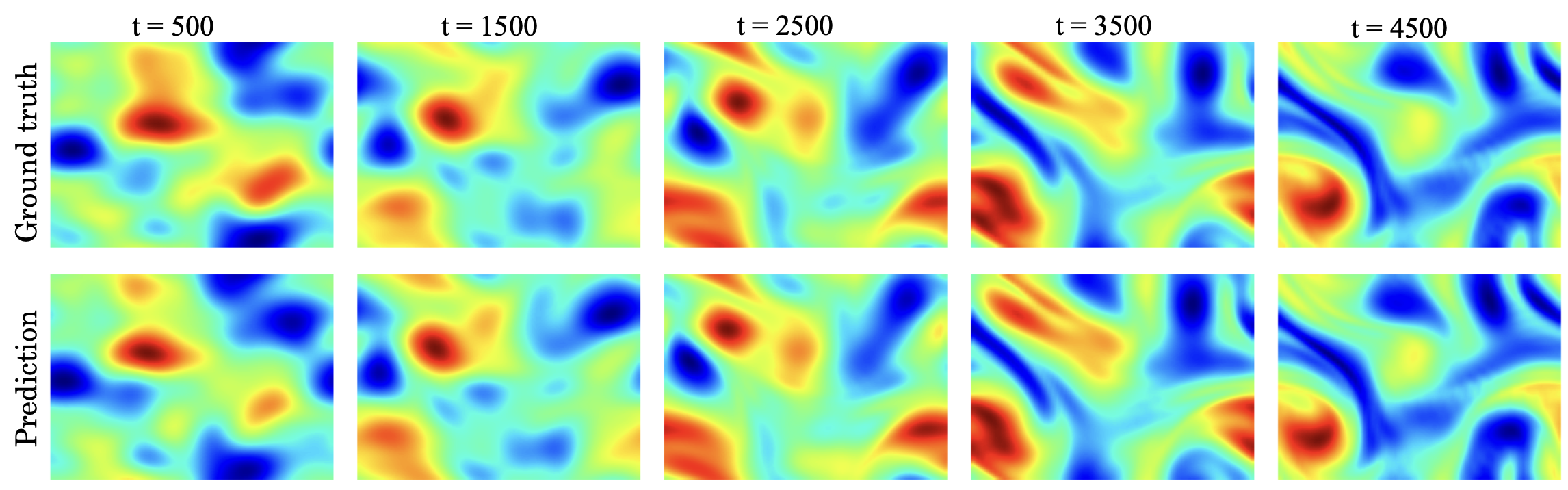} 
        \caption{Comparison between ground truth and $\mathcal F$-SPDENO solutions of the 2D stochastic Navier--Stokes equation at several time instances.}
        \label{fig:NS_results}
    \end{minipage}\hfill
    \begin{minipage}[t]{0.37\linewidth}\vspace{0pt}
        \centering
        \setlength{\tabcolsep}{0.5pt} 
        \renewcommand{\arraystretch}{1.1} 
        \captionof{table}{Relative $L^2$ error for the 2D NS equation.}
        \label{tab:ns_results}
        \begin{tabular}{lcc}
            \toprule
            \rowcolor[HTML]{EFEFEF}
            \textbf{Model} & $W \mapsto X$ & $(X_0,W) \mapsto X$ \\
            \midrule
            \textbf{NCDE}            & 0.406 & 0.893 \\
            \textbf{NCDE-FNO}        & 0.379 & 0.208 \\
            \textbf{FNO }            & 0.051 & 0.073 \\
            \textbf{NSPDE}           & 0.040 & 0.047 \\
            \midrule
            $\mathcal F$-\textbf{SPDENO (Ours)} & \textbf{0.037} & \textbf{0.031} \\
            \bottomrule
        \end{tabular}
    \end{minipage}
\end{figure*}
\section{Model Architecture}\label{sec:application_model}

\paragraph{$\mathcal F$-SPDENO}
As discussed above, solutions of \eqref{eq:spde_initial} admit a Wiener–chaos representation
$X_t(x) = \sum_{\alpha \in \mathcal J} u_\alpha(t,x)\,\xi_\alpha$, such that, once the Q-Brownian motion $W$ and the initial condition $\chi_0$ are observed, the randomness is entirely carried by $\{\xi_\alpha\}$, while the propagators $u_\alpha(t,x)$ remain deterministic.
In practice, we first compute the finite-dimensional Q-Brownian increments,
then construct the Gaussian variables $\{\xi_{mj}\}$ to obtain the \textit{Wick features} $\xi_\alpha$. 
These Wick features serve as the stochastic input to our backbone model, e.g., FNO.
To make the propagators suitable to FNO targets, we further expand them in a temporal basis $\{\phi_k(t)\}_{k=1}^K$ as $u_\alpha(t,x) \approx \sum_{k=1}^K u_{\alpha,k}(x)\,\phi_k(t)$, so that we only need to learn the time-independent coefficient fields $\{u_{\alpha,k}(x)\}$ in FNO,
and the final prediction is obtained by combining the FNO outputs with the chaos and temporal bases.
Figure~\ref{fig:spdeno_architecture} summarises the whole work flow of $\mathcal F$-SPDENO. 

\vspace{-6pt}
\paragraph{SDENO}
For the SDE case, we follow the same chaos-based representation and treat the propagators as deterministic functions of time that can be approximated by standard neural networks.
As illustrated in the previous section, once the Brownian motion has been encoded into Wick features, each chaos coefficient $u_\alpha(t)$ satisfies a deterministic ODE in $t$, so one can use a wide range of architectures (even simple MLPs as in \citep{eigel2024functional}) to approximate these time-dependent propagators.
Concretely, we construct Wick features from the driving noise trajectory $\{W_t\}_{t\in[0,T]}$ and feed the time variable $t$ (or its positional encoding) into a backbone network, whose outputs are then combined with the Wick features via an explicit inner product.
This yields a flexible SDENO family with different backbones: for example, we use a UNet for diffusion SDEs on images ($\mathcal U$-SDENO) and graph neural networks for diffusion interpolation on graphs ($\mathcal G$-SDENO).
In $\mathcal U$-SDENO, the time dependence is modeled inside the UNet, whereas in $\mathcal G$-SDENO it is handled by a separate temporal network. See the experiment sections \ref{exp:sdes} for more details.

\begin{figure*}[t]
    \centering
    \scalebox{1}{
    \includegraphics[width=1\linewidth]{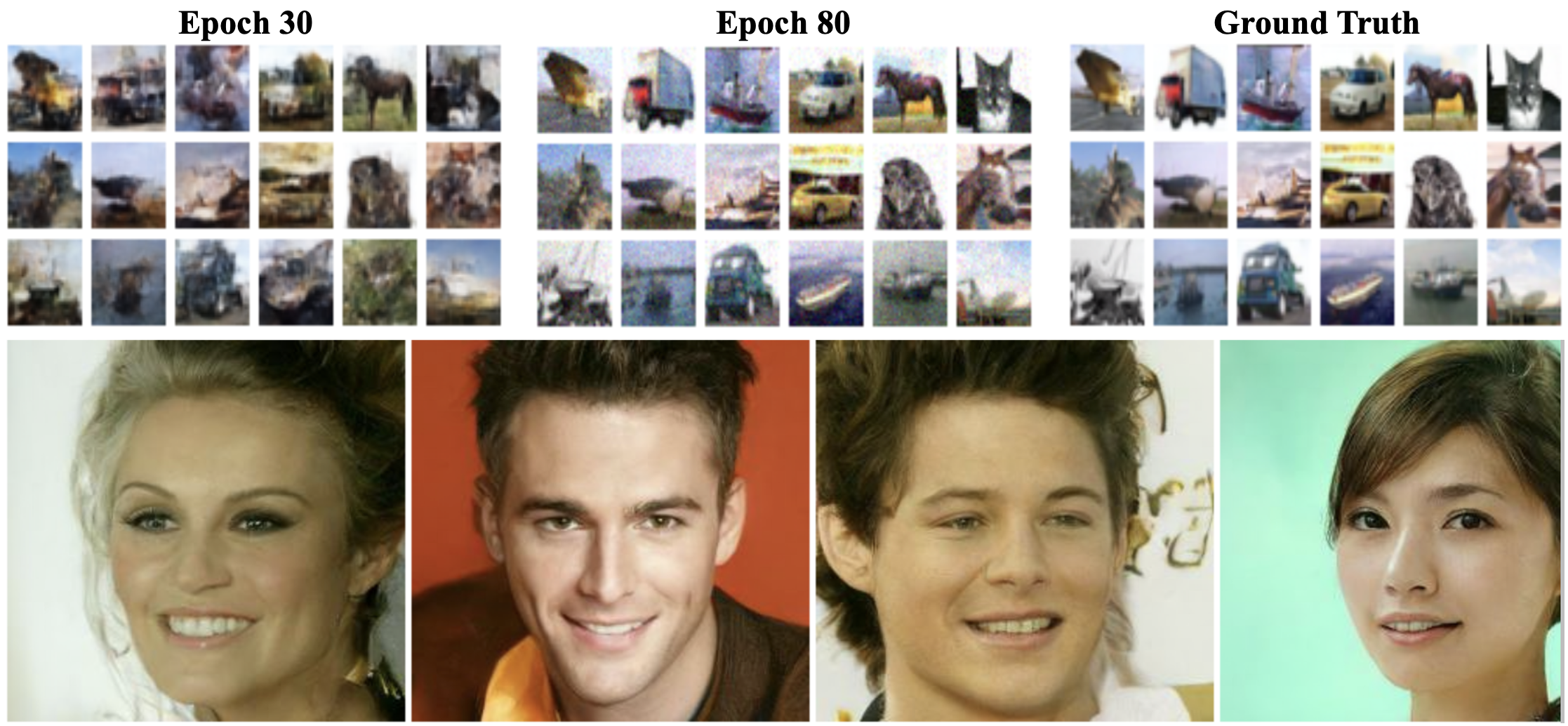}}
    \caption{Top panel: Comparisons between $\mathcal U$-SDENO results and CIFAR10 ground truth (pretrained model outputs) at epochs 30 and 80. Bottom panel: Unconditional sampling on CelebA-HQ.}
    \label{fig:sampler_CIFAR10}
\end{figure*}

\vspace{-6pt}
\section{Numerical Experiments}\label{sec:experiment}
\vspace{-6pt}
In this section, we present a comprehensive empirical evaluation of our NO model across diverse settings: (i) solving SPDEs (the dynamic $\boldsymbol{\Phi}^4_1$ model and the 2D Navier-Stokes equation); (ii) solving SDEs in the image and graph domains, yielding a diffusion one step sampler and a topological interpolator; (iii) extrapolation on the financial Heston model; and (iv) learning SDE solution on manifold. All experiments are executed on an NVIDIA\textsuperscript{\textregistered} H200 SXM GPU (141\,GB HBM3e) within an HPC cluster. Additional experimental details and results, e.g., empirical parameter sensitivity, and parameter estimation with our generalized \textit{Meta}-SDENO model, are provided in Appendix~\ref{append:all_additional_experiment}.

\vspace{-6pt}
\subsection{Approximation of SPDE Solutions}
\vspace{-4pt}
\paragraph{Dynamic $\boldsymbol{\Phi}^4_1$}
We begin to evaluate our $\mathcal F$-SPDENO model by the dynamic $\boldsymbol{\Phi}^4_1$ model, which is widely applied in the realm of superconductivity \citep{temam2012infinite}. The equation has the form 
\begin{align}
dX_t
= \bigl(\Delta X_t + 3X_t - X_t^{3}\bigr)\,dt 
  + dW_t, \notag \\ X(0,x)=\chi_0 \in L^2(\mathbb T^1), \quad X_t(0)=X_t(1), 
\end{align}
where $(t,x) \in [0,T] \times [0,1]$, and $W$ is the Q-Brownian motion. We follow the experimental settings of \citet{salvi2022neural,hu2022neural} in generating sample paths, initial conditions, ground truth solutions, and train-test splits. We conduct two tasks with different training observations $N = 1000$ and $10000$, respectively, and evaluate our model with two settings, (i) $W \mapsto X$ which assumes $W$ is observed but the initial condition is fixed all the time and (ii) $(X_0,W) \mapsto X$ where $W$ is observed and $X_0$ changed across the samples. 

We compare $\mathcal F$-SPDENO with various baselines such as NCDE, NCDE-FNO, NRDE \citep{lu2022comprehensive}, FNO \citep{li2020fourier} and NSPDE \citep{salvi2022neural}. We note that we did not compare our model to architectures specifically designed for rough or even singular SPDEs, such as \citep{hu2022neural,gong2023deep}.
In the one-dimensional dynamic $\boldsymbol{\Phi}^4_1$ setting considered here, the solution enjoys positive Hölder regularity \citep{li2025spdebench}, so the regularity-structure machinery that underpins these models is not essential. The results are contained in Table~\ref {tab:dynamic_phi14}. Our model achieves remarkable results relative to all other baselines, with lower $L^2$ errors across both small and large training schemes. 
Interestingly, increasing $N$ from $1000$ to $10000$ only leads to modest changes in the relative $L^2$ error for NSPDE and $\mathcal F$-SPDENO, especially on the $(X_0,W)\mapsto X$ task.
This behaviour is consistent with the fact that both methods already achieve very small errors in the $N=1000$ regime. In $\boldsymbol{\Phi}^4_1$, $N=1000$ trajectories already capture most of the variability of the dynamics, so increasing $N$ to $10000$ brings only limited additional benefit. This aligns with the recent benchmarking study in SPDE \citep{li2025spdebench}.

\vspace{-4pt}
\paragraph{Stochastic Navier-Stokes Equation}
In the second experiment, we evaluate $\mathcal F$-SPDENO on the 2D incompressible Navier-Stokes equations in vorticity form \citep{li2020fourier}. 
The dynamic takes the form
\begin{align}
dX_t = \Bigl(\nu\,\Delta X_t \;-\; U[X_t]\!\cdot\!\nabla X_t \;+\; f\Bigr)\,dt
  + \sigma dW_t,
\end{align}
with  $(t,x)\in[0,T]\times\mathbb T^2, 
X_0=\chi_0\in L^2_0(\mathbb T^2)$. Here we take $U[X_t]\coloneqq\nabla^\perp(-\Delta)^{-1}X_t$ with $\nabla^\perp\coloneqq(-\partial_{x_2},\partial_{x_1})$ where $\partial$ denotes the partial derivative of the spatial variable (i.e., $x =(x_1, x_2)$). The parameter $\nu>0$ is the viscosity; $f$ is a deterministic vorticity forcing.
Similarly, we generate the samples and ground truth solutions following \citep{salvi2022neural}. We train and evaluate our model on 64$\times$64 grid, and we also show the results on 16$\times$16 grid in Appendix~\ref{append:NS_equation}.

We present learning outcomes in Table~\ref{tab:ns_results} and Figure~\ref{fig:NS_results}, following the qualitative protocol used in NSPDE \citep{salvi2022neural}. $\mathcal F$-SPDENO attains the lowest error among all competitors in  Table~\ref{tab:ns_results}. Compared to the strongest baseline NSPDE, the error is reduced from $0.040$ to $0.037$ on $W \mapsto X$ (about $7.5\%$ relative improvement), and more substantially from $0.047$ to $0.031$ on $(X_0,W) \mapsto X$ (about $34\%$ relative improvement).
This indicates that our model better captures the joint effect of the random forcing and varying initial conditions.
We also observe that deterministic operator learners such as FNO already perform competitively on $W \mapsto X$, but degrade notably when the initial condition varies (from $0.051$ to $0.073$).
By contrast, $\mathcal F$-SPDENO consistently improves over FNO in both settings (from $0.051$ to $0.037$ and from $0.073$ to $0.031$), showing that enriching FNO with WCE-based noise features is crucial for handling stochastic SPDEs.
In contrast to iterative inference methods like NSPDE, our framework first uses WCE to convert the stochastic forcing into static features. 
This enables a single FNO to learn the global solution operator in a one-shot forward pass, while still achieving superior accuracy.

\vspace{-6pt}
\paragraph{Constrain on $\alpha$}
In practice, the infinite chaos index set $\mathcal J$ in Eq.~\eqref{eq:sde_wce} is truncated by limiting the number of Brownian components ($I$), temporal basis modes ($J$), and the total Hermite order ($K$), yielding
\begin{align}\label{eq:J_IJK}
\mathcal J_{I,J,K}
:=
\Bigl\{
\alpha=(\alpha_{ij})_{i,j\in\mathbb N}\in \mathbb N_0^{\mathbb N\times\mathbb N}
:\ \alpha_{ij}=0\ \notag \\\text{if } i>I\ \text{or } j>J,\ \ |\alpha|\le K
\Bigr\}
\ \subset\ \mathcal J, \notag 
\end{align}
where $|\alpha|\coloneqq \sum_{i,j}\alpha_{ij}$. The condition $|\alpha|\coloneqq \sum_{i,j}\alpha_{ij} \leq K$ is \textit{crucial} as it retains chaos indices that grow combinatorially \citep{neufeld2024solving}. For non-singular dynamics such as $\boldsymbol{\Phi}^4_1$ and the 2D NS equation, we find that restricting to low-order $K\le 2$ is sufficient in practice, without explicitly modeling higher-order information of the driving Brownian motion. In contrast, for singular SPDEs such as $\boldsymbol{\Phi}^4_2$ \citep{li2025spdebench}, higher-order chaos components are expected to play an essential role. 
Since addressing such singular regimes requires additional analytical and modeling considerations, we leave the extension of WCE-based neural operators to this setting for future work.

\begin{figure*}[t]
    \centering
    \includegraphics[width=1\linewidth]{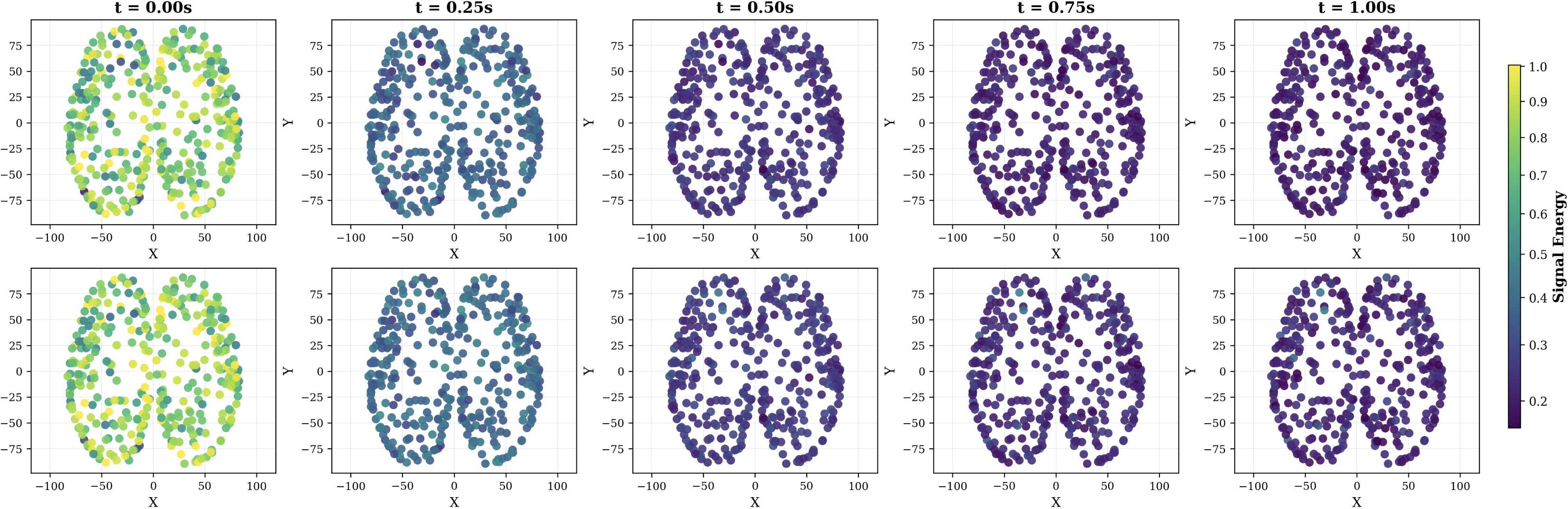}
    \caption{{Interpolation from TSBM (Top line,  WD=$9.51\pm0.12$) and $\mathcal G$-SDENO (Bottom line,  WD=$9.60\pm0.09$) on brain signals, both models are trained in 5 runs \citep{yang2025topological}.}}
    \label{fig:brain}
\end{figure*}
\begin{figure}
    \centering
    \includegraphics[width=1\linewidth]{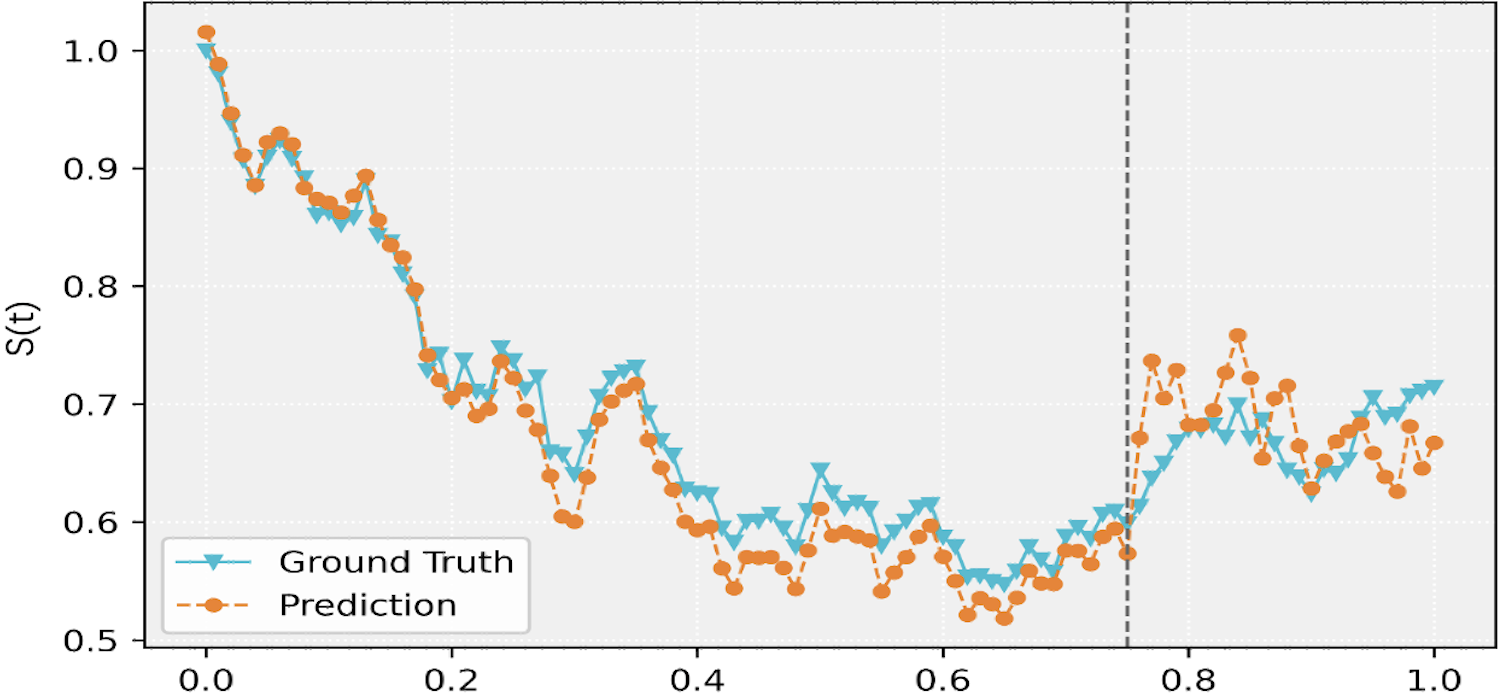}
    \caption{Extrapolation performance on the Heston model.}
    \label{fig:extrapolation}
\end{figure}

\subsection{Approximation on SDE Solutions}\label{exp:sdes}
\paragraph{Diffusion One-step Sampler}
We now turn our attention to applying our model to SDEs. As mentioned, a key advantage of the NO method is that it enables one-shot evaluation. 
This naturally makes SDENO a one-step sampler for diffusion generative models \citep{ho2020denoising,song2020score}. In this experiment, we apply our $\mathcal U$-SDENO on pretrained DDPM models on CIFAR10 and CelebA-HQ \footnote{Model id :\url{google/ddpm-cifar10-32} and \url{google/ddpm-celebahq-256}.}. Specifically, the canonical DDPM reverse SDE takes the form
\begin{equation}
dX_t = \!\!\Bigl[-\tfrac{1}{2}\,\beta(t)\,X_t - \beta(t)\,\nabla_{x}\log p_t(X_t)\Bigr]dt 
+ \sqrt{\beta(t)}\,d\overline W_t. 
\notag 
\end{equation}
At each reverse step $t$, we sample $\widehat X_t$ from pretrained models, and compute the DDPM posterior mean $\mu_t$ and variance $\sigma_t^2$ from the fixed noise schedule $\{\beta_s\}_{s=1}^T$, and then draws $X_{t-1}=\mu_t+\sigma_t z_t$ with $z_t\sim\mathcal N(0,I)$ (no noise at $t{=}0$).
Operationally, $z_t$ realizes the reverse-time increment $d\overline W_t\approx \sqrt{\Delta t}\,z_t$. We sample $d\overline{W}$ trajectories followed by the previous work in \citep{zheng2023fast}.

Figure~\ref{fig:sampler_CIFAR10} shows the generated images of $\mathcal U$-SDENO at the last prediction step compared with the pretrained DDPM outputs on CIFAR-10, and unconditional samples on CelebA-HQ. One can see that $\mathcal U$-SDENO reconstructs the images with high fidelity and produces unconditional samples of comparable quality, i.e., FID-50K of our model is 4.89 compared to the pretrained model 3.17 in CIFAR10. We provide a more detailed quantitative comparison between our model and other baselines in the Appendix \ref{append:diffusion_one_step}. Unlike standard diffusion models, which learn the score fields and integrate the reverse SDE step by step, SDENO learns a solution operator mapping a full noise trajectory $d\overline W_{1:T}$ directly to the final sample in a single forward pass. At sampling time, we only need to draw a Gaussian noise trajectory and evaluate SDENO once, without repeatedly invoking the large UNet denoiser. We finally note that the goal of this experiment is therefore not to compete with specialized one-step diffusion schemes, e.g., consistency models \citep{song2023consistency}, but to demonstrate that the same WCE-based NO framework can also handle high-dimensional SDEs in the image domain.

\paragraph{Topological Interpolation}
In this experiment, we show SDENO also possesses strong interpolation power (i.e., accuracy over intermediate steps) on the graph domain. To achieve this goal, we pretrain a topological Schrodinger bridge matching (TSBM) model \citep{yang2025topological} on cortical fMRI data and train SDENO match high-energy brain signals to their aligned low-energy states \citep{van2013wu}.
The brain graph is constructed, followed by \citep{glasser2016multi}.
The training of TSBM is proceeds by an alternating scheme, with the forward and backward processes governed by the graph stochastic heat diffusion equation $dX_t = -cLX_tdt + g_tdW_t$, where $L$ is the normalized graph Laplacian. Fixing $X_T \sim p_T$ (prior) and $X_0 \sim p_0$ (posterior), TSBM seeks to fit two (graph) neural networks $\phi_\sharp$ and $\phi_\Box$ for forward and backward SDEs for brain signals,
\begin{align}
    dX_t \!\!= \!\!(-cLX_t + g_t^2\nabla_x\log\phi_\sharp(t)(X_t))dt +g_t dW_t.
\end{align}
Here, $\nabla_x$ is the gradient of the feature vectors and the backward SDE can be obtained by replacing $\phi_\sharp$ with $\phi_\Box$. TSBM is then evaluated by the 1-Wasserstein distance metric (the lower the better). 
We only sample the noise of the trained TSBM forward process, and the propagators are computed by inputting the positional encoding of time steps into an MLP. The results are presented in Figure~\ref{fig:brain}, where both TSBM and $\mathcal G$-SDENO show similar interpolation performance on all intermediate times. Different from $\mathcal U$-SDENO, in this experiment, the propagator is computed outside the GNN model, and the final node features is obtained by the inner product between GNN and MLP outputs.

\paragraph{Extrapolation}
We further explore the model's performance under extrapolation settings. We consider the popular 
financial Heston model \citep{de2018machine} 
\begin{align}
    &dS_t = \mu S_t dt + \sqrt{V_t}S_tdW^S_t, \notag \\
    &dV_t = \kappa (\theta -V_t)dt + \zeta \sqrt{V_t} dW^V_t, \,\,\, d\langle W^S,W^V\rangle_t = \rho dt, \notag 
\end{align}
where \(S_t>0\) is the asset price and \(V_t\ge 0\) is the instantaneous variance with $\sqrt{V_t}$ known as volatility, and \(\rho\in[-1,1]\) represents the correlation between Brownian motions $W^S$ and $W^V$. To learn the propagators for Heston, 
after we simulate $W_t^S$, we obtain $W_t^V$ by letting $W_t^V = \rho dW^S_t + \sqrt{1-\rho^2} dW^2_t$, where $dW^2_t$ is another Wiener increment that is independent of $dW^S_t$. Then we concatenate their individual ($\xi^S$, $\xi^V$) and crossed Wick features ($\xi^S\xi^V$) and feed it to one additional MLP whose output is merged with estimated propagators by inner product.
To achieve the extrapolation setting,  we first sample 500 paths with 100 steps with $t \in [0,1]$, and we feed our model the first 75 steps for training, whereas 100 steps for evaluation. The results are in Figure~\ref{fig:extrapolation}, one can see that our model not only fits the training data well but also demonstrates strong extrapolation power into the unseen future. This suggests a promising potential for SDENO in forecasting applications.


\subsection{Manifold SDEs}

Finally, we test our model on manifold SDEs. We pretrain the Riemannian Score-Based Generative Model (RSGM) \citep{de2022riemannian} on the Flood dataset \citep{Brakenridge2016}.
Let $X\in\mathbb S^2$, we consider the forward SDE as manifold Langevin dynamics,
$dX_t = -\tfrac12 \nabla U(X_t)\,dt + dW_t^{\mathcal M}$, where $\nabla$ is the Riemannian gradient.
Setting $U(x)\equiv 0$ yields intrinsic Brownian motion on the compact manifold (i.e., $dX_t=dW_t^{\mathcal M}$), and the time-reversed diffusion becomes
$dX_t=\nabla\log p_{T-t}(X_t)\,dt+dW_t^{\mathcal M}$.
After pretraining, we sample $dW^{\mathcal M}$ trajectories on the tangent bundle using the standard geodesic random walk (GRW) sampler \citep{de2022riemannian}.
We then parallel-transport the recorded $dW^{\mathcal M}$ trajectory to a fixed reference plane to obtain Wick features.
In this experiment, we build SDENO with two separate MLPs: one provides additional encoding on Wick features and the other approximates the ODE solutions.
We apply a retraction (via the exponential map) to project both predictions and ground truth back onto $\mathbb S^2$.
Finally, the model is trained via the MSE loss between trajectories on the tangent space. We note that RSGM is pre-trained with its original likelihood-based (score-matching) objective as in \citet{de2022riemannian}, whereas our manifold SDENO is trained by minimizing an MSE loss on the tangent space.

\begin{table}[h]
  \centering
  \caption{NLL on the Flood dataset over 5 runs (lower is better).}
  \label{tab:manifold_SDE_nll}

  \setlength{\tabcolsep}{14pt}
  \renewcommand{\arraystretch}{1.5}
  \begin{tabular}{lc}
    \toprule
    \rowcolor[HTML]{EFEFEF}
    \textbf{Method} & NLL \\
    \midrule
    Mixture of Kent \citep{peel2001fitting}   & 0.73$\pm$0.07 \\
    RCNF \citep{mathieu2020riemannian}        & 1.11$\pm$0.19 \\
    Moser Flow \citep{rozen2021moser}         & 0.57$\pm$0.10 \\
    RSGM \citep{de2022riemannian}             & 0.49$\pm$0.20 \\
    \midrule
    Manifold-SDENO (ours)                     & 0.51$\pm$0.12 \\
    \bottomrule
  \end{tabular}
\end{table}

For evaluation, however, we place both models on equal footing by measuring the same NLL between their generated trajectories and the reference flood data, yielding $\mathrm{NLL} = 0.49 \pm 0.20$ for RSGM and $\mathrm{NLL} = 0.51 \pm 0.12$ for our manifold SDENO.

\begin{wrapfigure}{r}{0.45\linewidth}
  \vspace{-20pt}
  \centering
  \scalebox{0.85}{
  \includegraphics[width=1\linewidth]{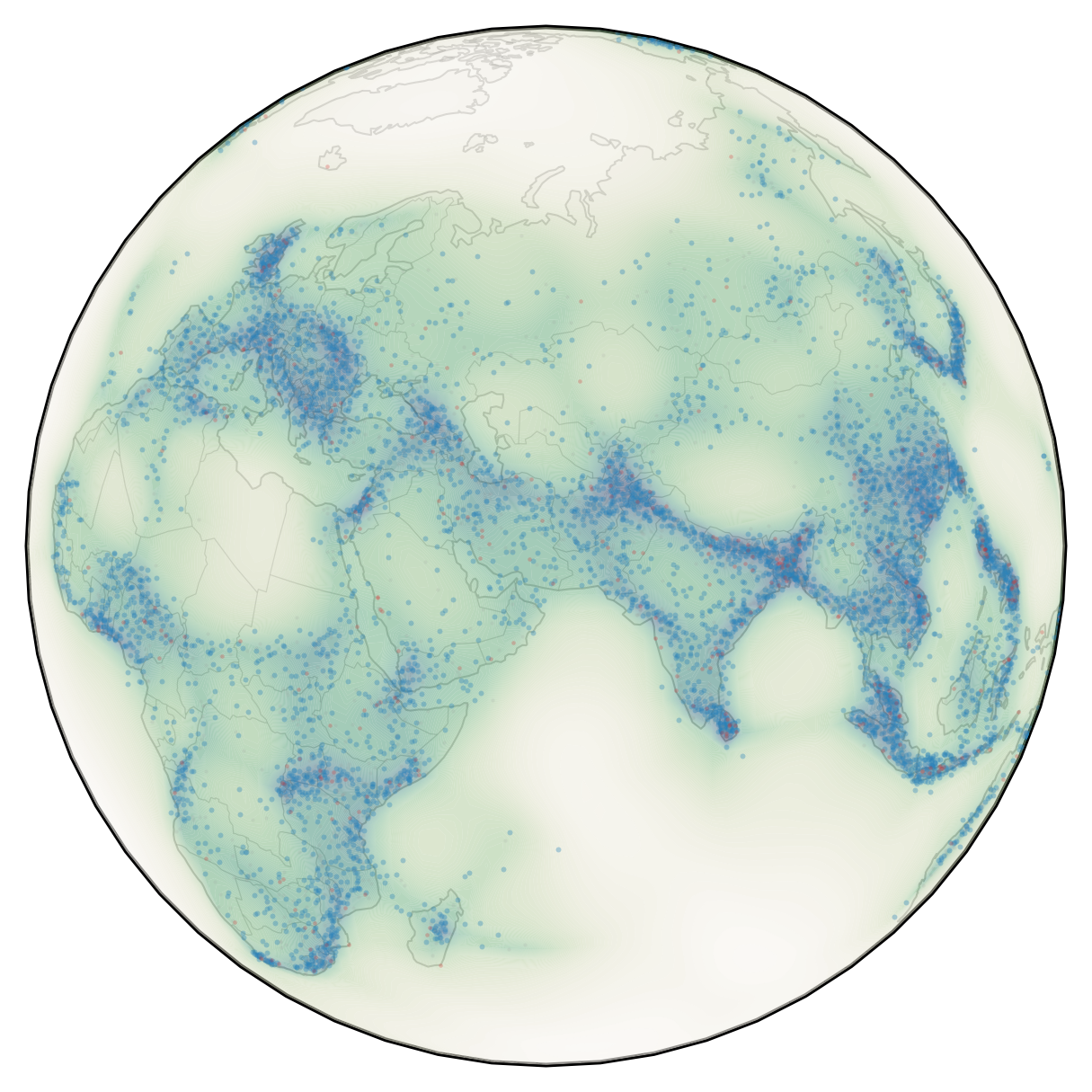}}
  \caption{Flood event prediction on $\mathbb S^2$.}
  \label{fig:manifold_SDE_vis}
  \vspace{-10pt}
\end{wrapfigure}

Figure~\ref{fig:manifold_SDE_vis} shows a representative sample of SDENO generations on $\mathbb S^2$, and Table~\ref{tab:manifold_SDE_nll} summarises the NLL comparison with baseline methods. In Appendix~\ref{append:wce_manifold} (Figure~\ref{fig:manifold_comparison}), we visualize the prediction differences between SDENO and RSGM.

\section{Conclusion}
In this work, we presented a family of NO models for solving SDEs and SPDEs that are built on classical Wiener–chaos expansions of their solutions. Our approach projects the driving noise onto orthonormal Wick–Hermite features and learns deterministic propagator functions parameterised by neural operators, so that full trajectories can be reconstructed from noise in a single forward pass. Building on existing WCE results, we made the associated propagator systems for multi-dimensional SDEs and semilinear SPDEs explicit and used them to motivate our architectures, and we demonstrated competitive accuracy and broad applicability across classical SPDE benchmarks, diffusion one-step sampling, graph interpolation, coupled Heston extrapolation, and manifold SDEs, while retaining the one-shot evaluation advantage of neural operators.

\clearpage 
\bibliography{ref}
\bibliographystyle{icml2026}

\newpage
\appendix
\onecolumn

\section{Related Works}
\paragraph{Wiener Chaos Expansion}
Wiener chaos expansion (WCE), also known as polynomial chaos expansion, is a classical technique in stochastic analysis that represents random processes via an orthogonal polynomial basis. Originating from Norbert Wiener’s homogeneous chaos in 1938 and the Cameron–Martin expansion in 1947 \citep{wiener1938homogeneous,cameron1947orthogonal}. In the initial stage, WCE was first applied to the field of stochastic finite element analysis \citep{ghanem2003stochastic} and extended to the theory of probability distributions, known as Wiener–Askey polynomial chaos \citep{xiu2002wiener}. WCE allows one to expand the solution of SDE or SPDE in terms of polynomial functionals with deterministic coefficients, known as propagators. Spectral Galerkin methods based on WCE have been successfully applied to propagate uncertainty in complex systems, e.g., efficiently solving stochastic Burgers and Navier–Stokes equations driven by random forcing \citep{hou2006wiener}. To handle strong nonlinearity or long-time integration, advanced variants, such as multi-element polynomial chaos, decompose the random input domain and build local expansions in each element, thereby improving convergence for problems with discontinuities or large perturbations \citep{wan2006multi}. In recent years, there is a growing interest in combining WCE with machine learning techniques. For instance, neural chaos methods replace the fixed orthogonal polynomials with trainable neural network basis functions to learn an optimal expansion from data \citep{sharma2025polynomial}. Likewise, WCE has been employed as a data-driven operator-learning approach in scientific machine learning, achieving competitive accuracy in learning solution maps of PDEs while providing built-in uncertainty quantification \citep{sharma2024physics}. These developments demonstrate how WCE has evolved from a purely theoretical construct into a versatile computational tool for both numerical uncertainty quantification and modern ML-based modeling.

\paragraph{SPDE and SDE Solvers}
Classical approaches for solving SDEs and SPDEs build on established numerical discretization methods such as finite difference schemes, finite element methods, and spectral Galerkin approximations \citep{platen2010numerical,orszag1971numerical}. Monte Carlo simulation is another cornerstone, often employed to handle stochastic inputs or to estimate statistical properties of the solution \citep{jentzen2009numerical}. These techniques have been extensively validated on canonical models: for example, finite difference methods reliably capture solutions and sensitivities in financial SDE benchmarks like the Heston stochastic volatility model \citep{milstein2005numerical}, while spectral Galerkin (e.g. polynomial chaos) methods and fine-grid CFD solvers have been applied to SPDEs such as turbulent Navier–Stokes equations with random forcing \citep{abraham2018spectral}. These learning approaches show promise (often alleviating the curse of dimensionality or enabling inverse analyses), but the well-established numerical schemes above remain the standard reference for accuracy and robustness in solving stochastic (partial) differential equations.

\paragraph{Neural Operators, and NO for SPDE and SDE}
generalize neural networks to learn mappings between infinite-dimensional function spaces, enabling discretization-invariant (resolution-generalizable) solution of parametric PDEs \citep{kovachki2023neural}. Foundational frameworks like DeepONet and the Fourier Neural Operator (FNO) demonstrated this paradigm by accurately learning nonlinear solution operators for diverse PDE families \citep{lu2019deeponet,li2020fourier}. These models learn an entire family of equations rather than a single instance, with FNO achieving zero-shot super-resolution across mesh resolutions. Building on these advances, recent works have extended neural operators to stochastic domains. For SPDEs, specialized architectures incorporate the effects of random forcing into the operator learning process: for example, a Neural SPDE model can parameterize solution operators depending simultaneously on the initial condition and a realization of the noise \citep{salvi2022neural}, and the Neural Operator with Regularity Structure (NORS) leverages Hairer’s regularity structure theory to handle the low regularity of SPDE solutions \citep{gong2023deep,hu2022neural}. In the SDE setting, SDE-ONet augments the DeepONet architecture with polynomial chaos expansions to efficiently learn solution operators for stochastic differential equations \citep{eigel2024functional}. These developments illustrate the growing capability of neural operators to serve as efficient surrogates for both deterministic and stochastic systems, while preserving their core advantages of infinite-dimensional approximation and mesh-independent generalization. In recent years, machine learning-based solvers have also emerged – e.g., physics-informed neural networks and GAN-driven models that learn solution mappings – to tackle high-dimensional or data-driven SDE/SPDE problems \citep{li2022dynamic}.

\section{Formulation details and Proofs for Section \ref{sec:preliminary}}\label{append:SPDE,SDE,Solution}
In this section, we provide a detailed formulation of the concepts in Section \ref{sec:preliminary} and some conclusions regarding the uniqueness of their solutions.\textit{ Several results are standard and included only for completeness}; for these, we give brief proof sketches and refer the readers to introductions of SDEs in \cite{oksendal2003stochastic} and SPDEs in \cite{holden1996stochastic,liu2015stochastic}.

\subsection{A Thorough Formulation of SPDE and SDE}
To support the Appendix, we present full SDE and SPDE formulations below, supplementing the SDE \eqref{eq:SDE_initial} and SPDE \eqref{eq:spde_initial}. Similar, to align with previous works \citep{neufeld2024solving,huschto2014solving,eigel2024functional}, we adopt the following notations for self-completeness.

Given $T > 0$ and filtered probability space $(\Omega, \mathcal F, \mathbb F,  \mathbb P)$, we consider the semi-linear SPDE, in the following form: 
\begin{align}
    &dX_t   = (AX_t + F(t, \cdot , X_t))dt + B(t, \cdot, X_t) dW_t, \quad X_0 = \chi_0 \in \mathcal H. 
\end{align}
Throughout this work, the symbol $d$ in stochastic integrals such as $\int_0^t(\cdot)\,dW_s$ denotes the It\^o differential, though our definition and conclusions can be smoothly transferred to Stratonovich form. The solution of \eqref{eq:spde_initial} is known as a stochastic process $X: [0,T] \times \Omega \to \mathcal H$ where $\mathcal H$ is a separable Hilbert Space $(\mathcal H, \langle \cdot, \cdot \rangle_\mathcal H)$ with initial value $\chi_0 \in \mathcal H$ usually deterministic. 
We also denote $Q \in L_1(\widetilde{\mathcal H};\widetilde{\mathcal H})$ where $L_1(\widetilde{\mathcal H};\widetilde{\mathcal H})$  is a vector subspace of non-negative self-adjoint nuclear operators, i.e., $L_1(\widetilde{\mathcal H};\widetilde{\mathcal H}) \subseteq L(\widetilde{\mathcal H};\widetilde{\mathcal H})$ the bounded vector space of linear operators in $\widetilde{\mathcal H}$\citep{neufeld2024solving}. 
A process $W\coloneqq (W_t)_{t\in [0,T]}:[0,T] \times \Omega \to \widetilde{\mathcal H}$ is called a Q-Brownian if $W_0 = 0 \in {\mathcal H}$ and $W$ is with continuous sample path, i.e., for $t \in [0, T]$, the function $t \to W_t (\omega)$ is continuous for almost all $\omega \in \Omega$ and for any two time points $t_1,t_2 \in [0,T]$ and $t_2 > t_1$,  $W_{t_2} - W_{t_1}$ is a centered Gaussian random variable with covariance operator $(t_2 -t_1)Q \in  L_1(\widetilde{\mathcal H};\widetilde{\mathcal H})$. 
We further let the linear operator $A:\mathrm{dom}(A) \subseteq \mathcal H\to \mathcal H$,  operator $F: [0,T]\times \Omega \times \mathcal H \to \mathcal H$, and $B: [0,T]\times \Omega \times \mathcal H \to L_2(\widetilde{\mathcal H}_0; \mathcal H)$ where $\widetilde{\mathcal H}_0 \coloneqq Q^{1/2}\widetilde{\mathcal H} \subset \widetilde{\mathcal H}$ is the reproducing kernel Hilbert space associated with Q-Brownian motion $W$, and $L_2(\widetilde{\mathcal H}_0; \mathcal H)$ is the space of Hilbert–Schmidt operators.

We now recall the mild solution of SPDE. One can call an $\mathbb F$-predictable \citep{da2014stochastic} process $X: [0,T] \times \Omega \to \mathcal H$ a \textit{mild solution} of \eqref{eq:spde_initial} if $\mathbb P[\int_0^T\|X_t \|_\mathcal H^2 dt < \infty] = 1$ and for every $t\in [0,T]$ it holds that
\begin{align}
    X_t = S_t \chi_0 + \int_0^tS_{t-s} F(s,\cdot, X_s)ds + \int_0^t S_{t-s} B(s,\cdot, X_s)dW_s. \,\,\, \mathbb P,a.s.   
\end{align}
where $S = e^{tA}: \mathcal H \to \mathcal H$ refers as a $C_0$(strongly continuous)-semigroup generated by the linear operator $A$. In terms of the formulation of SDEs. One can set $A = 0$, and both $\mathcal H$ and $\widetilde{\mathcal H}$ are finite-dimensional vector space, e.g., $\mathbb R^d$ and $\mathbb R^m$, respectively, the above equation reduces to 
\begin{align}
    dX_t = F(t, X_t)dt + B (t, X_t) dW_t, \quad X_0 = x_0 \in \mathbb R^d,
\end{align}
which is known as It\^o's SDE, with $W_t$ as a $m$-dimensional Brownian motion with covariance $Q \in \mathbb R^{m\times m}$, i.e., $\mathbb E [(W_{t_2}- W_{t_1})(W_{t_2}- W_{t_1})^\top ] = (t_2 - t_1)Q$, and $F(t, \cdot): \mathbb R^d \to \mathbb R^d$, $B(t, \cdot) : \mathbb R^d \to \mathbb R^{d\times m}$ are known as the drift and diffusion term of the SDE, respectively. Without loss of generality, in this work, we only consider the standard Brownian motion, i.e., $Q = I$, and our conclusions can be smoothly extended to the general case. Then the strong solution of \eqref{eq:SDE_initial} is
\begin{align}
    X_t = x_0 + \int_{0}^t F(s, X_s)ds + \int_0^t B(s, X_s)dW_s.
\end{align}


\subsection{Existence and Uniqueness of SPDE, SDE Solutions}
Ensuring SDEs and SPDEs considered in the body admit unique solutions is necessary to enable NO-approximation of their solutions. We begin with the SPDE case, assuming the following. 

\begin{assume}\label{assume:spde_coeffs}
We assume the following conditions hold:
\begin{enumerate}
    \item (Semigroup generation) The operator $A \colon \mathrm{dom}(A) \subseteq \mathcal H\to \mathcal H$ is the infinitesimal generator of a $C_0$-semigroup $(S_t)_{t\in[0,T]}$ on $\mathcal H$.
    \item (Drift measurability) The map $F \colon [0,T]\times \Omega \times \mathcal H \to \mathcal H$ is $(\mathcal{P}_T \otimes \mathcal{B}(\mathcal H)) / \mathcal{B}(\mathcal H)$-measurable, where $\mathcal{P}_T$ is the predictable $\sigma$-algebra on $[0,T]\times\Omega$.
    \item (Diffusion measurability) The map $B \colon [0,T]\times \Omega \times \mathcal H \to L_2(\widetilde{\mathcal H}_0; \mathcal H)$ is $(\mathcal{P}_T \otimes \mathcal{B}(\mathcal H)) / \mathcal{B}(L_2(\widetilde{\mathcal H}_0; \mathcal H))$-measurable.
    \item (Lipschitz continuity and linear growth) There exists a constant $C_{F,B} > 0$ such that for all $t\in[0,T]$, $\omega\in\Omega$, and $x, y \in \mathcal H$, the following inequalities hold:
    \begin{align*}
        \|F(t,\omega,x) - F(t,\omega,y)\|_{\mathcal H}^2 + \|B(t,\omega,x) - B(t,\omega,y)\|_{L_2(\widetilde{\mathcal H}_0; \mathcal H)}^2 &\le C_{F,B}^2 \|x-y\|_{\mathcal H}^2, \\
        \|F(t,\omega,x)\|_{\mathcal H}^2 + \|B(t,\omega,x)\|_{L_2(\widetilde{\mathcal H}_0; \mathcal H)}^2 &\le C_{F,B}^2 (1+\|x\|_{\mathcal H}^2).
    \end{align*}
    \item The initial condition $\chi_0 \in \mathcal H$ is deterministic.
\end{enumerate}
\end{assume}
{With the assumptions above, we now show the classic results on the existence and uniqueness of the SPDE mild solution.}

\begin{prop}[SPDE Solution Existence and Uniqueness \citep{neufeld2024solving,da2014stochastic}]\label{prop:spde_mild_solution_existence}
    Let $p \in [1, \infty)$ and assume that Assumption \ref{assume:spde_coeffs} holds. Then, the \eqref{eq:spde_initial} has a unique $\mathbb{F}$-predictable mild solution $X \colon [0,T]\times \Omega \to \mathcal H$. Furthermore, there exists a constant $C_{p,T}$, depending only on $p$, $C_{F,B}$, $T$, and $C_S \coloneqq \sup_{t\in [0,T]} \|S_t\|_{L(\mathcal H; \mathcal H)}$, such that the following moment bound holds:
    \begin{align}
        \mathbb E\Bigl[\sup_{t\in [0,T]}\|X_t\|_\mathcal H^p\Bigr] \leq C_{p,T} (1+\|\chi_0\|_\mathcal H^p) < \infty.
    \end{align}
    Moreover, the process $X$ admits a continuous modification.
\end{prop}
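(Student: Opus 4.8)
The plan is to prove Proposition~\ref{prop:spde_mild_solution_existence} by a Banach fixed-point (Picard) argument applied to the mild-solution map on a suitable space of predictable processes. First I would fix $p$ large (say $p>2$, with the remaining cases following from the embedding $L^p(\Omega)\hookrightarrow L^q(\Omega)$ for $q\le p$ on the probability space) and introduce the Banach space $\mathcal{H}_p$ of $\mathbb F$-predictable processes $X\colon[0,T]\times\Omega\to\mathcal H$ with continuous paths, equipped with the norm $\|X\|_{\mathcal{H}_p}^p:=\mathbb E[\sup_{t\in[0,T]}\|X_t\|_{\mathcal H}^p]$. On this space I would define the solution operator
\begin{align*}
(\mathcal{K}X)_t := S_t\chi_0 + \int_0^t S_{t-s}F(s,\cdot,X_s)\,ds + \int_0^t S_{t-s}B(s,\cdot,X_s)\,dW_s,
\end{align*}
so that a mild solution of \eqref{eq:spde_initial} is precisely a fixed point of $\mathcal{K}$.

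The next step is to show $\mathcal{K}$ maps $\mathcal{H}_p$ into itself. The term $S_t\chi_0$ is controlled by $C_S\|\chi_0\|_{\mathcal H}$, and the deterministic (Bochner) integral by H\"older's inequality together with the linear-growth bound of Assumption~\ref{assume:spde_coeffs}. The delicate term is the stochastic convolution $\int_0^t S_{t-s}B\,dW_s$: because the integrand depends on $t$ through $S_{t-s}$, it is \emph{not} a martingale in $t$, so the Burkholder--Davis--Gundy inequality cannot be applied directly to bound its supremum. I would therefore use the factorization method, based on the identity $\int_s^t(t-r)^{\alpha-1}(r-s)^{-\alpha}\,dr=\pi/\sin(\pi\alpha)$ for $\alpha\in(0,1/2)$, to rewrite the convolution as a fractional integral of an auxiliary process $Y_r:=\int_0^r(r-s)^{-\alpha}S_{r-s}B(s,\cdot,X_s)\,dW_s$; one then bounds $\sup_t$ by the $L^p$-norm of $Y$ via Young's inequality for the fractional-integral operator (the point that forces $p$ large enough, namely $p>1/\alpha$) and estimates the $L^p$-moments of $Y$ using the Hilbert--Schmidt linear-growth bound on $B$. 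This simultaneously yields the continuous modification, since the factorization gives H\"older continuity of the stochastic convolution while $t\mapsto S_t\chi_0$ and the Bochner integral are continuous by strong continuity of the semigroup.

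For uniqueness and the contraction property, I would repeat these estimates on the difference $\mathcal{K}X-\mathcal{K}Y$, now invoking the Lipschitz bound of Assumption~\ref{assume:spde_coeffs} so that the increments of $F$ and $B$ are dominated by $C_{F,B}\|X_s-Y_s\|_{\mathcal H}$. Introducing the weighted norm $\|X\|_\beta^p:=\sup_{t\in[0,T]}e^{-\beta t}\,\mathbb E[\sup_{s\le t}\|X_s\|_{\mathcal H}^p]$, which is equivalent to $\|\cdot\|_{\mathcal{H}_p}$ for each fixed $\beta$, and choosing $\beta$ sufficiently large turns $\mathcal{K}$ into a strict contraction, so the Banach fixed-point theorem delivers a unique mild solution. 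The moment bound then follows by applying the same growth estimates to the fixed point itself and closing a Gronwall inequality for $\phi(t):=\mathbb E[\sup_{s\le t}\|X_s\|_{\mathcal H}^p]$, producing a constant $C_{p,T}$ depending only on $p$, $C_{F,B}$, $T$ and $C_S$. The main obstacle throughout is precisely the non-martingale nature of the stochastic convolution, which is exactly what the factorization method is designed to overcome; everything else reduces to careful but routine applications of H\"older, BDG and Gronwall.
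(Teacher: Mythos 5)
Your proposal is correct and follows exactly the strategy the paper itself points to: the paper omits the proof, citing da Prato--Zabczyk and describing it in one sentence as ``Banach fixed-point theorem on a space of stochastic processes combined with maximal inequalities for stochastic convolutions,'' which is precisely your Picard iteration with the factorization method supplying the maximal inequality for the non-martingale stochastic convolution. Your additional details (the weighted norm for contraction, the reduction of $p\in[1,\infty)$ to large $p$ via Jensen, and H\"older continuity from the factorization) are the standard and correct way to fill in that sketch.
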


\begin{proof}
{
The proof of this proposition is a cornerstone result in the theory of stochastic evolution equations and can be found in detail in standard literature, for instance, in \cite{da2014stochastic}. The complete argument relies on applying the Banach fixed-point theorem to an appropriate operator on a space of stochastic processes, combined with maximal inequalities for stochastic convolutions. Thus we omit the proof here. }
\end{proof}

Having shown the uniqueness guarantee for SPDEs, we now show a similar conclusion for SDEs.

\begin{assume}\label{assume:sde_coeffs}
The drift coefficient $F \colon [0,T] \times \mathbb{R}^d \to \mathbb{R}^d$ and the diffusion coefficient $B \colon [0,T] \times \mathbb{R}^d \to \mathbb{R}^{d \times m}$ are measurable functions for which constants $L > 0$ and $K > 0$ exist, such that the following conditions hold:
\begin{enumerate}
    \item (Global Lipschitz continuity) For all $t \in [0,T]$ and all $x, y \in \mathbb{R}^d$:
    \[
        \|F(t, x) - F(t, y)\| + \|B(t, x) - B(t, y)\|_F \le L \|x - y\|.
    \]
    \item (Linear growth) For all $t \in [0,T]$ and all $x \in \mathbb{R}^d$:
    \[
        \|F(t, x)\|^2 + \|B(t, x)\|_{\text{Fro}}^2 \le K^2 (1 + \|x\|^2).
    \]
\end{enumerate}
Here, $\|\cdot\|$ denotes the Euclidean norm for vectors and $\|\cdot\|_{\text{Fro}}$ denotes the Frobenius norm for matrices. The initial condition $x_0 \in \mathbb{R}^d$ is a deterministic vector.
\end{assume}

\begin{prop}[SDE Solution Existence and Uniqueness ]\label{prop:sde_strong_solution_existence}
    Let Assumption \ref{assume:sde_coeffs} hold. Then, the \eqref{eq:SDE_initial} admits a unique, $\mathbb{F}$-adapted, and continuous strong solution $X \colon [0, T] \times \Omega \to \mathbb{R}^d$. Furthermore, for any $p \in [1, \infty)$, there exists a constant $C_{p,T}$, depending only on $p$, $T$, and the constants $L$ and $K$ from Assumption \ref{assume:sde_coeffs}, such that the solution's moments are bounded:
    \begin{align}
        \mathbb{E}\Bigl[\sup_{t\in[0,T]}\|X_t\|^p\Bigr] \le C_{p,T}\bigl(1+\|x_0\|^p\bigr) < \infty.
    \end{align}
\end{prop}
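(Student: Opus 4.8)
The plan is to construct the strong solution by Picard iteration and to obtain both uniqueness and the moment bound from the same family of a priori estimates, following the classical argument (see \cite{oksendal2003stochastic}). I would fix $p \in [2,\infty)$ first (deducing the case $p \in [1,2)$ afterwards, since $[0,T]\times\Omega$ carries a finite measure and Jensen's inequality then reduces it to $p=2$), and work in the Banach space $\mathcal{S}^p_T$ of $\mathbb{F}$-adapted, continuous processes $Y$ equipped with the norm $\|Y\|_{\mathcal{S}^p}^p := \mathbb{E}\bigl[\sup_{t\in[0,T]}\|Y_t\|^p\bigr]$. Define the iterates $X^{(0)}_t \equiv x_0$ and
\[
X^{(n+1)}_t = x_0 + \int_0^t F(s, X^{(n)}_s)\,ds + \int_0^t B(s, X^{(n)}_s)\,dW_s .
\]
Each iterate is easily checked to remain in $\mathcal{S}^p_T$ using the linear growth condition.

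The heart of the argument is a recursive estimate on the increments $X^{(n+1)} - X^{(n)}$. Using the elementary inequality $(a+b)^p \le 2^{p-1}(a^p+b^p)$, Hölder's inequality on the drift (Bochner) integral, and the Burkholder--Davis--Gundy inequality to control the supremum of the stochastic integral by its quadratic variation, together with the global Lipschitz bound from Assumption~\ref{assume:sde_coeffs}, I would derive
\[
\mathbb{E}\Bigl[\sup_{s\le t}\|X^{(n+1)}_s - X^{(n)}_s\|^p\Bigr] \le C \int_0^t \mathbb{E}\Bigl[\sup_{r\le s}\|X^{(n)}_r - X^{(n-1)}_r\|^p\Bigr]\,ds ,
\]
for a constant $C$ depending only on $p$, $T$, and $L$. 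Iterating this bound produces a factor of type $C^n T^n/n!$, whose $p$-th root is summable, so $(X^{(n)})$ is Cauchy in $\mathcal{S}^p_T$. Its limit $X$ is $\mathbb{F}$-adapted and continuous, and passing to the limit inside both integrals (justified by $\mathcal{S}^p$-convergence and the Lipschitz continuity of $F,B$) shows that $X$ solves Eq.~\eqref{eq:SDE_initial}.

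Uniqueness follows from the identical estimate applied to the difference of two solutions $X,\tilde X$: one obtains $\mathbb{E}[\sup_{s\le t}\|X_s-\tilde X_s\|^p] \le C\int_0^t \mathbb{E}[\sup_{r\le s}\|X_r-\tilde X_r\|^p]\,ds$, and Grönwall's inequality forces the left-hand side to vanish identically on $[0,T]$. For the moment bound I would run the same Hölder/BDG argument on $X$ itself, but now invoking the \emph{linear growth} half of Assumption~\ref{assume:sde_coeffs} in place of the Lipschitz bound, which yields
\[
\mathbb{E}\Bigl[\sup_{s\le t}\|X_s\|^p\Bigr] \le C_{p,T}\bigl(1+\|x_0\|^p\bigr) + C\int_0^t \mathbb{E}\Bigl[\sup_{r\le s}\|X_r\|^p\Bigr]\,ds ,
\]
and a final application of Grönwall delivers the stated bound.

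The main obstacle is purely technical and concentrated in the stochastic-integral estimate: controlling the $p$-th moment of the supremum of the Itô integral uniformly in $t$ is exactly where the Burkholder--Davis--Gundy inequality is indispensable, since Doob's maximal inequality alone does not directly couple the supremum to the Hilbert--Schmidt (here Frobenius) norm of the diffusion. A secondary subtlety is verifying that the limit process is continuous and adapted and that $\int_0^t B(s,X^{(n)}_s)\,dW_s \to \int_0^t B(s,X_s)\,dW_s$, both of which are handled by $\mathcal{S}^p$-convergence combined with the Lipschitz property. Since, as noted in the text, this is a standard result included for completeness, I would present only these sketch-level estimates and refer the reader to \cite{oksendal2003stochastic} for the full details.
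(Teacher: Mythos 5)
Your proposal is correct and follows exactly the route the paper itself points to: the paper's proof is a one-paragraph citation of the classical Picard-iteration/contraction-mapping argument in \cite{oksendal2003stochastic}, and your sketch simply fills in the standard details (BDG for the stochastic-integral supremum, the $C^nT^n/n!$ summability, and Grönwall for uniqueness and the moment bound). No discrepancy to report.
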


\begin{proof}
This is a classical result in the theory of stochastic differential equations, with a detailed proof available \cite{oksendal2003stochastic}. The proof for existence and uniqueness is typically established by constructing a sequence of approximations via Picard iteration and showing that this sequence converges to a limit that is the unique solution. This is achieved by proving that the corresponding integral operator is a contraction mapping on a suitable Banach space of stochastic processes. 
\end{proof}

\section{Proofs and Clarifications for Conclusions in Section \ref{sec:wce_spde_sde}}\label{append:proof_wce_spde_sde}

\subsection{Proofs of the Conclusions in Section \ref{sec:Reconstruction of Brownian Motions}}

\paragraph{Proof of Lemma \ref{lem:reconstruct_brownian} (sketch).}
This is a classical result (see, e.g., \citet[Proposition 4.3]{da2014stochastic} or \citet[Section~2]{neufeld2024solving}); we briefly recall the main idea.
For a fixed component $W^{(i)}$, the indicator $\mathbf{1}_{[0,t]}$ admits the $L^2([0,T])$ expansion
$\mathbf{1}_{[0,t]}(s) = \sum_{j\ge 1} G_j(t)\,e_j(s)$, which yields
$W_t^{(i)} = \int_0^T \mathbf{1}_{[0,t]}(s)\,dW_s^{(i)} = \sum_{j\ge 1} G_j(t)\,\xi_{ij}$
by linearity and the Itô isometry.
The truncated processes $\widehat W^{(n,i)}_t = \sum_{j=1}^n G_j(t)\,\xi_{ij}$ converge in $L^2(\Omega)$ for each fixed $t$, and applying a maximal inequality (e.g., Doob or BDG) together with the summability assumption $\sum_j\|G_j\|_{C([0,T])}^2<\infty$ gives convergence in $L^2(\Omega;C([0,T]))$.
Almost sure uniform convergence then follows by a standard Borel–Cantelli argument; see the above references for details.

\paragraph{Proof of Lemma \ref{lem:reconstruct_q_brownian} (sketch).}
This is the natural extension of Lemma~\ref{lem:reconstruct_brownian} to a Q-Brownian motion on a Hilbert space and follows standard arguments based on the Karhunen–Loève expansion (see, e.g., \citep[Section~3.3]{da2014stochastic} and \citep[Section~2]{lototsky2017stochastic}).
Writing
\[
    W_t = \sum_{k\ge 1} \sqrt{\lambda_k}\,\beta_t^k f_k,
\]
with eigenpairs $(\lambda_k,f_k)$ of the trace-class covariance operator $Q$ and independent scalar Brownian motions $\beta^k$, we first truncate the spatial expansion and define $W_t^{(K)} = \sum_{k=1}^K \sqrt{\lambda_k}\,\beta_t^k f_k$.
Using Doob-type maximal inequalities for martingales and the summability of $(\lambda_k)$, one shows that
\[
    \mathbb{E}\Big[\sup_{t\in[0,T]}\|W_t - W_t^{(K)}\|_{\mathcal H}^2\Big] \longrightarrow 0 \quad \text{as } K\to\infty,
\]
so the spatial truncation error vanishes in $L^2(\Omega;C([0,T];\mathcal H))$.
For each fixed $k$, Lemma~\ref{lem:reconstruct_brownian} yields a temporal expansion $\beta_t^k = \sum_{j\ge 1} \xi_{kj} G_j(t)$ with truncated approximations $\widehat\beta_t^{(n,k)}$ that converge uniformly in $t$ in $L^2(\Omega)$ as $n\to\infty$.
Since only finitely many modes $k\le K$ are retained in $W^{(K)}$, the temporal truncation error
\[
    W_t^{(K)} - \widehat W_t^{(K,n)}
    = \sum_{k=1}^K \sqrt{\lambda_k}\,\bigl(\beta_t^k - \widehat\beta_t^{(n,k)}\bigr) f_k
\]
can be controlled by a finite sum of the one-dimensional errors and therefore also vanishes in $L^2(\Omega;C([0,T];\mathcal H))$ as $n\to\infty$.
Combining the spatial and temporal truncations and using standard arguments (e.g., Borel–Cantelli) yields the claimed almost sure convergence in $C([0,T];\mathcal H)$.

\subsection{Proofs and Clarifications of the conclusions in Section \ref{sec:chaos_expansion}}
{
In this section, we briefly recall standard properties of Wick polynomials that are used in Section~\ref{sec:chaos_expansion}; full details can be found in classical references on Wiener–It\^o chaos such as \citep{lototsky2017stochastic,luo2006wiener,neufeld2024solving}.

\paragraph{Wick polynomials.}
For convenience we restate Definition~\ref{def:wick}.

\begin{defn}[Wick polynomial; repeat of Definition~\ref{def:wick}] 
Let $\Xi = \{\xi_{mj}\}_{m,j \in \mathbb{N}}$ be the family of Gaussian random variables constructed in Definition~\ref{prop:basis_reconstruction}. 
The index $m$ corresponds to the spatial/component mode, while $j$ relates to the temporal basis. Let $\mathcal{J}$ be the set of all multi-indices $\alpha = (\alpha_{mj})_{m,j \in \mathbb{N}}$ with finite support. For any $\alpha \in \mathcal{J}$, the corresponding normalised Wick monomial is
\begin{align}
    \xi_\alpha(\omega) \coloneqq \frac{1}{\sqrt{\alpha!}} \prod_{m,j} h_{\alpha_{mj}} \! \left(\xi_{mj}(\omega)\right),
\end{align}
where $h_k$ is the $k$-th (probabilist's) Hermite polynomial,
$h_k(x) = (-1)^k e^{x^2/2} \frac{d^k}{dx^k} e^{-x^2/2}$, and 
$\alpha!\coloneqq\prod_{m,j}\alpha_{mj}!$. A finite linear combination of such monomials is called a Wick polynomial.
\end{defn}

We recall the basic orthogonality of Hermite polynomials with respect to the Gaussian measure.

\begin{prop}[Hermite polynomials]\label{prop:hermite_polynomial}
For a standard normal variable $Z \sim \mathcal{N}(0,1)$,
\begin{align}\label{eq:hermite_gaussian_measure}
    \mathbb{E}[h_n(Z)\,h_k(Z)] = n!\,\delta_{nk}.
\end{align}
\end{prop}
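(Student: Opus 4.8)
The plan is to prove the orthogonality relation directly from the Rodrigues formula that defines $h_k$, using repeated integration by parts against the standard Gaussian density. Write $\phi(x) = (2\pi)^{-1/2} e^{-x^2/2}$, so that $\mathbb{E}[h_n(Z)h_k(Z)] = \int_{\mathbb{R}} h_n(x)h_k(x)\phi(x)\,dx$, and by symmetry assume $k \le n$. The first step is to rewrite the defining identity $h_k(x) = (-1)^k e^{x^2/2}\frac{d^k}{dx^k}e^{-x^2/2}$ in the equivalent weighted form $h_k(x)\phi(x) = (-1)^k \frac{d^k}{dx^k}\phi(x)$, which is the key link between the Hermite polynomials and the derivatives of the Gaussian weight. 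Substituting this for the factor $h_k$ inside the integral yields $\int_{\mathbb{R}} h_n(x)\,(-1)^k\frac{d^k}{dx^k}\phi(x)\,dx$.

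Next I would integrate by parts $k$ times. Each step transfers one derivative from $\phi^{(j)}$ onto the polynomial $h_n$ and contributes a factor $(-1)$, while the boundary terms vanish at $\pm\infty$ because every derivative of $\phi$ is a polynomial times a decaying Gaussian. After $k$ steps the two sign factors cancel, leaving $\int_{\mathbb{R}} h_n^{(k)}(x)\,\phi(x)\,dx$. Using the derivative relation $h_n^{(k)}(x) = \frac{n!}{(n-k)!}h_{n-k}(x)$ (itself a consequence of the Rodrigues formula), this equals $\frac{n!}{(n-k)!}\,\mathbb{E}[h_{n-k}(Z)]$.

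Finally I would evaluate $\mathbb{E}[h_m(Z)] = \int_{\mathbb{R}} h_m(x)\phi(x)\,dx$. Applying the weighted Rodrigues identity once more gives $(-1)^m\int_{\mathbb{R}}\frac{d^m}{dx^m}\phi(x)\,dx = (-1)^m\bigl[\phi^{(m-1)}(x)\bigr]_{-\infty}^{\infty} = 0$ for $m \ge 1$, while the integral is $1$ for $m = 0$; that is, $\mathbb{E}[h_m(Z)] = \delta_{m0}$. Hence the expression reduces to $\frac{n!}{(n-k)!}\,\delta_{n-k,0} = n!\,\delta_{nk}$ under the assumption $k \le n$, which is the claim.

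The steps are elementary, and the only point requiring genuine care is the justification that the boundary terms in the repeated integration by parts vanish, which relies on the super-polynomial decay of $\phi$ and its derivatives; this is routine but is where a clean estimate is needed. As an alternative and a useful cross-check, one can instead use the exponential generating function $\sum_{n}\frac{t^n}{n!}h_n(x) = e^{tx - t^2/2}$: computing $\mathbb{E}\bigl[e^{sZ - s^2/2}e^{tZ - t^2/2}\bigr] = e^{st}$ by completing the square and then matching coefficients of $s^n t^k$ immediately gives $\mathbb{E}[h_n(Z) h_k(Z)] = n!\,\delta_{nk}$; in that route the analytic subtlety simply shifts to justifying the exchange of expectation and summation via Fubini or dominated convergence.
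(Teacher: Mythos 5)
Your argument is correct. Note, however, that the paper does not actually prove this proposition: it states the identity and defers to standard references (Lototsky--Rozovsky, Luo), so there is no in-paper proof to compare against. Your derivation --- rewriting the Rodrigues formula in the weighted form $h_k(x)\phi(x) = (-1)^k \phi^{(k)}(x)$, integrating by parts $k$ times with vanishing boundary terms, invoking $h_n^{(k)} = \tfrac{n!}{(n-k)!}\,h_{n-k}$, and finishing with $\mathbb{E}[h_m(Z)] = \delta_{m0}$ --- is the classical textbook route and is sound; the reduction to $n!\,\delta_{nk}$ under the WLOG assumption $k \le n$ is clean. The one step you lean on without proof is the differentiation identity $h_n' = n\,h_{n-1}$; it does follow from the Rodrigues formula (differentiating the weighted identity gives the recurrence $h_{n+1} = x h_n - h_n'$, from which $h_n' = n h_{n-1}$ follows by induction, or immediately from the generating function), but as stated it is an appeal to another standard fact rather than a self-contained deduction. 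Your generating-function cross-check via $\mathbb{E}\bigl[e^{sZ-s^2/2}\,e^{tZ-t^2/2}\bigr] = e^{st}$ is also correct and arguably the shortest complete proof, modulo the Fubini justification you correctly flag. Either route is a legitimate, self-contained substitute for the paper's citation.
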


This is a standard identity; see, for example, \citet[Chapter~1]{lototsky2017stochastic} or \citet[Section~2]{luo2006wiener}.

\begin{lem}[Orthonormality of Wick polynomials]\label{lem:wick_orth_orthornal}
The collection of Wick monomials $\{\xi_\alpha\}_{\alpha \in \mathcal{J}}$ forms an orthonormal system in the Hilbert space $L^{2}(\Omega, \sigma(\Xi), \mathbb{P})$, i.e.,
\[
    \mathbb{E}[\xi_\alpha \xi_\beta] = \delta_{\alpha\beta} \quad \text{for all } \alpha,\beta\in\mathcal J.
\]
\end{lem}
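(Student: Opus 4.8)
The plan is to reduce the computation of $\mathbb{E}[\xi_\alpha\xi_\beta]$ to a product of scalar Hermite integrals by exploiting the mutual independence of the family $\{\xi_{mj}\}$ established in Definition~\ref{prop:basis_reconstruction}, and then to invoke the one-dimensional orthogonality relation of Proposition~\ref{prop:hermite_polynomial}. First I would expand both Wick monomials from their definition and collect the factors indexed by the same pair $(m,j)$, writing
\begin{align}
\mathbb{E}[\xi_\alpha\xi_\beta]
= \frac{1}{\sqrt{\alpha!\,\beta!}}\,
\mathbb{E}\!\left[\prod_{m,j} h_{\alpha_{mj}}(\xi_{mj})\,h_{\beta_{mj}}(\xi_{mj})\right].
\end{align}
The key structural observation is that both $\alpha$ and $\beta$ lie in $\mathcal{J}$ and hence have finite support, and since $h_0\equiv 1$, all but finitely many factors in this product are identically equal to one. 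The product is therefore genuinely finite, so no convergence issue arises before taking the expectation.

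Next, because the $\{\xi_{mj}\}$ are mutually independent, the expectation of this finite product factorizes into a product of expectations, giving $\prod_{m,j}\mathbb{E}[h_{\alpha_{mj}}(\xi_{mj})\,h_{\beta_{mj}}(\xi_{mj})]$. Applying Proposition~\ref{prop:hermite_polynomial} to each marginal factor (each $\xi_{mj}\sim\mathcal N(0,1)$) yields $\mathbb{E}[h_{\alpha_{mj}}(\xi_{mj})h_{\beta_{mj}}(\xi_{mj})] = \alpha_{mj}!\,\delta_{\alpha_{mj}\beta_{mj}}$, so that
\begin{align}
\mathbb{E}[\xi_\alpha\xi_\beta]
= \frac{1}{\sqrt{\alpha!\,\beta!}}\prod_{m,j}\alpha_{mj}!\,\delta_{\alpha_{mj}\beta_{mj}}.
\end{align}

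Finally I would read off the two cases. If $\alpha\neq\beta$, then some component satisfies $\alpha_{mj}\neq\beta_{mj}$, the corresponding Kronecker factor vanishes, and the entire product is zero. If $\alpha=\beta$, every Kronecker factor equals one, $\prod_{m,j}\alpha_{mj}! = \alpha!$ holds by the definition of $\alpha!$, and the prefactor $\sqrt{\alpha!\,\beta!}=\alpha!$ cancels it exactly, leaving the value $1$. Hence $\mathbb{E}[\xi_\alpha\xi_\beta]=\delta_{\alpha\beta}$, which is precisely the claimed orthonormality. The only step requiring genuine care is the factorization: one must first justify that the a priori infinite product collapses to a finite one (via finite support of the multi-indices together with $h_0\equiv 1$) before invoking independence. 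Once that is secured, the remainder is routine bookkeeping of Kronecker deltas rather than a substantive analytic difficulty.
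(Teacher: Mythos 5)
Your proposal is correct and follows essentially the same route as the paper's own proof: factorize the expectation using independence of the $\xi_{mj}$, apply the scalar Hermite orthogonality relation $\mathbb{E}[h_n(Z)h_k(Z)]=n!\,\delta_{nk}$ to each factor, and read off the Kronecker deltas. Your explicit justification that the product is genuinely finite (via finite support of the multi-indices and $h_0\equiv 1$) is a small but welcome addition of rigor that the paper's sketch leaves implicit.
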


\begin{proof}[Sketch]
By Definition~\ref{def:wick} and the independence of the Gaussian variables $\{\xi_{mj}\}$, we can write
\[
\mathbb{E}[\xi_\alpha \xi_\beta]
= \frac{1}{\sqrt{\alpha!\,\beta!}}\,
    \prod_{m,j} \mathbb{E}\bigl[h_{\alpha_{mj}}(\xi_{mj})\,h_{\beta_{mj}}(\xi_{mj})\bigr].
\]
Each factor is computed using Proposition~\ref{prop:hermite_polynomial}, yielding
$\mathbb{E}[h_{\alpha_{mj}}(\xi_{mj})\,h_{\beta_{mj}}(\xi_{mj})]
= \alpha_{mj}!\,\delta_{\alpha_{mj},\beta_{mj}}$.
If $\alpha\neq\beta$, there exists at least one $(m,j)$ with $\alpha_{mj}\neq\beta_{mj}$, so the product vanishes.
If $\alpha=\beta$, then all Kronecker deltas are one and we obtain
$\mathbb{E}[\xi_\alpha^2] = 1$ by the definition of $\alpha!$.
This proves orthonormality; see, e.g., \citet[Section~2]{lototsky2017stochastic} for a textbook derivation.
\end{proof}

For completeness, we also recall that the Wick monomials form a complete basis in the corresponding $L^2$-space.

\begin{thm}[Wiener–It\^o chaos / Cameron–Martin theorem]\label{thm:cameron_martin}
Let $\Xi = \{\xi_{mj}\}_{m,j\in\mathbb N}$ be as above and let $L^2(\Omega,\sigma(\Xi),\mathbb P)$ denote the closed subspace of $L^2(\Omega)$ generated by $\Xi$.
Then the linear span of $\{\xi_\alpha\}_{\alpha\in\mathcal J}$ is dense in $L^2(\Omega,\sigma(\Xi),\mathbb P)$, and every $M\in L^2(\Omega,\sigma(\Xi),\mathbb P)$ admits the Wiener–It\^o chaos expansion
\[
M = \sum_{\alpha\in\mathcal J} \mathbb E[M\,\xi_\alpha]\,\xi_\alpha,
\]
with convergence in $L^2(\Omega)$.
\end{thm}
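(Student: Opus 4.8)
The plan is to combine the orthonormality already established in Lemma~\ref{lem:wick_orth_orthornal} with a completeness argument: once we know that $\{\xi_\alpha\}_{\alpha\in\mathcal J}$ is a \emph{complete} orthonormal system in $L^2(\Omega,\sigma(\Xi),\mathbb P)$, the chaos expansion $M = \sum_{\alpha\in\mathcal J}\mathbb E[M\,\xi_\alpha]\,\xi_\alpha$ with $L^2$ convergence follows immediately from the abstract Hilbert-space expansion theorem (Parseval's identity applied to the orthogonal projection of $M$ onto the closed span of the system). Thus the entire content of the statement reduces to density of $\mathrm{span}\{\xi_\alpha\}$, which I would establish in three stages: reduce to finitely many Gaussian coordinates, prove completeness of tensorized Hermite polynomials in finite dimensions, and lift back to $\sigma(\Xi)$.

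First I would reduce to finitely many variables. Let $\mathcal F_N \coloneqq \sigma(\xi_{mj} : (m,j)\in S_N)$ for an exhausting sequence of finite index sets $S_N\uparrow\mathbb N\times\mathbb N$, so that $\mathcal F_N\uparrow\sigma(\Xi)$. For any $M\in L^2(\Omega,\sigma(\Xi),\mathbb P)$, L\'evy's upward martingale convergence theorem gives $\mathbb E[M\mid\mathcal F_N]\to M$ in $L^2(\Omega)$. Hence it suffices to approximate each $\mathcal F_N$-measurable $L^2$ function by Wick polynomials, i.e.\ to prove completeness of the Hermite products in $L^2(\mathbb R^{|S_N|},\gamma_{|S_N|})$, where $\gamma_d$ denotes the standard Gaussian measure and where the $\mathcal F_N$-measurable $L^2$ functions are exactly the $g(\xi_{mj}:(m,j)\in S_N)$ with $g\in L^2(\gamma_{|S_N|})$.

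Second, the finite-dimensional completeness factors through the one-dimensional case by tensorization: since $\gamma_d=\gamma_1^{\otimes d}$ and the products $\prod_i h_{\alpha_i}(x_i)$ span the algebraic tensor product, it is enough to show that $\{h_n\}_{n\ge 0}$ is complete in $L^2(\mathbb R,\gamma_1)$. For this I would take $f\in L^2(\gamma_1)$ with $\mathbb E[f(Z)\,h_n(Z)]=0$ for all $n$; since the $h_n$ and the monomials span the same polynomial spaces, all moments $\int f(x)\,x^n\,\gamma_1(dx)$ vanish. Using the Gaussian exponential-moment bound, the function $\psi(\zeta)\coloneqq\int f(x)\,e^{\zeta x}\,\gamma_1(dx)$ extends to an entire function of $\zeta\in\mathbb C$ whose Taylor coefficients at the origin are precisely these moments, so $\psi\equiv 0$; restricting to $\zeta=it$ shows that the Fourier transform of the finite measure $f\,\gamma_1$ vanishes identically, whence $f=0$ $\gamma_1$-a.e.\ by Fourier uniqueness. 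Tensorizing recovers finite-dimensional completeness, and combining with the first step yields density of $\mathrm{span}\{\xi_\alpha\}$ in $L^2(\Omega,\sigma(\Xi),\mathbb P)$.

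The main obstacle I anticipate is the one-dimensional completeness of the Hermite system, namely the entire-function/Fourier-uniqueness argument that turns vanishing moments into $f=0$, together with the care needed in the infinite-dimensional lifting: one must ensure that the countable family $\{\xi_\alpha\}$ assembled from all finite coordinate blocks is simultaneously orthonormal (guaranteed by Lemma~\ref{lem:wick_orth_orthornal}, with the single-variable identity from Proposition~\ref{prop:hermite_polynomial}) and total, and that the martingale-convergence reduction is compatible with the nested finite-dimensional bases. Since all three ingredients are classical, I would present the argument as a structured sketch and cite the standard references for the remaining analytic details.
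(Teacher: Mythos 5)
Your proposal is correct, but note that the paper does not actually prove this theorem: it states it as the classical Wiener--It\^o/Cameron--Martin theorem and defers entirely to the cited references (Lototsky--Rozovsky, Luo, Neufeld et al.), so there is no in-paper argument to compare against. What you have written is essentially the canonical proof found in those references: orthonormality from Lemma~\ref{lem:wick_orth_orthornal} reduces everything to totality; L\'evy's upward martingale convergence reduces totality to finitely many Gaussian coordinates (using Doob--Dynkin and the independence of the $\xi_{mj}$ from Definition~\ref{prop:basis_reconstruction} to identify $\mathcal F_N$-measurable $L^2$ functions with $L^2(\gamma_{|S_N|})$); tensorization reduces to $d=1$; and the one-dimensional completeness of $\{h_n\}$ follows from the vanishing-moments/entire-function/Fourier-uniqueness argument. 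The only places a full write-up needs a little care are exactly the ones you flag: justifying that $\psi(\zeta)=\int f e^{\zeta x}\,\gamma_1(dx)$ is entire with Taylor coefficients equal to the moments (dominated convergence via the Gaussian exponential-moment bound and Cauchy--Schwarz), and noting that unconditional $L^2$ convergence of the unordered sum over $\mathcal J$ is automatic for an orthonormal expansion. Neither is a gap; your sketch would serve as a self-contained replacement for the paper's citation.
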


This is the classical Wiener–It\^o/Cameron–Martin theorem; see, for example, \citep[Theorem~1.1]{lototsky2017stochastic}, \citep[Chapter~2]{luo2006wiener}, or \citep[Section~2]{neufeld2024solving}.
We do not reproduce the proof here.

The completeness of the Wick polynomial directly leads to the approximation guarantee of leveraging the Wick polynomial for approximating the solution operator of SDE and SPDE. Now we are ready to prove the main theorem of this work. We start by proving the theorem for SDE.

\paragraph{Proof of the Theorem on SDE Propagator System}
{
The conclusion for the 1-dimensional SDE is established in \citep{eigel2024functional}(Theorem 2.2), we slightly extend the Theorem to $d$-dimensional case. We first recall our theorem presented on the main page. }

\begin{thm}[Repeat of SDE Propagator, Theorem \ref{thm:sde_chaos_odes}] 
Let $X_t \in \mathbb{R}^d$ be the strong solution to the $d$-dimensional SDE \eqref{eq:SDE_initial} driven by an $m$-dimensional Brownian motion $W_t$. The solution's WCE is given by 
\begin{align}
X_t = \sum_{\alpha \in \mathcal{J}} u_\alpha(t) \xi_\alpha,  
\end{align}
where the propagators are $u_\alpha(t) \coloneqq \mathbb{E}[X_t \xi_\alpha] \in \mathbb{R}^d$.
The evolution of these propagators is governed by a coupled system of ODEs. For $j \in \{1, \dots, d\}$, the ODE for $u_\alpha^{(j)}(t)$ is:
\begin{align}
    \frac{d}{dt} u_\alpha^{(j)}(t) = \mathbb{E}\left[ F^{(j)}(t, X_t) \xi_\alpha \right] + \sum_{k=1}^{m} \sum_{i=1}^{\infty} \sqrt{\alpha_{ki}} \, e_i(t)\, \mathbb{E}\left[ B^{(j,k)}(t, X_t) \xi_{\alpha - e_{ki}} \right],
\end{align}
with $u_\alpha^{(j)}(0) = x_0^{(j)} \delta_{\alpha 0}$, and 
$e_{ki}$ denotes the multi-index that is 1 at position $(k,i)$ and 0 elsewhere.
\end{thm}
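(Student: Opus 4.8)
The plan is to combine the completeness of the Wick basis with a Malliavin (integration-by-parts) argument applied to the integral form of the strong solution. First I would establish the expansion itself: by Proposition~\ref{prop:sde_strong_solution_existence} each component $X_t^{(j)}$ lies in $L^2(\Omega)$, and since $\{e_j\}$ is complete in $L^2([0,T])$ the family $\Xi=\{\xi_{ki}\}$ generates $\sigma(W)$, so $X_t$ is $\sigma(\Xi)$-measurable. Theorem~\ref{thm:cameron_martin} then yields $X_t^{(j)}=\sum_{\alpha}u_\alpha^{(j)}(t)\,\xi_\alpha$ with coefficients $u_\alpha^{(j)}(t)=\mathbb{E}[X_t^{(j)}\xi_\alpha]$, converging in $L^2(\Omega)$, which is the representation in Eq.~\eqref{eq:sde_wce} with the propagators as defined.

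To derive the ODE I would start from the strong-solution identity $X_t^{(j)}=x_0^{(j)}+\int_0^t F^{(j)}(s,X_s)\,ds+\sum_{k=1}^m\int_0^t B^{(j,k)}(s,X_s)\,dW_s^{(k)}$, multiply by $\xi_\alpha$, and take expectations. The initial term gives $x_0^{(j)}\,\mathbb{E}[\xi_\alpha]=x_0^{(j)}\delta_{\alpha 0}$, since $\xi_0\equiv 1$ and orthonormality (Lemma~\ref{lem:wick_orth_orthornal}) forces $\mathbb{E}[\xi_\alpha]=\delta_{\alpha 0}$. For the drift, Fubini (justified by the moment bound and linear growth) produces $\int_0^t \mathbb{E}[F^{(j)}(s,X_s)\xi_\alpha]\,ds$, matching the first term on the right of Eq.~\eqref{eq:sde_ode_system_final} after differentiation.

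The crux is the diffusion term $\mathbb{E}\bigl[\xi_\alpha\int_0^t B^{(j,k)}(s,X_s)\,dW_s^{(k)}\bigr]$. Here I would invoke the duality between the Itô integral, viewed as a Skorokhod integral $\delta^{(k)}$ for adapted integrands, and the Malliavin derivative, namely $\mathbb{E}[\xi_\alpha\,\delta^{(k)}(G)]=\int_0^t\mathbb{E}[D_s^{(k)}\xi_\alpha\,G_s]\,ds$ with $G_s=\mathbf{1}_{[0,t]}(s)B^{(j,k)}(s,X_s)$. The Malliavin derivative of the Wick monomial follows from $D_s^{(k)}\xi_{ki}=e_i(s)$ and the chain rule, using the Hermite identity $h_n'=n\,h_{n-1}$ together with the normalization $(\alpha-e_{ki})!=\alpha!/\alpha_{ki}$; these collapse to the clean lowering relation $D_s^{(k)}\xi_\alpha=\sum_i \sqrt{\alpha_{ki}}\,\xi_{\alpha-e_{ki}}\,e_i(s)$. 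Substituting and summing over $k$ turns the diffusion contribution into $\int_0^t\sum_{k,i}\sqrt{\alpha_{ki}}\,e_i(s)\,\mathbb{E}[B^{(j,k)}(s,X_s)\xi_{\alpha-e_{ki}}]\,ds$. Collecting the three pieces writes $u_\alpha^{(j)}(t)$ as a constant plus a time integral of integrands continuous in $s$, so differentiating in $t$ via the fundamental theorem of calculus gives exactly Eq.~\eqref{eq:sde_ode_system_final}.

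I expect the main obstacle to be the rigorous justification of the diffusion step: checking that $\xi_\alpha$ and $B^{(j,k)}(s,X_s)$ lie in the domains of the relevant operators, that adaptedness lets me identify the Itô integral with the Skorokhod integral $\delta^{(k)}$, and that the duality formula and the interchange of expectation with the time integral are valid under the moment and growth bounds. Note that the sum over $i$ poses no convergence difficulty: because $\alpha\in\mathcal J$ has finite support, only finitely many $\alpha_{ki}$ are nonzero, so $D_s^{(k)}\xi_\alpha$ is a finite sum. A fully elementary route that avoids Malliavin calculus is to derive the lowering relation directly from the raising identity $\xi_{ki}\xi_\alpha=\sqrt{\alpha_{ki}+1}\,\xi_{\alpha+e_{ki}}+\sqrt{\alpha_{ki}}\,\xi_{\alpha-e_{ki}}$, a consequence of $x\,h_n=h_{n+1}+n\,h_{n-1}$, combined with an approximation of the stochastic integral by simple adapted integrands; the bookkeeping is then essentially the same.
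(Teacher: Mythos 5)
Your proposal is correct and follows essentially the same route as the paper's proof: project the integral form of the strong solution onto $\xi_\alpha$, handle the drift via Fubini, treat the diffusion term with the Malliavin-derivative duality and the lowering relation $D_s^{(k)}\xi_\alpha=\sum_i\sqrt{\alpha_{ki}}\,\xi_{\alpha-e_{ki}}\,e_i(s)$, and differentiate by the fundamental theorem of calculus. Your additions (explicitly invoking the Cameron--Martin theorem for the expansion, justifying the lowering relation from $h_n'=nh_{n-1}$, and sketching an elementary alternative via the raising identity) are sound but do not change the argument.
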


\begin{proof}
The proof relies on applying the expectation operator to the integral form of the SDE and leveraging the properties of the Wick-Hermite basis $\{\xi_\alpha\}$. We will derive the ODE for a single component $u_\alpha^{(j)}(t)$ of a propagator vector $u_\alpha(t)$.

The integral form of the $j$-th component of the SDE \eqref{eq:SDE_initial} is:
\begin{align}
    X_t^{(j)} = x_0^{(j)} + \int_0^t F^{(j)}(s, X_s) ds + \sum_{k=1}^{m} \int_0^t B^{(j,k)}(s, X_s) dW_s^{(k)}.
\end{align}
By the definition of the propagator, $u_\alpha^{(j)}(t) = \mathbb{E}[X_t^{(j)} \xi_\alpha]$. Applying this to the integral equation above, we get:
\begin{align}
    u_\alpha^{(j)}(t) = \mathbb{E}[x_0^{(j)} \xi_\alpha] + \mathbb{E}\left[\left(\int_0^t F^{(j)}(s, X_s) ds\right) \xi_\alpha\right] + \sum_{k=1}^{m} \mathbb{E}\left[\left(\int_0^t B^{(j,k)}(s, X_s) dW_s^{(k)}\right) \xi_\alpha\right]. \label{eq:proof_integral_form}
\end{align}
We now analyze each term on the right-hand side.

\textit{1. The Initial Condition Term}
Since $x_0^{(j)}$ is deterministic and $\{\xi_\alpha\}$ is an orthonormal system with $\xi_0 = 1$, we have $\mathbb{E}[\xi_\alpha] = \delta_{\alpha 0}$. Thus,
\begin{align}
    \mathbb{E}[x_0^{(j)} \xi_\alpha] = x_0^{(j)} \mathbb{E}[\xi_\alpha] = x_0^{(j)} \delta_{\alpha 0}.
\end{align}

\textit{2. The Drift Term}
Using the linearity of expectation and Fubini's theorem to interchange expectation and time integration, this term becomes:
\begin{align}
    \mathbb{E}\left[\left(\int_0^t F^{(j)}(s, X_s) ds\right) \xi_\alpha\right] = \int_0^t \mathbb{E}\left[F^{(j)}(s, X_s) \xi_\alpha\right] ds.
\end{align}

\textit{3. The Diffusion Term}
This is the most critical part, requiring the use of Malliavin calculus. For each term in the sum over $k \in \{1, \dots, m\}$, we apply the duality relationship between the It\^o integral and the Malliavin derivative (also known as the integration by parts formula for stochastic integrals). For a sufficiently regular random variable $\Psi$ and a process $\Phi_s$, this is $\mathbb{E}[\Psi \int_0^t \Phi_s dW_s^{(k)}] = \mathbb{E}[\int_0^t D_s^{(k)}\Psi \cdot \Phi_s ds]$, where $D_s^{(k)}$ is the Malliavin derivative with respect to the $k$-th Brownian motion.

Letting $\Psi = \xi_\alpha$ and $\Phi_s = B^{(j,k)}(s, X_s)$, we have:
\begin{align}
    \mathbb{E}\left[\left(\int_0^t B^{(j,k)}(s, X_s) dW_s^{(k)}\right) \xi_\alpha\right] = \mathbb{E}\left[\int_0^t D_s^{(k)}\xi_\alpha \cdot B^{(j,k)}(s, X_s) ds\right].
\end{align}
The Malliavin derivative of the Wick monomial $\xi_\alpha$ with respect to the $k$-th Brownian motion at time $s$ is given by:
\begin{align}
    D_s^{(k)}\xi_\alpha = \sum_{i=1}^\infty \sqrt{\alpha_{ki}} \, e_i(s) \, \xi_{\alpha - e_{ki}}.
\end{align}
Substituting this into the expectation and again applying Fubini's theorem:
\begin{align}
    \mathbb{E}\left[\int_0^t D_s^{(k)}\xi_\alpha \cdot B^{(j,k)}(s, X_s) ds\right] &= \int_0^t \mathbb{E}\left[\left(\sum_{i=1}^{\infty} \sqrt{\alpha_{ki}} \, e_i(s) \, \xi_{\alpha - e_{ki}}\right) B^{(j,k)}(s, X_s)\right] ds \notag \\
    &= \int_0^t \sum_{i=1}^{\infty} \sqrt{\alpha_{ki}} \, e_i(s) \, \mathbb{E}\left[B^{(j,k)}(s, X_s) \xi_{\alpha - e_{ki}}\right] ds.
\end{align}
Substituting all processed terms back into Equation \eqref{eq:proof_integral_form}, we obtain the integral equation for the propagator component:
\begin{align}
    u_\alpha^{(j)}(t) = x_0^{(j)} \delta_{\alpha 0} + \int_0^t \mathbb{E}\left[F^{(j)}(s, X_s) \xi_\alpha\right] ds + \sum_{k=1}^{m} \int_0^t \sum_{i=1}^{\infty} \sqrt{\alpha_{ki}} \, e_i(s) \, \mathbb{E}\left[B^{(j,k)}(s, X_s) \xi_{\alpha - e_{ki}}\right] ds.
\end{align}
This equation holds for all $t \in [0,T]$. Since the integrands are continuous in $t$, we can differentiate both sides with respect to $t$ using the Fundamental Theorem of Calculus to obtain the desired system of ODEs:
\begin{align*}
    \frac{d}{dt} u_\alpha^{(j)}(t) = \mathbb{E}\left[ F^{(j)}(t, X_t) \xi_\alpha \right] + \sum_{k=1}^{m} \sum_{i=1}^{\infty} \sqrt{\alpha_{ki}} \, e_i(t)\, \mathbb{E}\left[ B^{(j,k)}(t, X_t) \xi_{\alpha - e_{ki}} \right],
\end{align*}
with the initial condition $u_\alpha^{(j)}(0) = x_0^{(j)} \delta_{\alpha 0}$ obtained by setting $t=0$. This completes the proof.
\end{proof}

\paragraph{Proof of the Theorem on SPDE Propagator System}
{
After presenting the propagator system for SDEs, we now turn to SPDE propagators.
Similar chaos-based formulations of SPDEs can be found in the Wiener–chaos literature; see, for example, \citep{lototsky2006stochastic,lototsky2006wiener,kalpinelli2011wiener,neufeld2024solving}.
Our SPDE propagator system (Theorem~\ref{thm:spde_propagator}) specialises these results to the semilinear SPDE \eqref{eq:spde_initial} and writes the associated deterministic PDEs for the Wiener–chaos coefficients in the compact form \eqref{eq:pde_bundle_evolution}, which can be used directly for neural-operator parameterisation; see also the discussion in Remark~2.13 of \citep{neufeld2024solving}.
 
Similarly, we recall our theorem in SPDEs. 

{
\begin{thm}[Repeat of SPDE propagator system, Theorem~\ref{thm:spde_propagator}] 
The evolution of the propagator fields $u_\alpha(t,x)$ for \eqref{eq:spde_initial} is governed by a coupled system of deterministic PDEs
\begin{equation}
    \frac{\partial}{\partial t} u_\alpha(t,x) = (A u_\alpha)(t,x) + \mathscr{F}_\alpha\bigl(\{u_\gamma(t,x)\}_{\gamma \in \mathcal{J}}\bigr),
\end{equation}
with initial condition $u_\alpha(0,x) = \chi_0(x) \,\delta_{\alpha 0}$, where the nonlinear term $\mathscr{F}_\alpha$ is an operator uniquely determined by the nonlinearities $F$ and $B$.
\end{thm}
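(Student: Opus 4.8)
The plan is to mirror the argument already carried out for the SDE propagator system (Theorem~\ref{thm:sde_chaos_odes}), but now in the semigroup/mild-solution setting: rather than differentiating a strong solution, I would project the mild solution onto the Wick basis and recognize the resulting integral equation as the mild form of Eq.~\eqref{eq:pde_bundle_evolution}. Define the propagator fields by the $\mathcal H$-valued projection $u_\alpha(t)\coloneqq\mathbb{E}[X_t\,\xi_\alpha]$, which is well defined by the orthonormality of $\{\xi_\alpha\}$ and the moment bound of Proposition~\ref{prop:spde_mild_solution_existence}. Applying this projection to
\[
X_t = S_t\chi_0 + \int_0^t S_{t-s}F(s,\cdot,X_s)\,ds + \int_0^t S_{t-s}B(s,\cdot,X_s)\,dW_s
\]
and handling the three terms separately yields a mild equation for each $u_\alpha$.

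First I would treat the deterministic contributions: since $\chi_0$ is deterministic and $\mathbb{E}[\xi_\alpha]=\delta_{\alpha0}$, the initial term gives $S_t\chi_0\,\delta_{\alpha0}$, and a (stochastic) Fubini argument turns the drift term into $\int_0^t S_{t-s}\,\mathbb{E}[F(s,\cdot,X_s)\xi_\alpha]\,ds$, using that the deterministic operator $S_{t-s}$ commutes with expectation. The diffusion term is the delicate one and is handled exactly as in the SDE proof, via the duality between the stochastic integral against $W$ and the Malliavin derivative, but now along the Karhunen--Lo\`eve modes $(\lambda_k,f_k)$ of $Q$ supplied by Lemma~\ref{lem:reconstruct_q_brownian}. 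The Malliavin derivative of the Wick monomial along mode $k$ is $D_s^{(k)}\xi_\alpha=\sum_{j}\sqrt{\alpha_{kj}}\,e_j(s)\,\xi_{\alpha-e_{kj}}$, so the diffusion contribution collapses (schematically) to
\[
\int_0^t S_{t-s}\sum_{k}\sum_{j}\sqrt{\lambda_k\,\alpha_{kj}}\;e_j(s)\,\mathbb{E}\!\left[\big(B(s,\cdot,X_s)f_k\big)\,\xi_{\alpha-e_{kj}}\right]ds.
\]
Collecting the drift and diffusion integrands into a single operator $\mathscr{F}_\alpha$ gives the mild identity $u_\alpha(t)=S_t\chi_0\delta_{\alpha0}+\int_0^tS_{t-s}\mathscr{F}_\alpha(\{u_\gamma(s)\})\,ds$, whose formal differentiation produces $\partial_t u_\alpha=Au_\alpha+\mathscr{F}_\alpha(\{u_\gamma\})$, valid in the mild (semigroup) sense.

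Two points require care. The first is showing that $\mathbb{E}[F(s,\cdot,X_s)\xi_\alpha]$ and $\mathbb{E}[(B(s,\cdot,X_s)f_k)\xi_{\alpha-e_{kj}}]$ are genuinely functions of the propagator family $\{u_\gamma\}$, which is what makes $\mathscr{F}_\alpha$ \emph{uniquely determined by $F$ and $B$}: I would substitute the chaos expansion $X_s=\sum_\gamma u_\gamma(s)\xi_\gamma$ and reduce the products of Wick monomials via the Hermite linearization formula, which expands $\xi_\beta\xi_\gamma$ in $\{\xi_\delta\}$ with explicit combinatorial coefficients, yielding closed-form couplings for polynomial $F,B$ (the concrete instances are recorded among the examples of this appendix). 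The main obstacle is the rigorous justification in infinite dimensions: the stochastic Fubini theorem and the Malliavin duality must be applied with the only strongly continuous semigroup $S_{t-s}$ inside both Bochner and stochastic integrals, and the interchange of the sums over $(k,j)$ with the integrals must be controlled. I expect these to follow from the Hilbert--Schmidt structure of $B$ in Assumption~\ref{assume:spde_coeffs}, the trace-class summability $\sum_k\lambda_k<\infty$, the basis summability $\sum_j\|G_j\|_{C([0,T])}^2<\infty$, and the moment bound of Proposition~\ref{prop:spde_mild_solution_existence}; assembling them carefully is where the real work lies, with the passage from the mild identity to the stated PDE interpreted in the mild sense consistently with Remark~2.13 of \citep{neufeld2024solving}.
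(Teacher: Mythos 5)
Your proposal follows essentially the same route as the paper's own proof: project the mild solution onto the Wick basis, use orthogonality for the initial term, stochastic Fubini for the drift, Malliavin duality along the Karhunen--Lo\`eve modes of $Q$ for the diffusion term, and then read the resulting Volterra identity as the mild form of the propagator PDE. Your additional remarks on expressing $\mathbb{E}[F(s,\cdot,X_s)\xi_\alpha]$ through the chaos expansion of $X_s$ correspond to the paper's separate treatment of $\mathscr{F}_\alpha$ via Wick products for polynomial nonlinearities, so there is no substantive divergence.
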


\begin{proof}[Sketch]
The derivation follows the classical Wiener–chaos approach for SPDEs (see, e.g., \citet{lototsky2006wiener,lototsky2006stochastic,kalpinelli2011wiener,neufeld2024solving}), adapted to the semilinear SPDE \eqref{eq:spde_initial}. 
Starting from the mild formulation
\[
    X_t = S_t \chi_0 + \int_0^t S_{t-s} F(s, X_s)\,ds + \int_0^t S_{t-s} B(s, X_s)\,dW_s,
\]
we expand the solution in the Wick basis as
\[
    X_t(x) = \sum_{\alpha\in\mathcal J} u_\alpha(t,x)\,\xi_\alpha.
\]
For a fixed multi-index $\beta\in\mathcal J$ and space–time point $(t,x)$, we apply the projection $u_\beta(t,x) = \mathbb{E}[X_t(x)\,\xi_\beta]$ to the mild solution and use linearity of the expectation to treat the three terms separately.

The initial term gives 
\[
    \mathbb{E}[S_t\chi_0(x)\,\xi_\beta] = S_t\chi_0(x)\,\mathbb{E}[\xi_\beta] = S_t\chi_0(x)\,\delta_{\beta 0}
\]
by orthogonality of the chaos basis.
For the drift term, a stochastic Fubini argument yields
\[
    \mathbb{E}\Big[\Big(\int_0^t S_{t-s} F(s,X_s(x))\,ds\Big)\xi_\beta\Big]
    = \int_0^t S_{t-s}\,\mathbb{E}[F(s,X_s(x))\,\xi_\beta]\,ds,
\]
which defines a deterministic source term depending on the collection $\{u_\gamma\}$ via the chaos expansion of $X_s$.

The diffusion term is treated via the Malliavin-duality (integration-by-parts) formula for Hilbert-space valued It\^o integrals. 
For suitable integrands $B(s,X_s)$ and any test variable $\Psi$ one has
\[
    \mathbb{E}\Big[\Big(\int_0^t S_{t-s} B(s,X_s)\,dW_s\Big)\Psi\Big]
    = \mathbb{E}\Big[\int_0^t \langle D_s\Psi, S_{t-s} B(s,X_s)\rangle_{\widetilde{\mathcal H}_0}\,ds\Big],
\]
where $D_s$ denotes the Malliavin derivative and $\widetilde{\mathcal H}_0$ is the Cameron–Martin space associated with $W$.
Choosing $\Psi=\xi_\beta$ and using the explicit form of $D_s\xi_\beta$ in terms of the temporal basis $\{e_i\}$ and spatial KL modes $\{f_k\}$ of the Q-Brownian motion (analogous to the SDE case), one obtains
\[
    \mathbb{E}\Big[\Big(\int_0^t S_{t-s} B(s,X_s)\,dW_s\Big)\xi_\beta\Big]
    = \int_0^t S_{t-s}\,\mathbb{E}\big[\langle D_s \xi_\beta, B(s,X_s(x))\rangle_{\widetilde{\mathcal H}_0}\big]\,ds,
\]
which defines another deterministic source term depending on $\{u_\gamma\}$.

Collecting the three contributions, we arrive at an integral equation of the form
\[
    u_\beta(t,x) = S_t\chi_0(x)\,\delta_{\beta 0} + \int_0^t S_{t-s}\,\mathscr{F}_\beta(\{u_\gamma(s,x)\})\,ds,
\]
where $\mathscr{F}_\beta$ combines the drift and diffusion contributions and depends deterministically on the propagators $\{u_\gamma\}$.
By the standard semigroup theory for abstract Cauchy problems, this Volterra integral equation is the mild form of
\[
    \partial_t u_\beta(t,x) = (A u_\beta)(t,x) + \mathscr{F}_\beta(\{u_\gamma(t,x)\}), \quad u_\beta(0,x) = \chi_0(x)\,\delta_{\beta 0},
\]
which proves the PDE system in Theorem~\ref{thm:spde_propagator}.
For further details of this chaos-based derivation, we refer to the works cited above.
\end{proof}

\paragraph{More Results on the Form of $\mathscr{F}$}

In Theorem~\ref{thm:spde_propagator}, we established that the propagators $\{u_\alpha\}$ of the SPDE solution satisfy a deterministic PDE system, where the coupling and forcing are encapsulated in the abstract term $\mathscr{F}_\alpha$, defined via expectation. For a broad and important class of problems, particularly those with polynomial nonlinearities, it is possible to derive the explicit algebraic form of $\mathscr{F}_\alpha$. This procedure replaces the computation of a high-dimensional expectation with an algebraic manipulation of the propagator coefficients. The key mathematical tool for this is the \textbf{Wick product}.

\begin{defn}[Wick Product]
\label{def:wick_product}
Let $Z, W \in L^2(\Omega; \mathcal{H})$ be two stochastic processes that admit the Wiener Chaos Expansions:
\[
    Z_t(x) = \sum_{\alpha \in \mathcal{J}} z_\alpha(t,x) \xi_\alpha, \quad \text{and} \quad W_t(x) = \sum_{\beta \in \mathcal{J}} w_\beta(t,x) \xi_\beta,
\]
where $z_\alpha, w_\beta \in C([0,T]; \mathcal{H})$ are the respective propagators. Their \emph{Wick product}, denoted by $Z_t \diamond W_t$, is another process in $L^2(\Omega; \mathcal{H})$ whose chaos expansion is given by:
\begin{align}
    (Z_t \diamond W_t)(x) \coloneqq \sum_{\gamma \in \mathcal{J}} \left( \sum_{\alpha+\beta=\gamma} z_\alpha(t,x) w_\beta(t,x) \right) \xi_\gamma.
\end{align}
The inner sum is over all pairs of multi-indices $(\alpha, \beta)$ that sum component-wise to $\gamma$. The $p$-fold Wick power is defined recursively as $Z_t^{\diamond p} \coloneqq Z_t \diamond \dots \diamond Z_t$ ($p$ times).
\end{defn}

The Wick product provides a systematic way to handle polynomial nonlinearities. A standard result in chaos expansion theory is that for any polynomial function $P$, the chaos expansion of $P(X_t)$ can be expressed as a linear combination of the Wick powers of $X_t$. This leads to the following proposition, which gives the explicit form for the source term $\mathscr{F}_\alpha$.

\begin{prop}[Explicit Form of $\mathscr{F}_\alpha$ for Polynomial Nonlinearities]
\label{prop:wick_projection_explicit}
Consider the SPDE~\eqref{eq:spde_initial}. Assume the drift term $F(t, X_t)$ is a polynomial in $X_t$, expressible as a finite sum of Wick powers:
\begin{align}
    F(t, X_t) = \sum_{p=0}^{P} a_p(t) X_t^{\diamond p},
\end{align}
where $a_p(t)$ are deterministic operators. The corresponding component of the propagator source term, $\mathscr{F}_\alpha^{\text{from drift}} = \mathbb{E}[F(t, X_t)\xi_\alpha]$, is then given by the following explicit algebraic expression:
\begin{align}
    \mathbb{E}[F(t, X_t)\xi_\alpha] = \sum_{p=0}^{P} a_p(t) \left( \sum_{\beta_1 + \dots + \beta_p = \alpha} u_{\beta_1}(t) \dots u_{\beta_p}(t) \right).
\end{align}
A similar, though more complex, algebraic expression exists for the contribution from the diffusion term $B(t,X_t)$ if it is also a polynomial in $X_t$.
\end{prop}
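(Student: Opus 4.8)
The plan is to prove the identity by projecting the Wick-power expansion of $F$ onto the basis element $\xi_\alpha$ and exploiting orthonormality of the Wick polynomials. By linearity of expectation, and since each $a_p(t)$ is a deterministic operator that commutes out of the expectation, it suffices to compute $\mathbb{E}[X_t^{\diamond p}\xi_\alpha]$ for each fixed $p \in \{0,\dots,P\}$ and then re-sum against the coefficients $a_p(t)$.

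The core step is to determine the chaos expansion of the $p$-fold Wick power $X_t^{\diamond p}$. Starting from the solution's expansion $X_t = \sum_{\gamma\in\mathcal J} u_\gamma(t)\,\xi_\gamma$, I would proceed by induction on $p$ using Definition~\ref{def:wick_product}. The base case $p=1$ is immediate. For the inductive step, I would write $X_t^{\diamond p} = X_t^{\diamond(p-1)}\diamond X_t$ and apply the multi-index convolution from the Wick-product definition: the coefficient at index $\gamma$ becomes a sum over splittings $\delta+\beta_p=\gamma$ of the $(p-1)$-fold coefficient (itself an inner sum over $\beta_1+\dots+\beta_{p-1}=\delta$) times $u_{\beta_p}(t)$. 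Associativity of componentwise addition of multi-indices then collapses these nested sums into the single convolution $\sum_{\beta_1+\dots+\beta_p=\gamma}u_{\beta_1}(t)\cdots u_{\beta_p}(t)$, giving
\[
X_t^{\diamond p} = \sum_{\gamma\in\mathcal J}\Bigl(\sum_{\beta_1+\dots+\beta_p=\gamma}u_{\beta_1}(t)\cdots u_{\beta_p}(t)\Bigr)\xi_\gamma.
\]
With this expansion in hand, projecting onto $\xi_\alpha$ is a direct application of the orthonormality relation $\mathbb{E}[\xi_\gamma\xi_\alpha]=\delta_{\gamma\alpha}$ from Lemma~\ref{lem:wick_orth_orthornal}: only the $\gamma=\alpha$ term survives, so $\mathbb{E}[X_t^{\diamond p}\xi_\alpha]=\sum_{\beta_1+\dots+\beta_p=\alpha}u_{\beta_1}(t)\cdots u_{\beta_p}(t)$. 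Multiplying by $a_p(t)$ and summing over $p$ yields the claimed formula. I would observe that each inner convolution sum is automatically finite, since every $\beta_i$ must satisfy $\beta_i\le\alpha$ componentwise while $\alpha$ has finite support, so no convergence question arises for the inner sums.

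The main obstacle is purely the interchange of expectation with the infinite sum over $\gamma$ that turns ``read off the $\alpha$-coefficient'' into a rigorous projection. This requires knowing that $X_t^{\diamond p}\in L^2(\Omega;\mathcal H)$, so that its chaos series converges in $L^2$ and $\mathbb{E}[\,\cdot\,\xi_\alpha]$ acts as a bounded linear functional selecting the $\alpha$-coefficient. I would secure this from the moment bound of Proposition~\ref{prop:spde_mild_solution_existence}, together with the standing assumption that the Wick-power representation of $F$ defines a genuine $L^2$ process (equivalently, hypercontractivity of the Wiener chaos controlling the relevant moments). Under these conditions the formal manipulations above are justified, and the diffusion-term analogue follows by the same argument applied to the Wick-power expansion of $B$, completing the proof.
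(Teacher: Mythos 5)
Your proof follows essentially the same route as the paper's: expand $X_t^{\diamond p}$ via the multi-index convolution structure of the Wick product, project onto $\xi_\alpha$ using the orthonormality relation $\mathbb{E}[\xi_\gamma\xi_\alpha]=\delta_{\gamma\alpha}$, and re-sum against the coefficients $a_p(t)$. Your additional remarks (the induction making the $p$-fold convolution explicit, the finiteness of the inner sums since $\beta_i\le\alpha$ componentwise, and the $L^2$ justification for interchanging expectation with the series over $\gamma$) are sound refinements of steps the paper treats informally, so the argument is correct and matches the paper's.
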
 

\begin{proof}[Proof of Proposition \ref{prop:wick_projection_explicit}]
Our goal is to find an explicit expression for the term $\mathbb{E}[F(t, X_t)\xi_\alpha]$. We start by substituting the assumed Wick power expansion of the nonlinearity $F(t, X_t)$:
\begin{align}
    \mathbb{E}[F(t, X_t)\xi_\alpha] = \mathbb{E}\left[ \left( \sum_{p=0}^{P} a_p(t) X_t^{\diamond p} \right) \xi_\alpha \right].
\end{align}
By the linearity of expectation, we move the summation and the deterministic operator $a_p(t)$ outside:
\begin{align}
    \mathbb{E}[F(t, X_t)\xi_\alpha] = \sum_{p=0}^{P} a_p(t) \mathbb{E}\left[ X_t^{\diamond p} \xi_\alpha \right].
\end{align}
The problem is now reduced to computing the expectation $\mathbb{E}[ X_t^{\diamond p} \xi_\alpha ]$ for each power $p$.

We first write the chaos expansion for the $p$-fold Wick power $X_t^{\diamond p}$. According to the Definition of the Wick product~\ref{def:wick_product}, and by recursively applying it, the propagator for the Wick monomial $\xi_\gamma$ in the expansion of $X_t^{\diamond p}$ is the convolution-like sum of the propagators of $X_t$. That is:
\begin{align}
    X_t^{\diamond p} = \sum_{\gamma \in \mathcal{J}} \left( \sum_{\beta_1 + \dots + \beta_p = \gamma} u_{\beta_1}(t) \dots u_{\beta_p}(t) \right) \xi_\gamma.
\end{align}
Now, we substitute this expansion back into the expectation term we need to compute:
\begin{align}
    \mathbb{E}\left[ X_t^{\diamond p} \xi_\alpha \right] = \mathbb{E}\left[ \left( \sum_{\gamma \in \mathcal{J}} \left( \sum_{\beta_1 + \dots + \beta_p = \gamma} u_{\beta_1}(t) \dots u_{\beta_p}(t) \right) \xi_\gamma \right) \xi_\alpha \right].
\end{align}
Again, by linearity of expectation, we move the summations and propagators outside:
\begin{align}
    \mathbb{E}\left[ X_t^{\diamond p} \xi_\alpha \right] = \sum_{\gamma \in \mathcal{J}} \left( \sum_{\beta_1 + \dots + \beta_p = \gamma} u_{\beta_1}(t) \dots u_{\beta_p}(t) \right) \mathbb{E}[\xi_\gamma \xi_\alpha].
\end{align}
From Lemma~\ref{lem:wick_orth_orthornal}, we know that the Wick monomials are orthonormal, i.e., $\mathbb{E}[\xi_\gamma \xi_\alpha] = \delta_{\gamma\alpha}$. The sum over $\gamma$ therefore collapses to a single term where $\gamma = \alpha$:
\begin{align}
    \mathbb{E}\left[ X_t^{\diamond p} \xi_\alpha \right] = \sum_{\beta_1 + \dots + \beta_p = \alpha} u_{\beta_1}(t) \dots u_{\beta_p}(t).
\end{align}
Finally, we substitute this result back into the expression for $\mathbb{E}[F(t, X_t)\xi_\alpha]$:
\begin{align}
    \mathbb{E}[F(t, X_t)\xi_\alpha] = \sum_{p=0}^{P} a_p(t) \left( \sum_{\beta_1 + \dots + \beta_p = \alpha} u_{\beta_1}(t) \dots u_{\beta_p}(t) \right).
\end{align}
This is the desired explicit form for the contribution of the drift term, thus completing the proof.
\end{proof}

{
\begin{rem}[Choice of chaos basis]\label{rem:choice_basis}
In this work we use the Wick--Hermite chaos basis, which is standard in the Wiener--chaos/SPDE literature and aligns naturally with the algebraic structure of Wick products and with existing propagator and error analyses \citep{lototsky2017stochastic,luo2006wiener,neufeld2024solving}. 
We note that there are alternative polynomial bases for approximating Brownian motion and Gaussian processes, such as the optimal bases studied by Foster et al.\ \citep{foster2020optimal}. 
These bases can be advantageous from a pure approximation point of view, but it is not yet clear whether they preserve the same simple chaos decomposition and Wick--product algebra needed to derive explicit propagator systems and source terms $\mathscr F_\alpha$. 
Exploring such alternative bases within the SDENO/F-SPDENO framework, and understanding the trade-offs between structural convenience and approximation optimality, is an interesting direction for future work.
\end{rem}

\subsection{Some Examples}\label{append:examples}
We provide some explicit examples for the conclusions provided in Theorem \ref{thm:sde_chaos_odes} and \ref{thm:spde_propagator}. We start with the SDEs. 
\paragraph{OU Process}
\begin{exmp}[Propagator System for the Ornstein-Uhlenbeck Process]
\label{exmp:ou_propagator}
Consider the one-dimensional Ornstein-Uhlenbeck (OU) process, driven by a one-dimensional standard Brownian motion (i.e., $d=1, m=1$):
\begin{align*}
    dX_t = -\theta X_t dt + \sigma dW_t, \quad X_0 = x_0.
\end{align*}
Here, the drift is $F(t, X_t) = -\theta X_t$ and the diffusion is $B(t, X_t) = \sigma$ (a constant). We apply Theorem~\ref{thm:sde_chaos_odes} to find the system for the propagators $u_\alpha(t) = \mathbb{E}[X_t \xi_\alpha]$. Since $m=1$, the multi-index simplifies to $\alpha = (\alpha_i)_{i \in \mathbb{N}}$.

The ODE for $u_\alpha(t)$ is then given by:
\begin{align}
    \frac{d}{dt} u_\alpha(t) = \mathbb{E}\left[ -\theta X_t \xi_\alpha \right] + \sum_{i=1}^{\infty} \sqrt{\alpha_{i}} \, e_i(t)\, \mathbb{E}\left[ \sigma \xi_{\alpha - e_{i}} \right].
\end{align}
We analyze the two terms on the right-hand side:
\begin{itemize}
    \item \textbf{Drift Term}: Due to the linearity of expectation and the definition of $u_\alpha(t)$, we have:
    \begin{align*}
        \mathbb{E}[-\theta X_t \xi_\alpha] = -\theta \mathbb{E}[X_t \xi_\alpha] = -\theta u_\alpha(t).
    \end{align*}
    \item \textbf{Diffusion Term}: Since $\sigma$ is a constant and the Wick monomials are orthonormal with $\mathbb{E}[\xi_\beta] = \delta_{\beta 0}$:
    \begin{align*}
        \mathbb{E}[\sigma \xi_{\alpha - e_i}] = \sigma \mathbb{E}[\xi_{\alpha - e_i}] = \sigma \delta_{\alpha - e_i, 0}.
    \end{align*}
    This term is non-zero only if the multi-index $\alpha - e_i$ is the zero index. This occurs only when $\alpha = e_i$ (i.e., $|\alpha|=1$ and the only non-zero component is $\alpha_i=1$).
\end{itemize}
Combining these results, the infinite system of ODEs for the propagators decouples into a clear hierarchy:
\begin{itemize}
    \item For the mean ($|\alpha|=0$, i.e., $\alpha=0$):
    \begin{align*}
        \frac{d}{dt} u_0(t) = -\theta u_0(t), \quad u_0(0) = x_0.
    \end{align*}
    \item For the first-order chaos modes ($|\alpha|=1$, i.e., $\alpha = e_i$ for some $i \in \mathbb{N}$):
    \begin{align*}
        \frac{d}{dt} u_{e_i}(t) = -\theta u_{e_i}(t) + \sigma e_i(t), \quad u_{e_i}(0) = 0.
    \end{align*}
    \item For all higher-order modes ($|\alpha| \ge 2$):
    \begin{align*}
        \frac{d}{dt} u_\alpha(t) = -\theta u_\alpha(t), \quad u_\alpha(0) = 0.
    \end{align*}
\end{itemize}
This example explicitly shows how a single SDE is transformed into an infinite, yet structured and sequentially solvable, system of deterministic ODEs. The solution for the higher-order modes is trivially $u_\alpha(t) \equiv 0$ for $|\alpha| \ge 2$.
\end{exmp}

\paragraph{Time-Reversal d-dimensional OU Process}
In the previous example, we showed the application of Theorem \ref{thm:sde_chaos_odes} to the 1D OU process. It is well-known that the time-reversal OU process forms the denoising diffusion process for many generative tasks.  

\begin{exmp}[Propagator System for the d-Dimensional Reverse-Time OU Process]
\label{exmp:reverse_ou_propagator}
In score-based generative models \citep{song2020score}, the reverse-time dynamics of the Ornstein-Uhlenbeck process are of central importance. Let $X_t \in \mathbb{R}^d$ be the state, driven by a $d$-dimensional standard Brownian motion $\overline{W}_t$ (hence, $m=d$). The reverse-time SDE is given by:
\begin{align*}
    dX_t = \left[-\frac{1}{2}\beta(t)X_t - \beta(t)s_\theta(t, X_t)\right]dt + \sqrt{\beta(t)}d\overline{W}_t,
\end{align*}
where $\beta(t) > 0$ is a deterministic variance schedule, and $s_\theta(t, X_t) \approx \nabla_{x}\log p_t(X_t)$ is the score function, typically approximated by a neural network with parameters $\theta$.
For our framework, we identify the drift and diffusion terms for the $j$-th component, $X_t^{(j)}$:
\begin{itemize}
    \item Drift: $F^{(j)}(t, X_t) = -\frac{1}{2}\beta(t)X_t^{(j)} - \beta(t)s_\theta^{(j)}(t, X_t)$. This term is highly non-linear due to the neural network $s_\theta$.
    \item Diffusion: $B(t, X_t) = \sqrt{\beta(t)} I_d$, where $I_d$ is the $d \times d$ identity matrix. Thus, its components are $B^{(j,k)}(t, X_t) = \sqrt{\beta(t)}\delta_{jk}$.
\end{itemize}

We now apply Theorem~\ref{thm:sde_chaos_odes} to derive the ODE system for the propagators $u_\alpha(t) \in \mathbb{R}^d$. The equation for the $j$-th component $u_\alpha^{(j)}(t)$ is:
\begin{align*}
    \frac{d}{dt} u_\alpha^{(j)}(t) = \mathbb{E}\left[ F^{(j)}(t, X_t) \xi_\alpha \right] + \sum_{k=1}^{d} \sum_{i=1}^{\infty} \sqrt{\alpha_{ki}} \, e_i(t)\, \mathbb{E}\left[ B^{(j,k)}(t, X_t) \xi_{\alpha - e_{ki}} \right].
\end{align*}
Let's analyze the two terms on the right-hand side.

\textit{1. Diffusion Term}
Since $B^{(j,k)}$ is deterministic and proportional to $\delta_{jk}$, the sum over $k$ collapses to a single term where $k=j$.
\begin{align}
    \sum_{k=1}^{d} \sum_{i=1}^{\infty} \sqrt{\alpha_{ki}} e_i(t) \mathbb{E}[\sqrt{\beta(t)}\delta_{jk} \xi_{\alpha - e_{ki}}]
    &= \sum_{i=1}^{\infty} \sqrt{\alpha_{ji}} e_i(t) \mathbb{E}[\sqrt{\beta(t)} \xi_{\alpha - e_{ji}}] \notag \\
    &= \sqrt{\beta(t)} \sum_{i=1}^{\infty} \sqrt{\alpha_{ji}} e_i(t) \delta_{\alpha - e_{ji}, 0}.
\end{align}
This term is non-zero only if $\alpha = e_{ji}$ for some $i \in \mathbb{N}$. In this case ($|\alpha|=1$), the term simplifies to $\sqrt{\beta(t)}e_i(t)$.

\textit{2. Drift Term}
We can split the drift term using the linearity of expectation:
\begin{align}
    \mathbb{E}\left[ F^{(j)}(t, X_t) \xi_\alpha \right] &= \mathbb{E}\left[ \left(-\frac{1}{2}\beta(t)X_t^{(j)} - \beta(t)s_\theta^{(j)}(t, X_t)\right) \xi_\alpha \right] \notag \\
    &= -\frac{1}{2}\beta(t)\mathbb{E}[X_t^{(j)} \xi_\alpha] - \beta(t)\mathbb{E}[s_\theta^{(j)}(t, X_t) \xi_\alpha] \notag \\
    &= -\frac{1}{2}\beta(t)u_\alpha^{(j)}(t) - \beta(t)\mathbb{E}[s_\theta^{(j)}(t, X_t) \xi_\alpha].
\end{align}
The second term, $\mathbb{E}[s_\theta^{(j)}(t, X_t) \xi_\alpha]$, represents the projection of the non-linear score function onto the chaos basis. This term couples all propagator modes $\{u_\gamma\}$ in a highly complex way and generally does not have a simple closed form.

Combining the results, the ODE for the propagator component $u_\alpha^{(j)}(t)$ is:

\begin{align}\label{eq:usdeno-motivation}
\boxed{
    \frac{d}{dt} u_\alpha^{(j)}(t) = -\frac{1}{2}\beta(t)u_\alpha^{(j)}(t) - \beta(t)\mathbb{E}[s_\theta^{(j)}(t, X_t) \xi_\alpha] + \sqrt{\beta(t)} \sum_{i=1}^{\infty} e_i(t) \mathbf{1}_{\{\alpha=e_{ji}\}}.}
\end{align}

This example clearly illustrates how the Wiener Chaos Expansion transforms the problem. The simple, state-independent diffusion term of the original SDE becomes a deterministic forcing term that only acts on the first-order chaos modes. In contrast, the complex, non-linear part of the drift (the score function) is isolated into an abstract expectation term which is precisely what a neural operator is trained to approximate. This equation guides us to design $\mathcal U$-SDENO for the image generative model one-step sampler, see more details in Appendix \ref{append:diffusion_one_step}.
\end{exmp}

\paragraph{Stochastic Heat Equation}
We now show an example of Theorem \ref{thm:spde_propagator} for SPDE.

\begin{exmp}[Propagator System for the Stochastic Heat Equation]
\label{exmp:she_propagator}
Consider the stochastic heat equation on a spatial domain $\mathcal{D} \subset \mathbb{R}^n$ with appropriate boundary conditions (e.g., Dirichlet), driven by additive Q-Brownian noise:
\begin{align}
    dX_t = \Delta X_t dt + dW_t, \quad X_0(x) = \chi_0(x).
\end{align}
In the context of our general SPDE form, the linear operator is the Laplacian, $A = \Delta$. The drift term is zero, $F=0$, and the diffusion term $B$ is the identity operator. The Q-Brownian motion is expanded as $W_t(x) = \sum_{k=1}^\infty \sqrt{\lambda_k} \beta_t^k f_k(x)$.
We apply Theorem~\ref{thm:spde_propagator} to find the system for the propagators $u_\alpha(t,x) = \mathbb{E}[X_t(x) \xi_\alpha]$. The general form of the PDE for $u_\alpha$ is:
\begin{align*}
    \frac{\partial}{\partial t} u_\alpha(t,x) = (\Delta u_\alpha)(t,x) + \mathscr{F}_\alpha(\{u_\gamma\}).
\end{align*}
The comprehensive deterministic source term $\mathscr{F}_\alpha$ is composed of contributions from the original drift $F$ and diffusion $B$, but since $F=0$, then the contribution only from $B$, and it can be derived via Malliavin calculus. For this linear additive noise case, a detailed calculation shows that this contribution is non-zero only for first-order chaos modes. Specifically, for $\alpha = e_{ki}$ (the multi-index which is 1 at position $(k,i)$ and 0 elsewhere), the source term becomes:
\begin{align}
    \mathscr{F}_{e_{ki}}(t,x) = \sqrt{\lambda_k} e_i(t) f_k(x).
\end{align}       
For all other multi-indices $\alpha$ (where $|\alpha| \neq 1$), the contribution from the diffusion term is zero.
This analysis leads to a decoupled system of deterministic PDEs for the propagators:
\begin{itemize}
    \item For the mean field ($|\alpha|=0$, i.e., $\alpha=0$):
    \begin{align*}
        \frac{\partial}{\partial t} u_0(t,x) = \Delta u_0(t,x), \quad u_0(0,x) = \chi_0(x).
    \end{align*}
    \item For the first-order chaos modes ($|\alpha|=1$, i.e., $\alpha = e_{ki}$):
    \begin{align*}
        \frac{\partial}{\partial t} u_{e_{ki}}(t,x) = \Delta u_{e_{ki}}(t,x) + \sqrt{\lambda_k} e_i(t) f_k(x), \quad u_{e_{ki}}(0,x) = 0.
    \end{align*}
    \item For all higher-order modes ($|\alpha| \ge 2$):
    \begin{align*}
        \frac{\partial}{\partial t} u_\alpha(t,x) = \Delta u_\alpha(t,x), \quad u_\alpha(0,x) = 0,
    \end{align*}
    which implies their solution is $u_\alpha(t,x) \equiv 0$.
\end{itemize}
This example demonstrates powerfully how a complex SPDE can be decomposed into a system of (often much simpler) deterministic PDEs.
\end{exmp}

\subsection{A WCE of Path Signatures and a Possible Rough-SDENO}
Differential equations driven by signals rougher than Brownian motion with Hurst parameter \(\tfrac{1}{4} < H < \tfrac{1}{2}\) appear as variants of common SDEs \citep{cheridito_fractional_2003}, as well as in finance and parameter estimation. Such equations are defined by first taking the depth-\(N\) (Stratonovich) signature of their drivers
\begin{equation}
    S(X)_{s,t}^N = \sum_{n=0}^{N} \int_{s < u_1 < \cdots < u_n < t} dX_{u_1} \otimes \cdots \otimes dX_{u_n} \in T^{\le N}(\mathbb{R}^d),
\end{equation}
and solving the RDE \(dY_t = f(Y_t)\,d\mathbf{X}_t\) where \(\mathbf{X}\) is lift to a geometric \(p\)-rough path obtained via the signature. This signature can be viewed as a Taylor expansion capturing higher-order oscillations and cross-dimensional relations, and it uniquely characterises the path up to retracings \citep{lyons_differential_1998}. The truncation level \(N\) is set by the driver’s roughness and the vector-field algebra: for a Gaussian driver with Hurst \(H\), choose any \(p>1/H\) and take \(N=\lfloor p\rfloor\). If the vector fields commute or are \(k\)-step nilpotent, the solution depends only on levels up to \(k\), so one may take \(N=\min\{k,\lfloor p\rfloor\}\) (e.g., \(N=2\) for \(H>\tfrac{1}{3}\) and \(N=3\) for \(\tfrac{1}{4}<H\le\tfrac{1}{3}\)).

For centered Gaussian paths, each grade of \(T^{\le N}(\mathbb{R}^d)\) admits a finite Wiener–Itô decomposition: the grade-\(n\) signature component splits into chaos orders \(m\in\{n,n-2,\ldots\}\) \citep{ferrucci2023wienerchaosexpansionsignature}. Let \(w_m\) be the orthogonal projection onto the \(m\)-th Wiener chaos. For any multi-index of tensor words \((\gamma_1,\ldots,\gamma_n)\) with \(n\le N\),
\[
w_m\,S(X)_{s,t}^{\gamma_1\cdots\gamma_n}
=\sum_{\Pi\in\mathcal{Pair}_{n,m}}
\delta^{(m)}\!\Big(P_{\Pi;s,t}^{\gamma_1,\ldots,\gamma_n}\Big),\qquad
m\equiv n\pmod 2,\ m\le n,
\]
where \(\mathcal{Pair}_{n,m}\) is the set of partitions of \(\{1,\dots,n\}\) into \((n-m)/2\) unordered pairs and \(m\) singletons. Each kernel \(P_{\Pi;s,t}^{\gamma_1,\ldots,\gamma_n}\) is obtained by contracting, for every pair \(\{i,j\}\in\Pi\), the elementary time-ordered kernels with the covariance \(R^{\gamma_i,\gamma_j}\) (and its derivatives), while singleton slots remain as symmetrised time kernels; thus \(P_{\Pi;s,t}^{\gamma_1,\ldots,\gamma_n}\in L^2([s,t]^m)\) is symmetric in its \(m\) variables and \(\delta^{(m)}\) denotes the \(m\)-fold Wiener integral. Ferrucci and Cass’s key contribution is an “integrate–out’’ step for singletons, via Gaussian integration by parts, that replaces raw time variables with combinations of covariance derivatives (e.g., \(\tfrac12\,\partial_v R(v,v)-\partial_2 R(s,v)\)), yielding \(L^2\)-admissible kernels and convergence for \(H>\tfrac{1}{4}\) \citep{ferrucci2023wienerchaosexpansionsignature}.

\begin{road2}
\begin{rem}[Minimal orders for chaos expansions of RDEs]
As each signature level requires corresponding chaos orders, and signature levels are governed by the driver’s roughness and the vector-field algebra, the following guidelines select a chaos truncation for RDEs:
\begin{enumerate}
  \item Additive or affine RDEs are captured exactly by truncation at first chaos.
  \item For a Gaussian driver with Hurst parameter $H$, choose any $p>1/H$ and form a geometric $p$-rough path (levels $1,\dots,\lfloor p\rfloor$); to represent these levels, a chaos truncation of order $\lfloor p\rfloor$ is the minimal choice.
  \item If the vector fields are Lie–nilpotent of step $k$ (all commutators of length $\ge k{+}1$ vanish), the solution depends only on levels up to $k$, so one may take the truncation order $L=\min\{k,\lfloor p\rfloor\}$.
\end{enumerate}
\end{rem}
\end{road2}

\section{Detailed Formulation and Analysis of $\mathcal F$-SPDENO}\label{append:spde_guarantee}

This section provides a more detailed formulation of the $\mathcal{F}$-SPDENO framework and a simple error-analysis viewpoint based on existing approximation results. Our goal is not to derive a new sharp theorem, but to decompose the approximation error into interpretable components that highlight the role of the main hyperparameters: the number of temporal basis functions $K$, the maximum Hermite order $M$, and the capacity of the FNO backbone.

\subsection{Wiener–chaos and temporal-basis expansions}

Let $X_t(x)$ be the stochastic solution process of the SPDE \eqref{eq:spde_initial}. As established, it admits a Wiener–chaos expansion:
\begin{align*}
    X_t(x) = \sum_{\alpha \in \mathcal{J}} u_\alpha(t,x)\,\xi_\alpha,
\end{align*}
where $\{\xi_\alpha\}$ are the Wick–Hermite chaos basis polynomials and $\{u_\alpha(t,x)\}$ are the deterministic propagator fields.

The core idea of our framework is to further represent each deterministic propagator field $u_\alpha(t,x)$ in an orthonormal temporal basis $\{\phi_k(t)\}_{k=1}^K \subset L^2([0,T])$ (e.g., a Haar basis), i.e.,
\begin{align*}
    u_\alpha(t,x) \approx \sum_{k=1}^{K} u_{\alpha,k}(x)\,\phi_k(t),
\end{align*}
where the coefficients $u_{\alpha,k}(x) = \int_0^T u_\alpha(t,x)\,\phi_k(t)\,dt$ are time-independent deterministic spatial fields.

Substituting this back into the chaos expansion gives a double-expansion representation for a single realization $u(t,x;\omega)\equiv X_t(x)(\omega)$:
\begin{align}
    u(t,x;\omega) 
    &\approx \sum_{\alpha \in \mathcal{J}} \Big( \sum_{k=1}^{K} u_{\alpha,k}(x)\,\phi_k(t) \Big)\,\xi_\alpha(\omega)
    = \sum_{k=1}^{K} \Big( \sum_{\alpha \in \mathcal{J}} u_{\alpha,k}(x)\,\xi_\alpha(\omega) \Big)\,\phi_k(t).
\end{align}

\subsection{Neural-operator formulation and reconstruction}

The previous expansion reveals a natural learning target for our operator. For a fixed noise realization $\omega$, the solution can be seen as an expansion in the temporal basis $\{\phi_k(t)\}$ with stochastic spatial coefficients
\begin{align}
    C_k(x;\omega) \coloneqq \sum_{\alpha \in \mathcal{J}} u_{\alpha,k}(x)\,\xi_\alpha(\omega).
\end{align}
The underlying $u_{\alpha,k}(x)$ are deterministic, but $C_k(x;\omega)$ are random spatial fields through the chaos variables $\{\xi_\alpha(\omega)\}$.

$\mathcal{F}$-SPDENO is designed to learn the operator map from the initial condition $\chi_0(x)$ and truncated Wick features of the noise realization to the set of spatial coefficients $\{C_k(x;\omega)\}_{k=1}^K$.  
The Wick features for a realization $\omega$, truncated at a maximum chaos order $M$, are the scalar values
\[
    \mathcal{W}(\omega,M) = \{\xi_\alpha(\omega) : |\alpha|\le M\}.
\]
The ideal operator can be written as
\begin{align}
    \{\widehat{C}_k(x;\omega)\}_{k=1}^K = \mathcal{G}_{M,K}\bigl(\chi_0(x),\mathcal{W}(\omega,M)\bigr),
\end{align}
and our $\mathcal F$-SPDENO model parameterises this map with an FNO:
\begin{align}
    \{\widehat{C}_k(x;\omega)\}_{k=1}^K
    = \mathcal{G}_\theta\bigl(\chi_0(x),\mathcal{W}(\omega,M)\bigr)
    = \mathrm{FNO}_\theta\bigl(\chi_0(x),\mathcal{W}(\omega,M)\bigr).
\end{align}
Here the scalar Wick features are treated as additional channels concatenated to the spatial field $\chi_0(x)$.

The final approximate solution $\hat{u}(t,x;\omega)$ is reconstructed from the FNO outputs via the temporal basis:
\begin{align}
    \hat{u}(t,x;\omega) = \sum_{k=1}^{K} \widehat{C}_k(x;\omega)\,\phi_k(t).
\end{align}
The parameters $\theta$ are trained by minimizing an empirical $L^2$ loss between reconstructed and ground-truth solutions over a dataset of $N$ trajectories:
\begin{align}
    \mathcal{L}(\theta) = \frac{1}{N} \sum_{i=1}^N \int_0^T \int_{\mathcal{D}} \big| u^{(i)}(t,x) - \hat{u}^{(i)}(t,x;\theta) \big|^2\,dx\,dt.
\end{align}

\subsection{Error decomposition, sensitivity, and universal approximation}\label{append:spde_convergence}

The above formulation suggests a natural way to think about the approximation error of $\mathcal{F}$-SPDENO. Rather than aiming for a new sharp bound, we use existing results on temporal approximations, Wiener–chaos truncations, and neural-operator universality to form a simple error decomposition that highlights the effect of $(K,M)$ and the FNO capacity.

Let $u$ be the true solution and $\hat{u}$ the approximation produced by $\mathcal{F}$-SPDENO with hyperparameters $(K,M)$ and parameters $\theta$. Using the triangle inequality and viewing each approximation step separately, it is natural to write
\begin{align}\label{eq:error_decomposition}
    \mathbb{E}\big[\| u - \hat{u} \|^2_{L^2([0,T];\mathcal{H})}\big]^{1/2}
    \;\lesssim\; E_K + E_M + E_\theta,
\end{align}
where
\begin{itemize}
    \item $E_K$ is the temporal projection error from using only $K$ basis functions,
    \item $E_M$ is the error from truncating the Wiener–chaos/Wick features to order $M$, and
    \item $E_\theta$ is the intrinsic approximation error of the FNO in learning the truncated operator $\mathcal{G}_{M,K}$.
\end{itemize}
Below we summarise how each term can be controlled under standard assumptions.

\paragraph{Temporal projection error $E_K$.}

\begin{prop}[Bound on temporal projection error $E_K$]\label{prop:error_k}
Assume that, for almost every $(x,\omega)$, the map $t\mapsto u(t,x;\omega)$ is Hölder continuous with exponent $s\in(0,1]$.
Then for an orthonormal basis such as the Haar basis,
\begin{align}\label{eq:time_sensitivity}
    E_K^2 = \mathbb{E}\Big[\Big\| u - \sum_{k=1}^K C_k(x;\omega)\,\phi_k(t) \Big\|^2_{L^2([0,T];\mathcal{H})}\Big]
    \le C_K\,K^{-2s},
\end{align}
where the constant $C_K$ depends on the Hölder norms of the solution paths but is independent of $K$.
\end{prop}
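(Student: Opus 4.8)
The plan is to reduce the claim to a deterministic, pathwise Jackson-type estimate and then integrate over space and randomness. First I would use Fubini together with the identity $\|u\|_{L^2([0,T];\mathcal H)}^2 = \int_{\mathcal D}\|u(\cdot,x;\omega)\|_{L^2([0,T])}^2\,dx$ (taking $\mathcal H = L^2(\mathcal D)$) to write $E_K^2 = \mathbb E\int_{\mathcal D}\|u(\cdot,x;\omega)-P_K u(\cdot,x;\omega)\|_{L^2([0,T])}^2\,dx$, where $P_K$ denotes the orthogonal projection of $L^2([0,T])$ onto $V_K := \mathrm{span}\{\phi_1,\dots,\phi_K\}$. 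The key observation is that the partial sum $\sum_{k=1}^K C_k(x;\omega)\phi_k$ is exactly $P_K u(\cdot,x;\omega)$: since $u(t,x;\omega)=\sum_\alpha u_\alpha(t,x)\xi_\alpha(\omega)$ and $u_{\alpha,k}(x)=\langle u_\alpha(\cdot,x),\phi_k\rangle_{L^2([0,T])}$, interchanging the chaos sum with the temporal integral (justified by $L^2(\Omega)$-convergence of the WCE and Fubini) gives $C_k(x;\omega)=\langle u(\cdot,x;\omega),\phi_k\rangle_{L^2([0,T])}$. Thus it suffices to prove the purely analytic bound $\|f-P_Kf\|_{L^2([0,T])}^2\le C(s,T)\,[f]_s^2\,K^{-2s}$ for every $f\in C^s([0,T])$, with $[f]_s$ its Hölder seminorm.

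Next I would establish this bound using the multiresolution structure of the Haar system. For $K=2^J$ the span $V_{2^J}$ coincides with the space of functions that are piecewise constant on the uniform dyadic partition of $[0,T]$ into $2^J$ intervals of length $h=T2^{-J}$, and $P_{2^J}f$ is the interval-wise average. On each such interval $I$ the best constant is the mean $\bar f_I$, and Hölder continuity gives $|f(t)-\bar f_I|\le[f]_s h^s$ for $t\in I$, hence $\int_I|f-\bar f_I|^2\le[f]_s^2 h^{2s}|I|$. Summing over the $2^J$ intervals yields $\|f-P_{2^J}f\|_{L^2}^2\le T[f]_s^2 h^{2s}=T^{1+2s}[f]_s^2 2^{-2sJ}$. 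For general $K$ I would use the nestedness $V_{2^J}\subset V_K$ whenever $2^J\le K$ together with the best-approximation property of $P_K$: choosing $J$ with $2^J\le K<2^{J+1}$ gives $\|f-P_Kf\|_{L^2}\le\|f-P_{2^J}f\|_{L^2}$ and $2^{-J}\le 2K^{-1}$, so the stated $K^{-2s}$ rate follows up to an extra factor $2^{2s}$.

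Finally I would reassemble the pieces: inserting the pathwise bound and integrating over $x$ and $\omega$ gives $E_K^2\le C(s,T)K^{-2s}\,\mathbb E\int_{\mathcal D}[u(\cdot,x;\omega)]_s^2\,dx=:C_K K^{-2s}$, with $C_K$ independent of $K$. There is no deep obstacle here; the skeleton is standard approximation theory. The one point that genuinely needs care—and which I regard as the crux—is verifying that $C_K$ is finite and $K$-independent, i.e. that the pathwise temporal Hölder seminorm is square-integrable in $(x,\omega)$, $\mathbb E\int_{\mathcal D}[u(\cdot,x;\omega)]_s^2\,dx<\infty$. This is exactly the positive Hölder regularity of the solution paths invoked in the hypothesis, and should be supplied by the regularity theory for the mild solution (e.g.\ the moment and continuity estimates of Proposition~\ref{prop:spde_mild_solution_existence}) rather than re-derived here. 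I would also remark that the Haar structure enters \emph{only} through the identification of $V_{2^J}$ with dyadic piecewise constants; the same template applies to any temporal basis whose approximation spaces contain comparably accurate approximants of Hölder-continuous functions.
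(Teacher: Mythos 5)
Your argument is correct and follows essentially the same route as the paper's sketch: stochastic Fubini to reduce to a pathwise scalar projection error, the Haar/Jackson estimate for H\"older functions, and integration over $(x,\omega)$ with the expected H\"older seminorm absorbed into $C_K$. The only difference is that you supply the elementary dyadic interval-average proof of the Haar bound (plus the nestedness argument for non-dyadic $K$ and the verification that $\sum_k C_k\phi_k$ is the orthogonal projection), where the paper simply cites the classical result; this is a welcome elaboration, not a different approach.
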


\begin{proof}[Sketch]
By definition, $E_K^2$ is the expected $L^2([0,T];\mathcal H)$-norm of the projection error onto $\mathrm{span}\{\phi_k\}_{k=1}^K$.
Stochastic Fubini reduces this to bounding the $L^2([0,T])$-projection error of the scalar functions $t\mapsto u(t,x;\omega)$.
For Hölder continuous functions, classical results on Haar (or wavelet) approximations yield
\[
\|f - P_K f\|_{L^2([0,T])}^2 \le C_{\mathrm{Haar}}\,K^{-2s}\,\|f\|_{C^s([0,T])}^2
\]
for all $f\in C^s([0,T])$ (see, e.g., \citet[Section~3.3]{bolin2020numerical} and references therein).
Applying this pointwise in $(x,\omega)$ and integrating gives \eqref{eq:time_sensitivity}, with $C_K$ absorbing the expected Hölder norms.
\end{proof}

\begin{road2}
\begin{rem}[Interpretation of temporal error]
Proposition~\ref{prop:error_k} provides a guideline for choosing the number of temporal basis functions $K$: as $K$ increases, the temporal projection error decreases at a rate controlled by the Hölder exponent $s$.
For smoother SPDE solutions (larger $s$), relatively small $K$ may already suffice, whereas highly irregular temporal dynamics may require larger $K$ to control this source of error.
\end{rem}
\end{road2}

\paragraph{Wick feature truncation error $E_M$.}

The truncation error $E_M$ arises from providing the ideal operator $\mathcal{G}_{M,K}$ with a finite set of Wick features $\mathcal{W}(\omega,M)$ instead of the full chaos expansion.
This error is governed by the decay of the Wiener–chaos tail and is closely related to the Malliavin smoothness of the SPDE solution with respect to the noise.

For a broad class of SPDEs (in particular, equations with affine coefficients as in \citet{neufeld2024solving}), it is known that the chaos tail decays factorially in the chaos order. Under the assumptions in \citet[Sections~4 and 6.6]{neufeld2024solving}, one obtains bounds of the form
\begin{align}\label{eq:hermite_sensitivity}
    E_M^2 \;\lesssim\; \sum_{|\alpha|>M} \|u_\alpha\|^2_{L^2([0,T];\mathcal{H})}
    \;\le\; C_M\,\frac{(\Gamma_T)^{M+1}}{(M+1)!},
\end{align}
for suitable constants $C_M,\Gamma_T$ depending on the SPDE coefficients and time horizon but independent of $M$.
The proof uses the Stroock–Taylor representation of chaos coefficients in terms of Malliavin derivatives and inductive bounds on these derivatives (see \citet[Propositions~6.3 and 6.5]{neufeld2024solving}); we refer to that work for full details.

\begin{road2}
\begin{rem}[Interpretation of Wick feature error]
The factorial decay in \eqref{eq:hermite_sensitivity} indicates that for sufficiently smooth SPDEs most of the stochastic energy is carried by low-order chaos modes, and the contribution of high-order modes becomes negligible very rapidly.
This provides a theoretical justification for using relatively small truncation orders $M$ (e.g., $M\in\{2,\dots,5\}$) in practice.
\end{rem}
\end{road2}

\paragraph{Neural-operator approximation error $E_\theta$.}

The universality of FNO as a neural operator has been established in \citet[Section~8.3, Theorem~13]{kovachki2023neural}.
Let $\mathcal{G}_{M,K}: \mathcal{H} \times \mathbb{R}^{N_M} \to \mathcal{H}^K$ denote the ideal operator mapping an initial condition $\chi_0$ and a finite vector of Wick features $\mathcal{W}(\omega,M)$ to the true temporal coefficients $\{C_k(x;\omega)\}_{k=1}^K$.
Our $\mathcal{F}$-SPDENO model, with an FNO backbone, is a specific parametrisation of this general neural-operator framework.
Under the regularity assumptions required by \citet{kovachki2023neural}, the universal approximation theorem guarantees that for any $\varepsilon>0$ there exists an FNO architecture $\mathcal{G}_\theta$ with sufficient capacity such that
\begin{align}\label{eq:fno_universal_approximator}
     E_\theta = \mathbb{E}\Big[ \big\| \mathcal{G}_{M,K}(\chi_0, \mathcal{W}) - \mathcal{G}_\theta(\chi_0, \mathcal{W}) \big\|_{\mathcal{H}^K}^2 \Big]^{1/2} < \varepsilon.
\end{align}

\begin{road2}
\begin{rem}[Interpretation of operator error]
The universality of FNO means that, under the stated assumptions, the intrinsic approximation error $E_\theta$ can be made arbitrarily small by increasing the capacity of the FNO (e.g., number of layers, hidden dimension, or Fourier modes).
\end{rem}
\end{road2}

\paragraph{Putting the pieces together.}

Combining the temporal projection bound \eqref{eq:time_sensitivity}, the chaos-truncation bound \eqref{eq:hermite_sensitivity}, and the neural-operator approximation guarantee \eqref{eq:fno_universal_approximator} yields the heuristic decomposition \eqref{eq:error_decomposition}.
We emphasise that this decomposition is not intended as a new sharp error theorem, but rather as a simple framework for understanding how the architectural choices $(K,M)$ and the FNO capacity affect the overall approximation error.
In Appendix~\ref{append:all_additional_experiment}, we provide empirical evidence that the behaviour of $\mathcal{F}$-SPDENO under variations of $K$, $M$, and model capacity is consistent with these qualitative predictions.

\section{Detailed Formulation and Analysis of SDENOs}\label{append:sdeno_analysis}

Following the analysis of $\mathcal F$-SPDENO, we briefly summarise the formulation of SDENOs for SDEs. The construction again has three steps: (i) Wiener–chaos representation of the solution, (ii) parametrisation of the propagators with a time-dependent neural network, and (iii) reconstruction and training.

\textbf{WCE of the SDE solution.}
By Theorem~\ref{thm:sde_chaos_odes}, the strong solution $X_t \in \mathbb{R}^d$ to the SDE~\eqref{eq:SDE_initial} admits a Wiener–chaos expansion
\begin{align}
    X_t = \sum_{\alpha \in \mathcal{J}} u_\alpha(t)\,\xi_\alpha,
\end{align}
where $\{\xi_\alpha\}_{\alpha \in \mathcal{J}}$ are scalar Wick–Hermite chaos basis elements and $\{u_\alpha(t)\}_{\alpha \in \mathcal{J}}$ are deterministic propagators $u_\alpha : [0,T]\to\mathbb R^d$. SDENO aims to approximate these propagator functions up to a finite chaos order $M$.

\textbf{Propagator network.}
We parametrise the truncated set of propagators with a single time-dependent neural network, the \emph{propagator network} $\mathcal{P}_\theta$. Let $\mathcal{J}_M = \{\alpha\in\mathcal J : |\alpha|\le M\}$ denote the truncated index set and $N_M = |\mathcal{J}_M|$ its cardinality. The network realises a map
\begin{align}
    \mathcal{P}_\theta : [0,T] \to \mathbb{R}^{d\times N_M},
\end{align}
so that, for each $t\in[0,T]$, the columns of $\mathcal{P}_\theta(t)$ are the approximated propagator vectors $\{\hat u_\alpha(t)\}_{\alpha\in\mathcal{J}_M}$. In practice we use a simple MLP (possibly with positional encodings of $t$) as the backbone for $\mathcal{P}_\theta$, although other architectures are possible.

\textbf{Reconstruction and training.}
For a given noise realisation $\omega$, we compute the truncated vector of Wick features
\[
    \mathcal{W}(\omega,M) = (\xi_\alpha(\omega))_{\alpha\in\mathcal{J}_M} \in \mathbb{R}^{N_M}.
\]
The SDENO approximation of the SDE solution at time $t$ is then reconstructed as
\begin{align}
    \hat{X}_t(\omega;\theta) = \sum_{\alpha \in \mathcal{J}_M} \hat{u}_\alpha(t)\,\xi_\alpha(\omega),
\end{align}
i.e., as a linear combination of the learned propagator values and the Wick features. The parameters $\theta$ are trained by minimising the mean-squared error between the true and reconstructed solutions over time and over the noise law:
\begin{align}\label{eq:error_function_sdeno}
    \mathcal{L}(\theta) 
    = \mathbb{E}_{\omega \sim \mathbb{P}} \Big[ \int_0^T \| X_t(\omega) - \hat{X}_t(\omega;\theta) \|^2\,dt \Big].
\end{align}

\begin{rem}
   For SDENOs, in the SDE setting there is no temporal basis truncation, so the total error can be naturally viewed as the sum of two contributions: (i) the
    Hermite/WCE truncation error at order M, and (ii) the approximation error of the propagator network. Given the analysis is very similar to the case in $\mathcal F$-SPDENO, we omit the analysis here.  
\end{rem}

\section{Detailed Experimental Settings and Additional Results}\label{append:all_additional_experiment}
In this section, we provide details about the experiments conducted in the main body of the paper. For each experiment, we will include model data generation, ground truth solution generation, train-test split, model architecture, baseline information, and additional ablation studies, as well as hyperparameter sensitivity analyses, parameter comparisons, and execution times. Furthermore, we also present additional exploration results for our model in a broader range of SPDE/SDEs.


\subsection{Details in Dynamic $\boldsymbol{\Phi}^4_1$}
We recall that the dynamic $\boldsymbol{\Phi^4_1}$ model, which takes the form 
\begin{align}
    dX_t = (\Delta X_t + 3X_t - X_t^3) dt + dW_t, \quad X_0 = \chi_0,
\end{align}
\paragraph{Data Generation}
The dataset for the one-dimensional dynamic $\Phi^4_1$ model (also known as the Allen-Cahn or Ginzburg-Landau equation) was generated using a custom numerical solver mirroring the implementation in the \texttt{torchspde} \citep{salvi2022neural} library to ensure consistency with established benchmarks. The process involves two main components: generating the driving Q-Brownian motion and numerically solving the SPDE.
The spatial domain was set to the one-dimensional torus $\mathbb{T}^1 = [0, 1]$, discretized with a spatial step of $\Delta x = 1/128$. The dynamics were simulated over the time interval $[0, T]$ with a final time of $T=0.05$ and a time step of $\Delta t = 10^{-3}$. The driving Q-Brownian motion, $W_t$, was synthesized using a Karhunen-Lo\`eve-like expansion. Specifically, a standard one-dimensional Brownian motion was generated for each of the $N_x+1$ spatial grid points. These independent Brownian paths were then projected onto a sinusoidal basis $\{ \sin(j \pi x / L) \}_{j=1}^{N_x+1}$ to create a spatially correlated noise field. This construction yields a cumulative space-time noise field $W \in \mathbb{R}^{N \times (N_t+1) \times (N_x+1)}$, where $N$ is the number of trajectories. The global intensity of the noise was controlled by a parameter $\sigma$, which was set to $\sigma=0.1$.
The initial conditions, $\chi_0$, were generated to be either deterministic (zero field) or stochastic. In the stochastic case, they were sampled from a random field constructed from a sum of sinusoidal functions with decaying amplitudes, controlled by a parameter $\kappa$. This allows for the study of the solution operator's dependence on both the noise path $W$ and the initial state $\chi_0$. It should be noted that $N = 1000$ represents the number of samples used for training; in total, we sampled $1200$ samples, resulting in 200 samples for testing.

\paragraph{Stochastic simulation of Wiener processes}
The space-time noise $W(t,x)$ was generated to be ``white'' in time and spatially correlated. This construction follows the spectral approximation of a Q-Brownian motion, where the spatial correlation structure is determined by the eigenfunctions of the covariance operator $Q$. The process begins by generating $N_x+1$ independent one-dimensional standard Brownian motions, $(\beta^j_t)_{j=0}^{N_x}$, one for each discrete spatial point. The spatially correlated noise field $W(t,x)$ is then constructed by projecting these independent Brownian motions onto a deterministic orthonormal basis, which, in this case, consists of sinusoidal functions $W(t,x) = \sum_{j=0}^{N_x} \beta^j_t \phi_j(x)$. Here, the basis functions are given by $\phi_j(x) = \sqrt{2/L} \sin(j \pi x / L)$, where $L$ is the length of the spatial domain. This corresponds to the Karhunen-Lo\`eve expansion of a Q-Brownian motion whose covariance operator $Q$ is approximated by this sinusoidal basis, a method detailed in \citep{lord2014introduction}. See \citep{salvi2022neural} for more detailed descriptions. 

\paragraph{Baselines}
We exactly follow the baselines as well as their architectures included in \citep{salvi2022neural,gong2023deep}, including NCDE, NCDE-FNO, NRDE \citep{lu2022comprehensive}, FNO \citep{li2020fourier}, and NSPDE \citep{salvi2022neural}. We note that we did not include the models presented in \citep{gong2023deep,hu2022neural} in our baselines as they are designed specifically to conquer SPDEs with regularity structures. Furthermore, it may be possible to directly insert the noise trajectory into FNO; however, given that such a setting is not very meaningful, and FNO is commonly used to learn the deterministic coefficients in the Fourier domain, we follow \cite{salvi2022neural} and claim that $(X_0,W)\mapsto X$ is not applicable to FNO. It is worth noting that, even though such a setting does not apply to FNO, \(\mathcal{F}\)-SDENO overcomes this limitation by reordering the chaos and feeding FNO with Q-Brownian trajectory's Wick features, i.e., $\mathcal W(x)$, so that the static, time-independent coefficients can be learned inside FNO.

\paragraph{Model Architecture and Training}
The $\mathcal F$-SPDENO in this experiment is conducted with two major modules, (i) the initial computation of the Wick features, i.e., $\mathcal W(x)$ and the selected Haar basis, and (ii) a backbone FNO model \citep{li2020fourier} to learn the time-independent coefficients. The $\mathcal{F}$-SPDENO is trained in a supervised learning framework. The models were trained by minimizing the Mean Squared Error (MSE) loss,  which corresponds to the empirical $L^2$ error over the training samples. We employed the Adam optimizer for its efficiency and stability in training deep neural networks. The initial learning rate was set to (1e-3), and a batch size of (64) was used. The models were trained for a total of (200) epochs on NVIDIA H200 GPUs. To ensure robust convergence and prevent overfitting, a learning rate scheduler was implemented to reduce the learning rate by a factor of (10) whenever the validation loss plateaued for (15) consecutive epochs. It is worth noting that, in our SPDE settings, we did not include the task of $X_0 \mapsto X$ as this setting is not applicable to our model since our model computes Wick features based on the chaos trajectories. 

\paragraph{Hyperparameter Sensitivity}
Based on our sensitivity analysis in \ref{append:spde_convergence}, we conduct empirical verification for the conclusion which in. For SPDE experiments, we consider the model's sensitivity on the following hyperparameters. (i) number of Haar basis; (ii) Hermite polynomial orders $M$; (iii) Number of layers of FNO. For each test of the sensitivity, we change the value of on one hyperparameter while fixing the remaining two. Model's learn accuracy ($L^2$ error) is presented in the Figure~\ref{fig:phi41_hyperparameters}. From the initial observation, one can find that with the increase in the Haar basis, the error drops, with the lowest value obtained at 64, and the error re-increases at the value of 96. This suggests a trade-off between representation capacity and generalization. While a larger number of Haar basis functions provides a finer resolution to represent the temporal dynamics of the noise, an excessively large basis for a short time horizon (50 time steps in the $\boldsymbol{\Phi}^4_1$ setting) may cause the model to overfit to spurious patterns in the training set's specific noise paths, thereby degrading performance on the unseen test set.

A similar trend of diminishing returns is observed for the \textbf{Hermite polynomial order ($M$)}. The error decreases significantly as $M$ increases from 1 to 4, indicating the importance of capturing higher-order stochastic moments of the solution. However, increasing $M$ beyond 4 yields only a marginal improvement, suggesting that lower-order chaos modes contain most of the solution's variance.

Finally, increasing the \textbf{number of FNO layers} consistently improves performance up to 6 layers, reflecting the benefits of a deeper model with greater expressive power for learning the complex propagator operator. A further increase to 8 layers leads to a slight degradation in performance, likely due to optimization challenges or overfitting. Based on this analysis, an optimal and computationally efficient configuration was selected for our main experiments.

\begin{figure}
    \centering
    \includegraphics[width=1\linewidth]{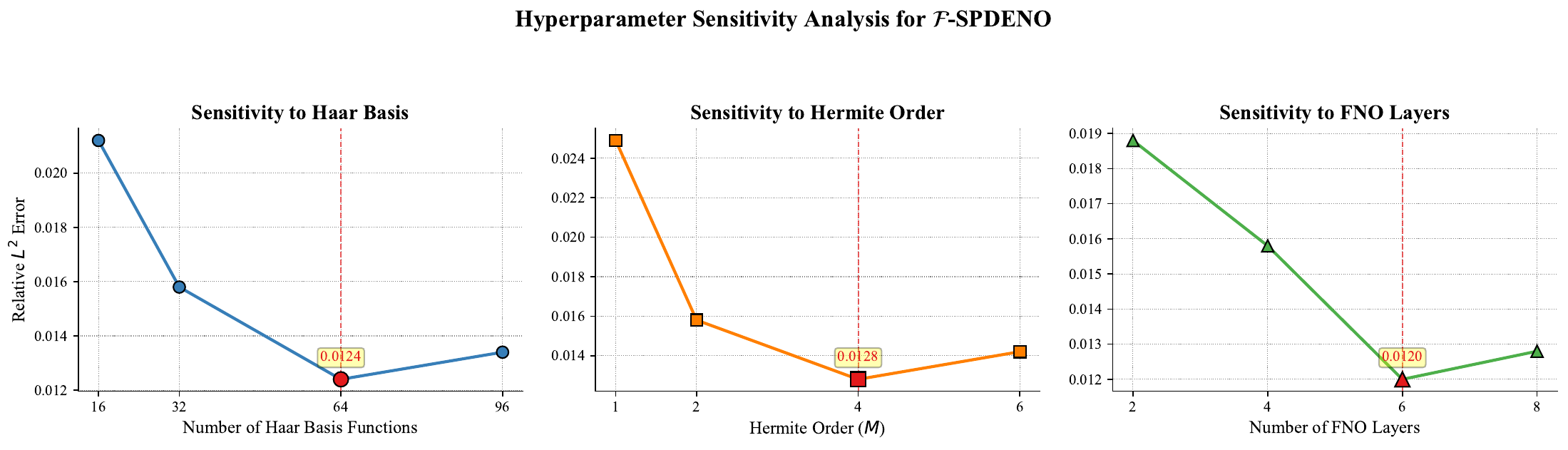}
    \caption{Accuracy sensitivity on hyperparameters of number of Haar basis, Hermite polynomial, and number of Fourier layers.}
    \label{fig:phi41_hyperparameters}
\end{figure}


\subsection{Experimental Details for 2D Navier Stokes Equation}\label{append:NS_equation}

We further show the details in our experiment on 2D NS equation, which takes the form as 
\begin{align}
dX_t = \Bigl(\nu\,\Delta X_t \;-\; U[X_t]\!\cdot\!\nabla X_t \;+\; f\Bigr)\,dt
  \;+\; \sigma\,dW_t,
\,\,\, (t,x)\in[0,T]\times\mathbb T^2, 
\,\,\, X_0=\chi_0\in L^2_0(\mathbb T^2). \nonumber
\end{align}
Here we recall $U[X_t]\coloneqq\nabla^\perp(-\Delta)^{-1}X_t$ with $\nabla^\perp\coloneqq(-\partial_{x_2},\partial_{x_1})$ where $\partial$ denotes the partial derivative of the spatial variable (i.e., $x =(x_1, x_2)$). The parameter $\nu>0$ is the viscosity; $f$ is a deterministic vorticity forcing.

\paragraph{Data Generation}
We generate the dataset for the 2D Navier-Stokes equation on a two-dimensional torus $\mathbb{T}^2 = [0, 1]^2$, discretized with a spatial resolution of $16\times 16$ and $64 \times 64$. The procedure is the same as illustrated in \citep{salvi2022neural,gong2023deep,hu2022neural}.
The ground truth solution for the vorticity field $X(t,x)$ is obtained by numerically solving the equation using a pseudo-spectral method. The simulation is run for a total duration of $T=15$ seconds with a time step of $\Delta t = 10^{-3}$, resulting in 15,000 time steps per trajectory.

The numerical solver operates in the Fourier domain, employing the Crank-Nicolson scheme for the viscous term and an explicit scheme for the non-linear advection term. 
A 2/3 dealiasing rule is applied to ensure numerical stability. The viscosity is set to $\nu = 10^{-4}$. The initial condition for the vorticity, $X_0$, is sampled from a Gaussian Random Field (GRF). 
This is achieved by defining a power spectrum in the Fourier domain with parameters $\alpha=3$ and $\tau=3$, and then transforming it back to the physical space, generating spatially correlated random fields.

The forcing consists of both deterministic and stochastic components. The deterministic forcing $f$ is a stationary and defined as $f(x_1, x_2) = 0.1(\sin(2\pi(x_1+x_2)) + \cos(2\pi(x_1+x_2)))$. The stochastic forcing term $\sigma dW_t$ is modeled as a Wiener process that is band-limited in the Fourier domain. Specifically, its power is concentrated between wavenumbers $k_{\min}=4$ and $k_{\max}=8$, with a noise amplitude of $\sigma = 0.01$. This setup ensures that the stochastic energy is injected at specific spatial scales. In total, we generated 10 distinct trajectories, each serving as a sample for our training and evaluation processes.

\begin{figure}[t]
    \centering
    \scalebox{1}{
    \includegraphics[width=1\linewidth]{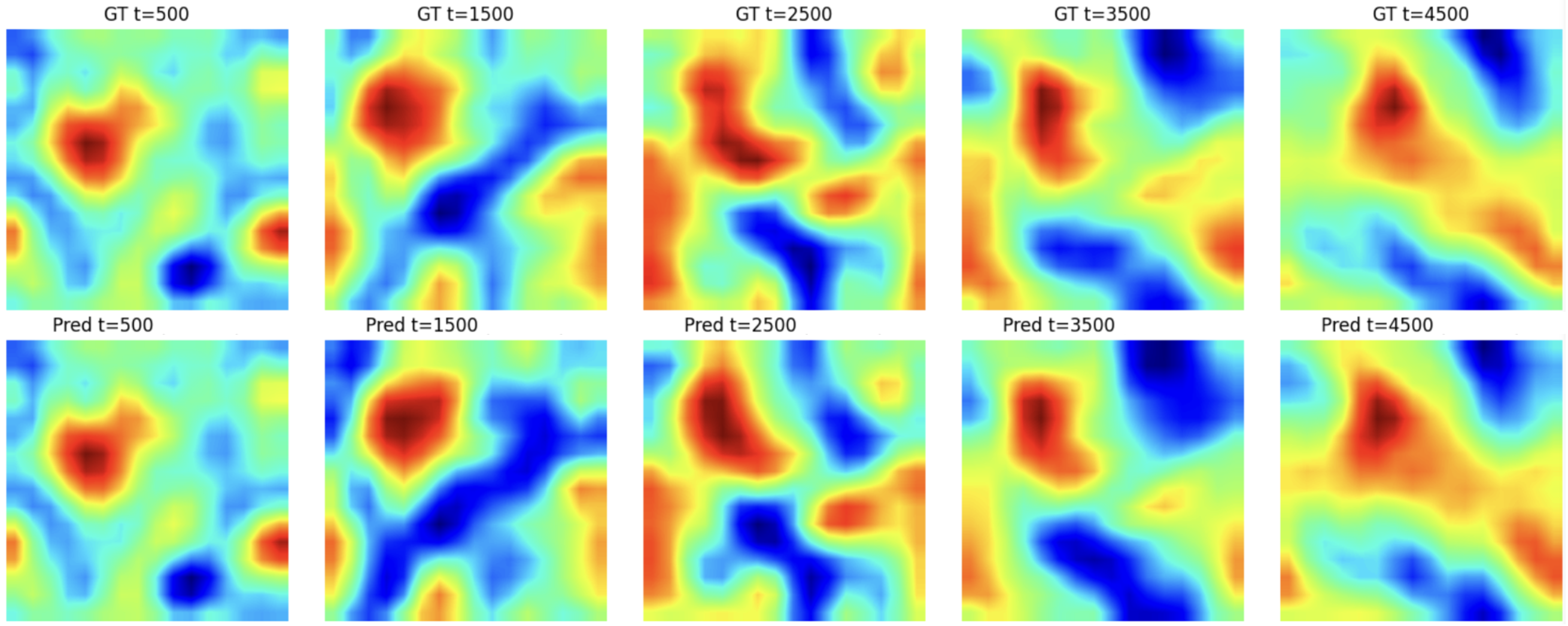}}
    \caption{Performance on 2D NS equation in $16\times 16$ resolution.}
    \label{fig:ns16}
\end{figure}

\paragraph{Model Architecture and Training}
The backbone FNO architecture consists of 6 Fourier layers with a latent dimension (width) of 64. 
For the spatial dimensions, we truncate the Fourier series at 8 modes. 
The model takes two inputs: the initial vorticity field $X_0(x)$ and the pre-computed Wick-Hermite features of the stochastic forcing, calculated up to order $M=2$. 
Its objective is to directly predict the coefficients of the solution trajectory projected onto a Haar temporal basis of size 256. We note that even in theory, the approximation power of our model increases with the temporal basis dimension, in practice, we found that even in this long-range trajectory, time basis up to 256 is enough. The model is trained end-to-end for 30 epochs using the Adam optimizer with a learning rate of $10^{-3}$ and a weight decay of $3 \times 10^{-4}$. 
We employ a ReduceLROnPlateau learning rate scheduler to stabilize training. The loss function is the relative $L^2$ error, computed between the full solution trajectories reconstructed from the model's output and the ground truth solutions. Finally, we present our model's performance on 16$\times$16 resolution in Figure~\ref{fig:ns16}.


\subsection{Experimental Details for Diffusion One-step Sampler}\label{append:diffusion_one_step}

\begin{figure}[t]
    \centering
    \scalebox{0.6}{
    \includegraphics[width=1\linewidth]{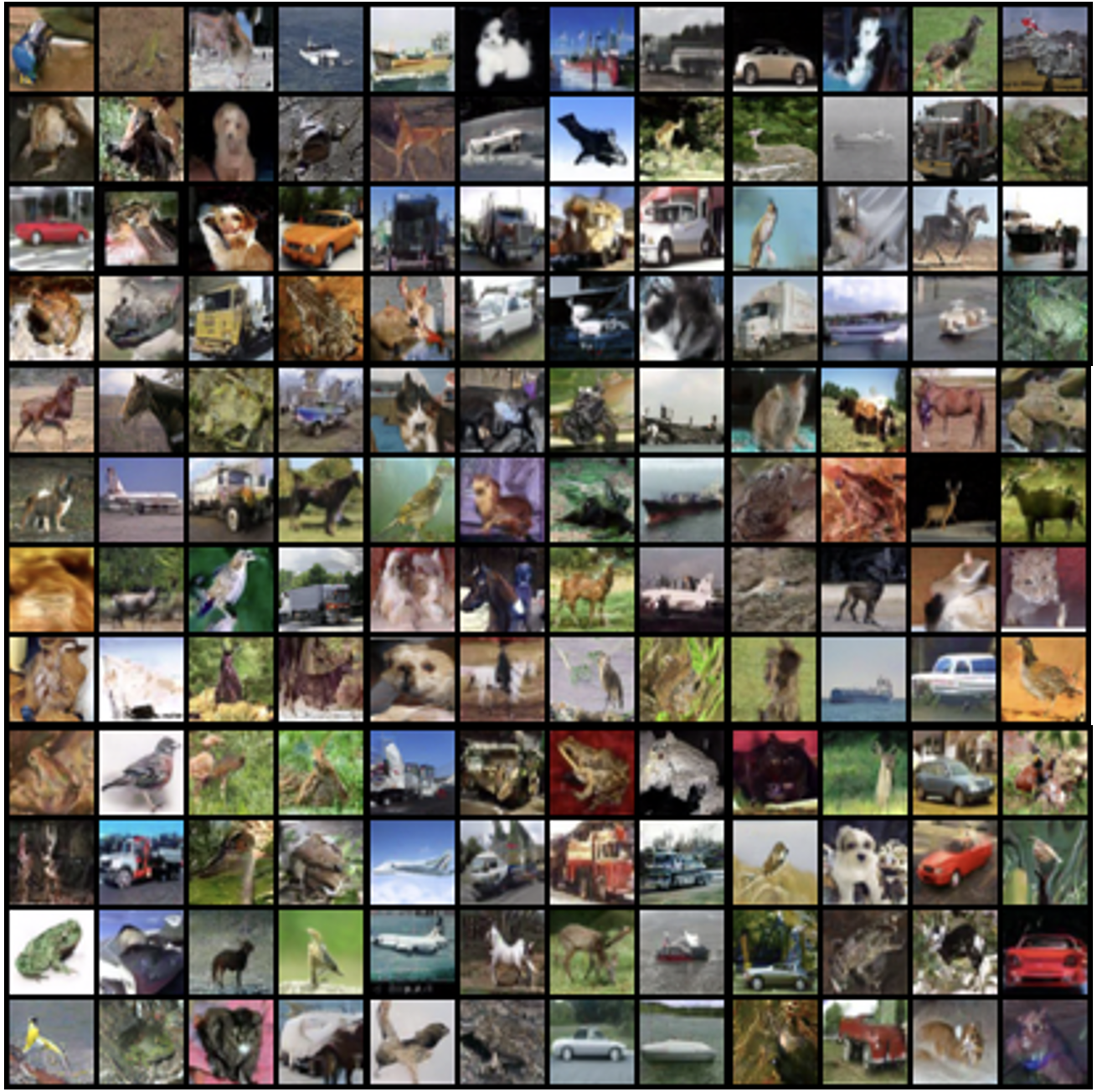}}
    \caption{Additional CIFAR10 results.}
    \label{more_results_for_cifar}
\end{figure}

\begin{table}[t]
\centering
\caption{Comparison of some fast sampling methods on CIFAR-10 in terms of FID-50K, NFE: number of function evaluations.}
\label{tab:fast_sampling_cifar10}

\setlength{\tabcolsep}{10pt}
\renewcommand{\arraystretch}{1}
\begin{tabular}{lcc}
\toprule
\textbf{Method} & \textbf{NFE} & \textbf{FID} \\
\midrule
$\mathcal U$-\textbf{SDENO (ours)} & 1 & 4.89 \\

\midrule
\textbf{Knowledge distillation} \citep{luhman2021knowledge} & 1 & 9.36 \\
\midrule
\textbf{Progressive distillation} \citep{salimans2022progressive} & 1 & 9.12 \\
 & 2 & 4.51 \\
 & 4 & 3.00 \\
\midrule
\textbf{DDIM} \citep{song2020denoising} & 10 & 13.36 \\
 & 20 & 6.84 \\
 & 50 & 4.67 \\
\midrule
\textbf{SN-DDIM} \citep{bao2022estimating} & 10 & 12.19 \\
\midrule
\textbf{DPM-solver} \citep{lu2022dpm} & 10 & 4.70 \\
\midrule
\textbf{DEIS} \citep{zhang2022fast} & 10 & 4.17 \\
\bottomrule
\end{tabular}
\end{table}

\begin{figure}[t]
    \centering
    \scalebox{1}{
    \includegraphics[width=1\linewidth]{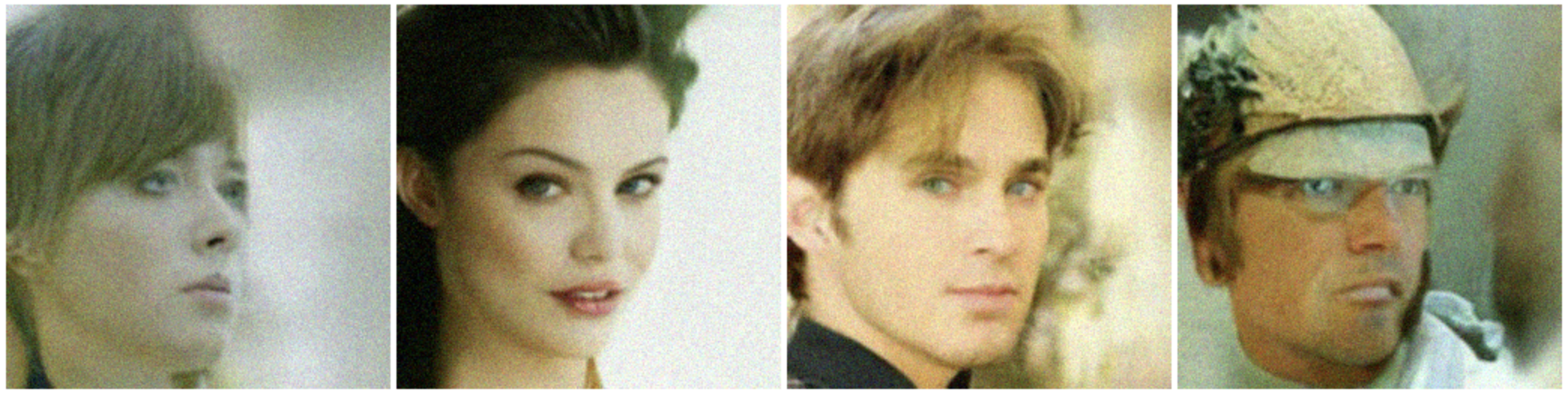}}
    \caption{Additional unconditional generation on Celeb-HQ dataset.}
    \label{sampling_celehq}
\end{figure}

\begin{figure}[t]
    \centering
    \scalebox{1}{
    \includegraphics[width=0.6\linewidth]{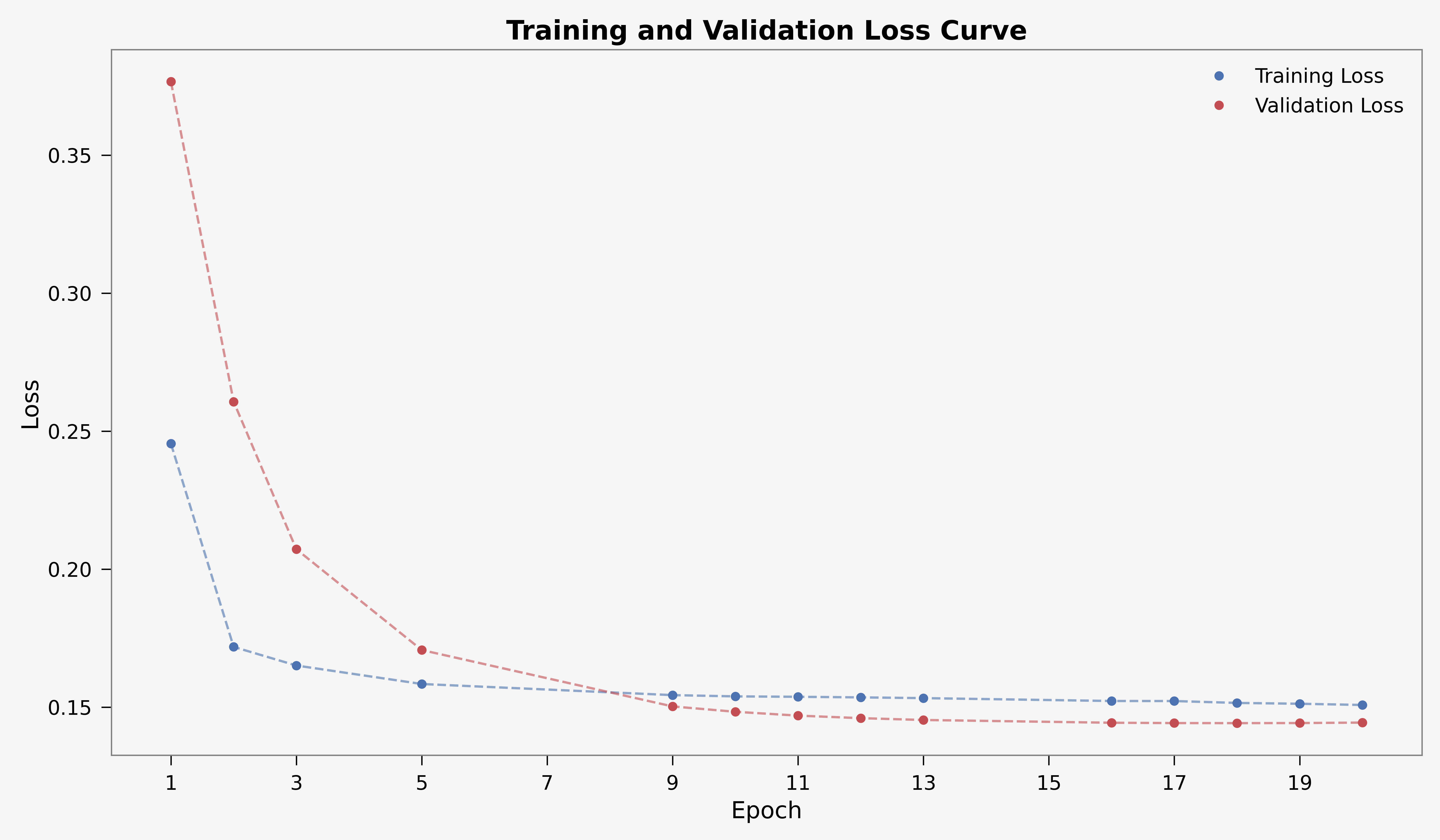}}
    \caption{Training loss for CIFAR10}
    \label{training_loss_cifar}
\end{figure}

\paragraph{$\mathcal U$-SDENO Model Architecture Motivation}
The architecture of our $\mathcal{U}$-SDENO sampler is directly motivated by the structure of the propagator system's governing ODEs, as derived in Equation~\eqref{eq:usdeno-motivation}. We recall it takes the form
\begin{align*}
    \frac{d}{dt} u_\alpha^{(j)}(t) = -\frac{1}{2}\beta(t)u_\alpha^{(j)}(t) - \beta(t)\mathbb{E}[s_\theta^{(j)}(t, X_t) \xi_\alpha] + \sqrt{\beta(t)} \sum_{i=1}^{\infty} e_i(t) \mathbf{1}_{\{\alpha=e_{ji}\}}.
\end{align*}
This equation reveals that the dynamics of each propagator $u_\alpha(t)$ are composed of two parts: (i) a simple, linear term and a deterministic forcing term (acting only on first-order chaos), both of which are analytically straightforward; and (ii) a complex, non-linear coupling term, $-\beta(t)\mathbb{E}[s_\theta(t, X_t) \xi_\alpha]$, which encapsulates the entire difficulty of the problem. Crucially, this challenging term is determined by the score function $s_\theta(t, X_t)$, which is approximated by the \textbf{U-Net} in the pre-trained diffusion model. 
Standard U-Net architectures for diffusion are explicitly conditioned on the time step $t$, typically via a learned time embedding \citep{huang2020unet}. This means the U-Net can serve as a network that outputs the solution of time-reversal SDE implicitly, after feeding it with $t$ and Wick features, i.e., $\xi(\omega)$. 

\paragraph{Pretrained Models and Data Trajectory Sampling}
we utilize pretrained the Denoising Diffusion Probabilistic Models (DDPM) available from the Hugging Face model hub, including the \texttt{google/ddpm-cifar10-32} and \texttt{google/ddpm-celebahq-256} checkpoints.
The core of our data generation process is to sample complete solution trajectories from these models.  For each desired final image, we execute the standard DDPM reverse diffusion process, stepping backward from $t=T$ to $t=0$ for a total of 1000 timesteps. 
At each step $t$, the pre-trained U-Net predicts the noise, from which the posterior mean $\mu_t$ and variance $\sigma_t^2$ are computed. The subsequent state $X_{t-1}$ is then sampled as $X_{t-1} = \mu_t + \sigma_t z_t$, where $z_t \sim \mathcal{N}(0, \mathbf{I})$. Crucially, for each full reverse trajectory, we record the entire sequence of the sampled standard Gaussian noise vectors, $\{z_t\}_{t=T-1}^{0}$. 
This sequence serves as a discrete realization of the driving reverse-time Brownian path, $d\overline{W}_t$. 
The final training dataset for our $\mathcal{U}$-SDENO is then constructed as a set of pairs, where each pair consists of a complete noise trajectory $\{z_t\}$ and its corresponding final generated image $X_0$.

We note that for both CIFAR10 and CelebA-HQ datasets, we sample 1 million $d\overline{W}$ trajectories for our model training, followed by the previous work in \citep{zheng2023fast}.
More importantly, given that each trajectory contains 1000 steps, it is not possible to train a model with all steps included. For this case, we conducted the uniform sampling to reduce the time steps to 32 \citep{zheng2023fast}. 

\paragraph{Unconditional Sampling}
We conduct the unconditional sampling for our model for Celebhq dataset. We first synthesize a novel, random noise trajectory $\{\hat{z}_t\}_{t=T-1}^{0}$. 
Each noise vector $\hat{z}_t$ in this sequence is independently drawn from the standard multivariate Gaussian distribution, $\mathcal{N}(0, \mathbf{I})$, mimicking the statistics of the noise paths used during training. This complete noise path is then pre-processed to compute its corresponding Wick-Hermite features.  These features are subsequently fed into the trained $\mathcal{U}$-SDENO model in a single forward pass. 

\paragraph{Additional Results}
In Figure~\ref{more_results_for_cifar} we show some additional CIFAR10 learning outcomes (without ground truth from the pretrained model), and in Figure~\ref{sampling_celehq} we present unconditional sampling results for Celebhq. In addition, we also show the training loss curve for CIFAR10 in Figure~\ref{training_loss_cifar}. In addition, we provide the FID-50K comparison (in CIFAR10) between our $\mathcal U$-SDENO and several classic diffusion fast sampling baselines in Table~\ref{tab:fast_sampling_cifar10}. One can see that $\mathcal U$-SDENO shows good performance compared to those distillation methods when the evaluation time is identical, i.e., NFE = 1. Although other methods show better results, they usually require higher NFE numbers, which also verifies the effectiveness of leveraging neural operator-based models as a fast diffusion sampler.

\subsection{Experimental Details for TSBM}\label{append:TSBM}

\paragraph{$\mathcal G$-SDENO Model Architecture Motivation}
Similar to the motivation of $\mathcal U$-SDENO, we design the $\mathcal G$-SDENO followed by the fact that graph neural networks \citep{kipf2016semi} can be served as the feature propagator of the brain regions. Different from the $\mathcal U$-SDENO, in GNNs, there is no time encoding inside; accordingly, time embedding is conducted outside the GNN process. Two outpus, i.e., GNN feature and time embedding, will be merged by inner product.

\paragraph{Data Acquisition and Preprocessing}
The brain graph data used in this study is constructed sourced from the Human Connectome Project (HCP) Young Adult dataset \citep{van2013wu}. The graph topology is defined by the Glasser multi-modal parcellation (MMP1.0), which delineates 360 distinct cortical areas that serve as the nodes of our graph \citep{glasser2016multi}.

To determine the spatial embedding for each of the 360 nodes, we processed the HCP's standard 32k fs\_LR cortical surface meshes.
First, for each cerebral hemisphere, we identified the complete set of vertices belonging to each of the 360 parcels using the provided \texttt{.dlabel} file. 
The definitive 3D coordinate for each node (parcel) was then computed as the \textbf{median} of the coordinates of all its constituent vertices. 
This robust aggregation provides a geometrically central and stable position for each brain region. The edges of the graph, representing the underlying functional connectivity, are determined from a sparse inverse covariance matrix estimated from the fMRI signals, following the methodology in \citet{yang2025topological}. 
The features associated with each node are derived from task-based functional MRI (tfMRI) signals corresponding to a working memory task. 
Specifically, our experiment aims to model the transition of brain states from a high-energy "2-back" working memory task (the initial distribution) to a lower-energy "0-back" task (the target distribution), providing a meaningful context for evaluating the topological interpolation capabilities of the model \citep{yang2025topological}.

In terms of the node feature variation metric. We use the well-known Dirichlet energy. Given the node feature vector (fMRI signal) $x \in \mathbb{R}^N$ and the graph Laplacian matrix $L \in \mathbb{R}^{N \times N}$, the total Dirichlet energy of the graph is defined as $x^T L x$. 
Following the implementation in our analysis code, we define the energy $e_i$ for an individual node $i$ as the $i$-th component of the element-wise product between the signal and its Laplacian-transformed counterpart.

\paragraph{TSBM Pretraining}
The pretraining of the Topological Schrödinger Bridge Matching (TSBM) model follows the \textbf{alternating training scheme}. The process involves iteratively training a forward SDE and a backward SDE to progressively match the initial and target distributions.  In our brain signal experiment, this corresponds to mapping the "2-back" task brain states to the "0-back" task states. Each SDE is driven by a GNN with 4 residual blocks, tasked with learning the drift term of the process. We employ a variance-exploding (VE) SDE formulation with a noise parameter of $\sigma=0.5$. 
The model is trained for a total of 2000 iterations. For each iteration in the alternating loop, we first train the forward SDE for one epoch and then train the backward SDE for one epoch. The training is performed using the Adam optimizer with a learning rate of $10^{-3}$ for both networks and a batch size of 64.  A step-wise learning rate scheduler is used, reducing the learning rate by a factor of 0.5 every 500 iterations to ensure stable convergence. Finally, the model is evaluated via the 1-Wasserstein distance, resulting in 9.51. 

\paragraph{$\mathcal G$-SDENO Training and An Interesting Observation}
The training of $\mathcal G$-SDENO is similar to $\mathcal U$-SDENO, except the time embedding is conducted outside the GNN propagation; therefore, we omit it here. One interesting observation we found in our experiment is that our results are generally invariant across the GNNs, such as GCN \citep{kipf2016semi}, and ChebyNet \citep{defferrard2016convolutional}. We initially believe that this is due to the fact that all these GNNs will tend to smooth the node features so that the feature difference will asymptotically become identical, resulted in the over-smoothing problem (OSM). To this end, we test those GNNs that can provably avoid the OSM problem, such as generalised framelet model \citep{han2022generalized}. To our surprise, even when the GCN model is high-frequency dominant, meaning that node features tend to distinguish from each other through the propagation, we will find that both TSBM and our model show similar results compared to those GCN and ChebyNet. That may lead to a deeper exploration on the relationship between the diffusion task (i.e., interpolation) and the denoising models (propagator networks in our case), we leave this task to future works.

\subsection{Experimental Details for Extrapolation}\label{append:extrapolation}

\paragraph{Data Generation}
For both OU and Heston cases, we simulate a total of 500 sample paths ($P=500$), each spanning a time horizon of $T=1.0$ second with 100 discrete time steps ($N=100$). 
The ground truth trajectories are generated using the Euler-Maruyama numerical integration scheme.

\paragraph{Model Architecture and Training}
For simplicity reasons, our design of SDENO for OU and Heston models comes with a relatively simple architecture, in which the propagator ODE is approximated by a MLP. The difference is that for the OU process, which only contains one Brownian motion, we only need one MLP for the propagator, whereas for Heston, we deploy two MLPs. The model is trained in a specific extrapolation setting. 
For each sample path in the training set, which spans a total of $N=100$ time steps, the model is only provided with the initial segment of the noise trajectory, corresponding to the first 75 time steps ($T_{\text{train}}=75$). 
The loss, calculated as the Mean Squared Error (MSE), is also only computed over this initial $[0, 75]$ time interval. 
However, during evaluation, the model is fed the complete noise trajectory and tasked with predicting the solution over the entire $[0, 100]$ interval, thereby testing its ability to generalize to the unseen future segment $[76, 100]$. 
The training for all models is performed for 2000 epochs using the AdamW optimiser with a learning rate of $10^{-3}$.

\paragraph{Trade off Between Training and Extrapolation Quality}
It is assumed that if the model is fed with more training data, then is extrapolation power would go up. To verify this assumption, we retain our model with the training window ranging from 50 to 80, i.e., extrapolation set from 50 to 20. The result is contained in Figure~\ref{fig:extrapolation_trade_off}. One can see that (i) all error goes up with the increase of time in the unseen future; (ii) the model with the highest training samples (i.e., 80) shows the average lowest extrapolation error, whereas in general such error increases with the decrease of the training set. This empirical result is well-supported by foundational principles in statistical learning theory. Specifically, a longer observation window provides the model with more information to better learn the true underlying dynamics of the stochastic process, resulting in a learned operator with lower generalization error.

\begin{figure}
    \centering
    \includegraphics[width=1\linewidth]{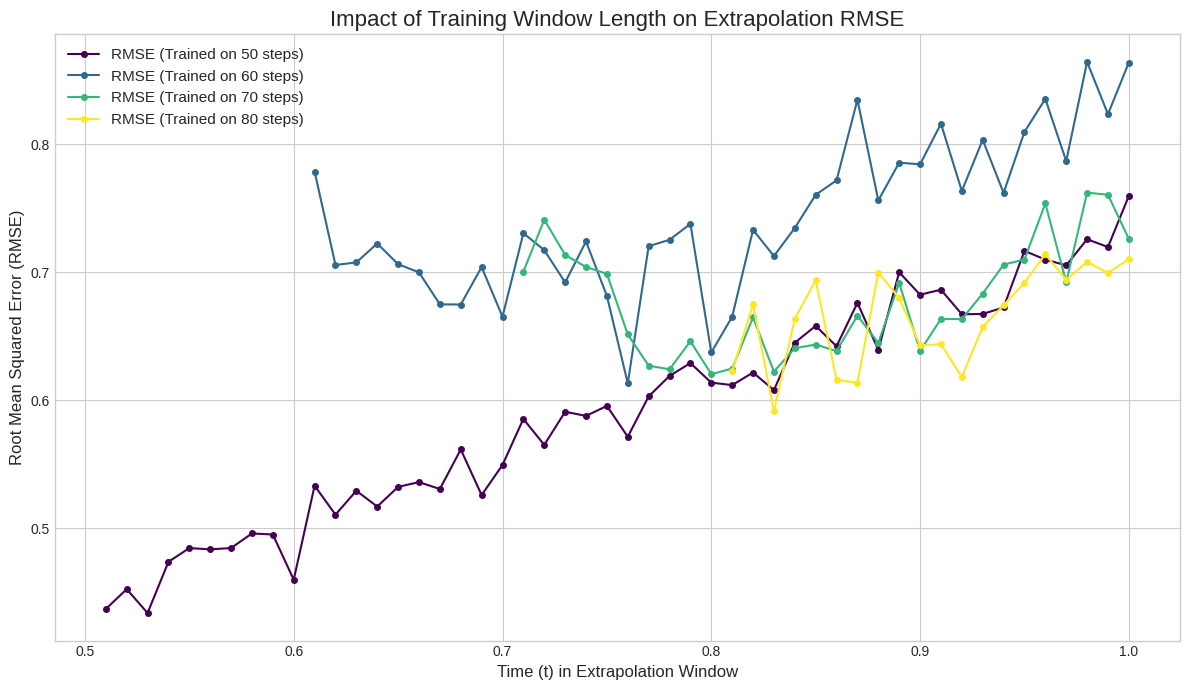}
    \caption{RMSE of SDENO extrapolation with varying training time lengths from 50 to 80.}
    \label{fig:extrapolation_trade_off}
\end{figure}

\subsection{Experimental Details for Parameter Estimation}\label{append:Parameter_estimation}

\paragraph{Generating Meta-Path and Meta-SDENO Training}
The foundation of our parameter estimation experiment is the creation of a diverse ``Meta-path'' dataset and the training of a generalized ``Meta-SDENO'' model on it. This pre-trained model serves as the high-fidelity forward solver within the Ensemble Kalman Filter framework.

\textbf{Meta-Path Generation}: The dataset is generated using the Ornstein-Uhlenbeck (OU) process, defined as $dX_t = \theta (\mu - X_t)dt + \sigma dW_t$. Instead of using a fixed set of parameters, we create a rich ensemble of trajectories by sampling the SDE parameters $\boldsymbol{\gamma} = [\theta, \mu, \sigma]$ from wide prior distributions. Specifically, for each training epoch, we sample $\ell=250$ unique parameter combinations from uniform distributions ($\theta \sim U(0,5)$, $\mu \sim U(-3,3)$, $\sigma \sim U(0,0.5)$). For each sampled parameter triplet, we generate a batch of $P=300$ trajectories using an Euler-Maruyama solver. An important detail is that the same set of underlying Wiener increments $dW_t$ is reused across all parameter sets, which allows the model to isolate the influence of the parameters on the solution path.

\textbf{Meta-SDENO Architecture and Stepwise Training}: The \textit{Meta}-SDENO model shares the same architecture as the one used in the extrapolation experiments. However, its training objective is fundamentally different, designed specifically to prepare it for the parameter estimation task.  The training proceeds as follows: Within each epoch, we first sample a diverse batch of $\ell=250$ parameter triplets $\boldsymbol{\gamma} = [\theta, \mu, \sigma]$. 
For each triplet, we generate $P=300$ full-length OU process trajectories.  The core of the supervision scheme occurs at each time step $t$ along these trajectories. 
The \textit{Meta}-SDENO is provided with the history of the Wiener increments $dW$ up to time $t$ and is tasked with predicting only the next state, $\widehat{X}_{t+1}$. 
The Mean Squared Error (MSE) is then computed between this prediction and the ground truth state $X_{t+1}$.
The total loss is the average of this single-step prediction error across all time steps, all paths, and all sampled parameter sets for that epoch:
\begin{align}
\mathcal{L}_{\text{MSE}} = \frac{1}{\ell \times P \times T} \sum_{l=1}^{\ell} \sum_{p=1}^{P} \sum_{t=0}^{T-1} \|\widehat{X}_{t+1}(p,l) - X_{t+1}(p,l) \|_2^2    
\end{align}
This stepwise training forces the \textit{Meta}-SDENO to learn the intricate relationship between the SDE parameters and the instantaneous evolution of the system, making it an effective and accurate forward model for the iterative update steps of the Ensemble Kalman Filter.

\paragraph{Ensemble Kalman Filter and Parameter Estimation Process}
After pretraining, we freeze the \textit{Meta}-SDENO and treat it as a forward operator for the parameter estimation, which is performed using an Ensemble Kalman Filter (EnKF). 
For this process, we first generate a single ground-truth trajectory, $X_t$, by simulating the OU process with its true underlying parameters ($\theta^\star=4, \mu^\star=1, \sigma^\star=0.05$). 
This trajectory serves as the sequence of observations for the filter.

The EnKF begins with an initial ensemble of $\ell=250$ parameter candidates, $\boldsymbol{\Gamma}_0 \in \mathbb{R}^{3 \times \ell}$, where each column is a random sample drawn from the wide prior distributions ($U(0,5)$ for $\theta$, etc.). 
At each time step $t$, the EnKF performs an update cycle:
\begin{enumerate}
    \item \textbf{Prediction Step}: Each parameter candidate $\boldsymbol{\gamma}_i \in \boldsymbol{\Gamma}_t$ is passed through the frozen \textit{Meta}-SDENO to produce a one-step-ahead prediction of the state, forming the prediction ensemble $\widehat{\mathbf{X}}_t \in \mathbb{R}^{1 \times \ell}$.
    \item \textbf{Update Step}: The filter then updates the parameter ensemble by optimally blending the predictions with the ground-truth observation $X_t$.
\end{enumerate}

The update is governed by the following equations. First, the ensemble means ($\bar{\boldsymbol{\Gamma}}_t, \bar{x}_t$) and anomaly matrices ($\mathbf{A}_z, \mathbf{A}_x$) are computed:
\begin{align}
    \bar{\boldsymbol{\Gamma}}_t \coloneqq \tfrac{1}{\ell}\,\boldsymbol{\Gamma}_t\,\mathbf 1, \quad \bar{x}_t \coloneqq \tfrac{1}{\ell}\,\widehat{\mathbf X}_t\,\mathbf 1
\end{align}
\begin{align}
    \mathbf A_z \coloneqq \frac{\boldsymbol{\Gamma}_t-\bar{\boldsymbol{\Gamma}}_t\,\mathbf 1^\top}{\sqrt{\ell-1}}, \quad \mathbf A_x \coloneqq \frac{\widehat{\mathbf X}_t-\bar{x}_t\,\mathbf 1^\top}{\sqrt{\ell-1}}
\end{align}
The Kalman gain $\mathbf{K}$ is calculated using an observation noise variance $r_t$ (a small, constant hyperparameter as a diagonal matrix):
$$
\mathbf K \;=\; (\mathbf A_z \mathbf A_x^\top)\,\big(\mathbf A_x \mathbf A_x^\top + r_t\big)^{-1}
$$
Finally, letting $Y_t \coloneqq X_t\,\mathbf 1^\top$ be the replicated observation, the parameter ensemble is updated to $\boldsymbol{\Gamma}_{t+1}$:
$$
\boldsymbol{\Gamma}_{t+1} \;=\; \boldsymbol{\Gamma}_t \;+\; \mathbf K\,\big(Y_t - \widehat{\mathbf X}_t\big)
$$
This process is repeated for $T=300$ steps. The final parameter estimate is reported as the mean of the ensemble $\boldsymbol{\Gamma}_{300}$ over the last five time steps. As shown in the main page Fig.~\ref{fig:parameter_estimation}, this coupling of our \textit{Meta}-SDENO with the EnKF successfully recovers the ground-truth parameters.

\begin{figure*}[t]
    \centering
    \scalebox{1}{
    \includegraphics[width=1\linewidth]{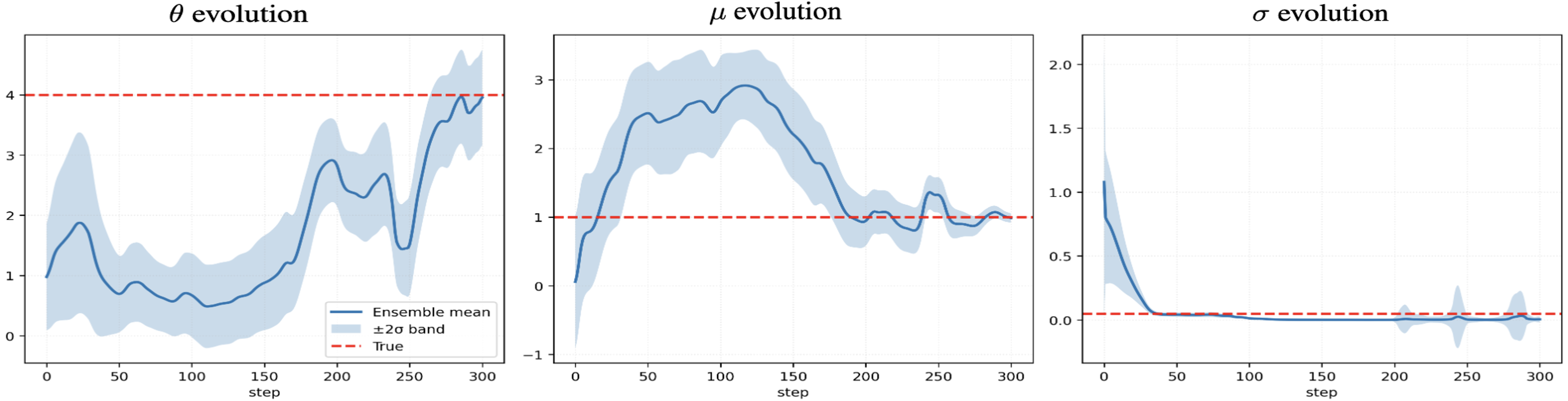}}
    \caption{Parameter estimation of the OU process.}
    \label{fig:parameter_estimation}
\end{figure*}

\paragraph{Theoretical Discussion: Mean-Field View and Ensemble Collapse}
In this section, we are more interested in linking our results to the mean-field theory and how the sampling space in the Kalman filter collapses during the parameter update. 

From a theoretical perspective, our EnKF-based parameter estimation process can be interpreted as a system of $\ell$ interacting particles, $\{\boldsymbol{\gamma}_i(t)\}_{i=1}^\ell$, evolving in the parameter space. The interaction is of a \textbf{mean-field} type, as each particle's update depends not on individual other particles, but on the collective statistics of the entire ensemble. The central object of study is the empirical probability measure of the ensemble:
\begin{align}
\mu_t^\ell = \frac{1}{\ell} \sum_{i=1}^\ell \delta_{\boldsymbol{\gamma}_i(t)}    
\end{align}
where $\delta_{\boldsymbol{\gamma}}$ is the Dirac measure at position $\boldsymbol{\gamma}$. As established in the foundational work on this topic, in the mean-field limit where the number of ensemble members $\ell \to \infty$, this empirical measure converges weakly to a deterministic probability measure $\mu_t$ \citep{law2016deterministic,ertel2025mean}. The evolution of this limiting measure is described by a non-linear Fokker-Planck type partial differential equation, often referred to as a McKean-Vlasov equation \citep{chan1994dynamics}.

A key characteristic of this dynamic, which is observable in practice, is the systematic reduction of the ensemble variance, a phenomenon often referred to as \textbf{ensemble collapse} or ``sample space shrinkage.'' This is a direct consequence of the Kalman update step. Let $P_t^\ell$ be the sample covariance matrix of the parameter ensemble. The update from the prior at time $t$ to the posterior at time $t+1$ (before the next prediction step) follows the exact formula:
\begin{align}
P_{zz}^{(t+1)} = P_{zz}^{(t)} - P_{zx}^{(t)} (P_{xx}^{(t)} + r_t I)^{-1} (P_{zx}^{(t)})^T    
\end{align}
where $P_{zx}^{(t)} = \mathbf{A}_z \mathbf{A}_x^\top$ is the sample cross-covariance between parameters and predictions, and $P_{xx}^{(t)} = \mathbf{A}_x \mathbf{A}_x^\top$ is the sample covariance of the predictions. Since the matrix $(P_{xx}^{(t)} + r_t I)$ is positive definite, the term subtracted is positive semi-definite, which mathematically guarantees that the posterior variance is reduced ($P_{zz}^{(t+1)} \preceq P_{zz}^{(t)}$). This variance reduction reflects the filter gaining information and increasing its confidence about the true parameter values. However, it also highlights a well-known trade-off: with a finite ensemble size, excessively fast variance shrinkage can lead to filter degeneracy. This underscores the importance of choosing a sufficiently large ensemble size, such as $\ell=250$ in our study, to ensure a robust estimation process.

\subsection{WCE on Manifold}\label{append:wce_manifold}
In this section, we show experimental details on our flood prediction, which employs the SDE on the sphere.
Similarly, we pretrain the Riemannian score based generative models \citep{de2022riemannian} on the Flood dataset, and sample 200 Riemannian Brownian increments on the manifold tangent bundle.

\begin{figure}[t]
    \centering
    \includegraphics[width=0.5\linewidth]{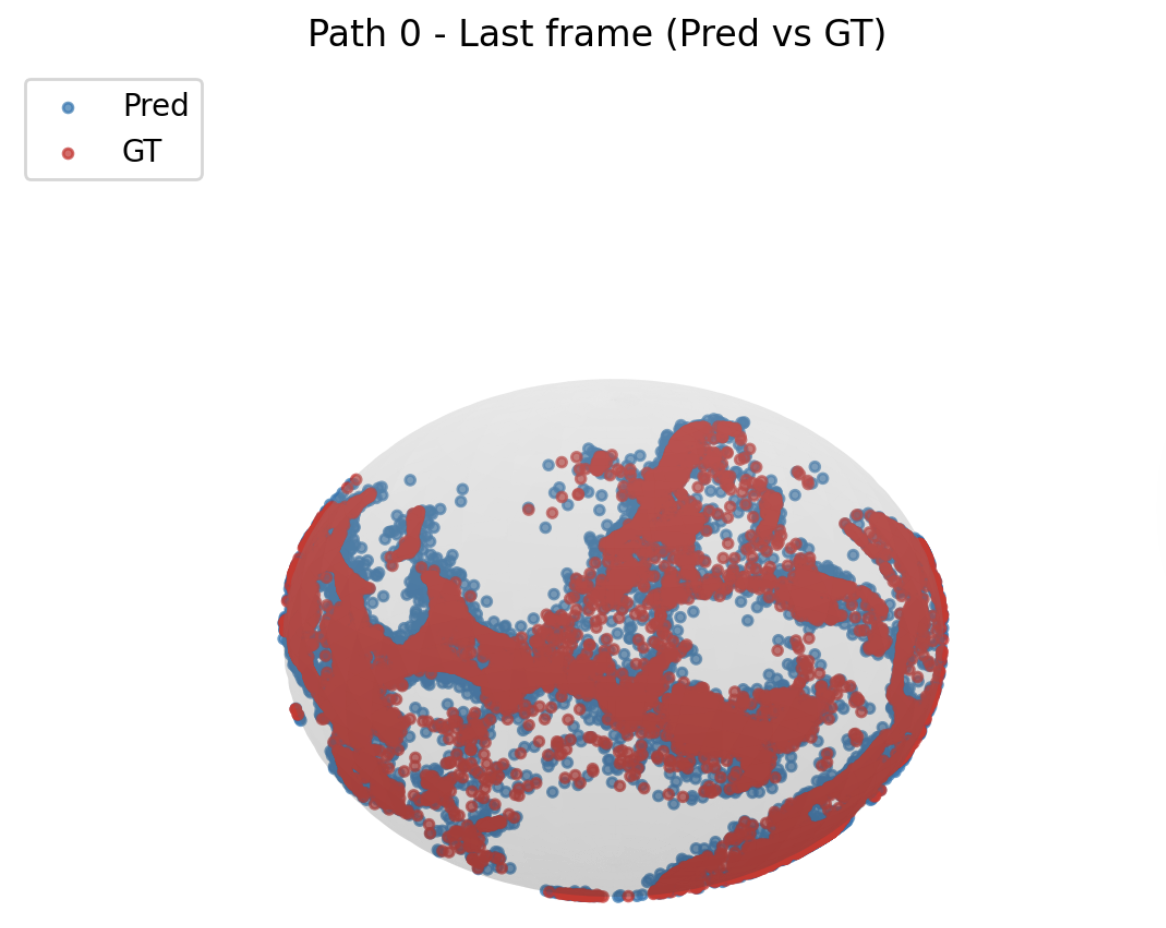}
    \caption{Comparison between ground truth (red) and predicted (blue) flood event.}
    \label{fig:manifold_comparison}
\end{figure}

\paragraph{Model Training}

Our approach to solving the SDE on the sphere $S^2$ involves transforming the problem from the manifold to a 2D Euclidean tangent space, where our established SDENO architecture can be applied. This process consists of data preparation, model training in the tangent space, and evaluation on the manifold.

\textit{Geometric Data Preparation}: The core of our method is to ``flatten'' the problem. For each sample trajectory, we first establish a reference point $\mathbf{x}_{\text{ref}} \in \mathbb S^2$. 
\begin{enumerate}
    \item \textit{Input Processing}: The driving Brownian path on the sphere, which consists of tangent vectors at different points along the trajectory, is mapped to a single, fixed 2D tangent plane at $\mathbf{x}_{\text{ref}}$ using \textbf{parallel transport}. This transforms the complex, path-dependent noise into a standard 2D Wiener process increment path $\Delta W \in \mathbb{R}^{N \times 2}$, which serves as the input to our model.
    \item \textit{Target Processing}: The ground truth solution path on the sphere is projected onto the same tangent plane using the \textbf{logarithmic map}. The resulting 2D vector becomes the prediction target for our model.
\end{enumerate}

We show the comparison between ground truth (RSGM) and our SDENO prediction in Figure~\ref{fig:manifold_comparison}.

\paragraph{Limitations and Future Work.}
While SDENO provides a unified way to encode stochasticity for a broad class of SDEs and SPDEs, our current study has several limitations.

\textbf{Observing Brownian Increments.} First, our formulation assumes access to the driving noise trajectories (or their increments) so that Wiener–chaos features can be constructed explicitly. 
This is natural in synthetic benchmarks and controlled simulation settings, but in many real-world applications only noisy observations of the solution $X_t$ are available and the underlying noise path $W_t$ must be inferred.
Extending SDENO to settings with latent or partially observed noise---for example by combining our chaos-based representation with likelihood-based SDE inference, score-based models, or stochastic filtering techniques---is an important direction for future work.

\textbf{Truncation and Complexity.} Second, the temporal and chaos expansions used in SDENO and $\mathcal F$-SPDENO are truncated to a finite number of modes. 
Although this is analogous to classical spectral methods and we find that modest truncation levels are sufficient for the temporal resolutions and regular SPDEs considered in our experiments, highly irregular dynamics or very long time horizons may require richer bases and higher truncation orders, which would increase memory and computation.
Developing adaptive or data-driven strategies for selecting chaos and temporal modes, or multi-resolution variants of our framework, is a promising avenue to improve scalability.

\textbf{Manifold Constraints.} Third, our manifold SDE experiment on the sphere $\mathbb S^2$ relies on a practical simplification: we linearize the problem by parallel-transporting the entire stochastic process to a single fixed tangent plane and constructing Wick features there.
This ``global flattening'' works well on the sphere, a manifold with constant positive curvature, but its generalization to more complex Riemannian manifolds is non-trivial.
On manifolds with varying curvature or complicated topology, a single global reference frame may not exist, and parallel transport can be path-dependent and introduce distortion, potentially degrading the statistical properties of the transported Brownian motion.
Intrinsic manifold-native constructions, such as Wiener–chaos expansions built directly from the isometry group on specific spaces \citep{kalpinelli2013numerical}, avoid these issues but require sophisticated, manifold-dependent machinery.
Extending SDENO to such intrinsic constructions, or designing robust local/patch-wise linearization schemes on general manifolds, is left for future work.

\textbf{Singular SPDEs.} Finally, our empirical evaluation focuses on non-singular SPDEs (e.g., dynamic $\boldsymbol{\Phi}^4_1$ in 1D and stochastic Navier–Stokes at moderate resolutions) and several downstream SDE-based tasks (image diffusion, brain graphs, and manifold flows).
We do not yet evaluate SDENO on truly singular SPDEs that require renormalisation, where regularity-structure-based neural operators \citep{hu2022neural,gong2023deep} are particularly well suited.
A natural next step is to combine our chaos-based noise representation with such advanced architectures and to benchmark SDENO on more challenging singular SPDE datasets, for example those in recent SPDE benchmarks.


\end{document}